\documentclass{article}
\usepackage{fullpage}
\usepackage{makecell}
\usepackage{multirow}
\usepackage[dvipsnames]{xcolor}

\usepackage{amsmath,amssymb,amsfonts,amsthm}
\usepackage{graphicx}
\usepackage{natbib}
\usepackage{algorithm}
\usepackage{algorithmic}
\usepackage{hyperref}
\usepackage{fullpage}
\usepackage{todonotes}
\usepackage{enumitem}

\usepackage[font=small, labelfont=bf]{caption}
\usepackage[font=small]{subcaption}

\newcommand{\R}{\mathbb{R}}
\newcommand{\E}{\mathbb{E}}
\newcommand{\mcD}{\mathcal{D}}
\newcommand{\mcN}{\mathcal{N}}
\newcommand{\mcL}{\mathcal{L}}

\newcommand{\mcF}{\mathcal{F}}
\newcommand{\mcG}{\mathcal{G}}
\newcommand{\mcX}{\mathcal{X}}
\newcommand{\mcY}{\mathcal{Y}}
\newcommand{\hw}{\hat{w}}
\newcommand{\dd}{\mathrm{d}}

\newcommand{\smax}{\operatorname{softmax}}

\DeclareMathOperator*{\argmax}{arg\,max}
\DeclareMathOperator*{\argmin}{arg\,min}
\newcommand{\var}{\operatorname{Var}}

\newcommand{\lv}{l_v}
\newcommand{\ws}{w^*}
\newcommand{\dx}{{d_x}}
\newcommand{\dw}{{d}}
\newcommand{\dhp}{{h}}

\newtheorem{assumption}{Assumption}
\newtheorem{app_assumption}{Assumption}[section]
\newtheorem{theorem}{Theorem}[section]
\newtheorem{lemma}[theorem]{Lemma}
\newtheorem{proposition}[theorem]{Proposition} 
\newtheorem{remark}[theorem]{Remark}

\newtheorem{definition}[theorem]{Definition}

\definecolor{commentblue}{RGB}{70, 130, 180}
\definecolor{commentgreen}{RGB}{34, 139, 34}
\definecolor{commentpurple}{RGB}{138, 43, 226}
\definecolor{commentteal}{RGB}{0, 128, 128}

\title{Generalization Guarantees on Data-Driven Tuning of Gradient Descent with Langevin Updates}
\author{
Saumya Goyal\thanks{Machine Learning Department, Carnegie Mellon University},
Rohith Rongali\thanks{Electrical and Computer Engineering Department, Carnegie Mellon University.\\
Correspondance to Saumya Goyal, $<$saumyag@andrew.cmu.edu$>$},
Ritabrata Ray\footnotemark[1],
Barnabás Póczos\footnotemark[1]
}
\date{}

\begin{document}

\maketitle

\begin{abstract}
    We study learning to learn through the lens of hyperparameter tuning.
    We propose the Langevin Gradient Descent Algorithm (LGD), which approximates the mean of the posterior distribution defined by the loss function and regularizer of a regression task with convex objective. For classification tasks, the LGD algorithm estimates the posterior probabilities of each class on the test set. We prove the existence of an optimal hyperparameter configuration for which the LGD algorithm achieves the Bayes' optimal solution for squared loss on regression tasks, and for which LGD closely approximates the posterior probabilities for well-specified classification tasks. Subsequently, we study generalization guarantees on meta learning optimal hyperparameters for the LGD algorithm from a given set of tasks in the data-driven setting. For a number of parameters $\dw$ and hyperparameter dimension $\dhp$, we show a pseudo-dimension bound of $O(\dw\dhp)$, up to logarithmic terms under mild assumptions on LGD. This matches the dependence of the bounds on number of parameters obtained in prior work for linear regression using the elastic net,
    which only allows for $\dhp=2$ hyperparameters, and extends their bounds to regression on convex loss. Compared to bounds on regularized logistic regression that allow for only $\dhp=1$ hyperparameter, our bounds improve greatly on the dependence on samples per task at the cost of worse dependence on the number of parameters by accounting for hardware-aware procedures. Finally, we show empirical evidence of the success of LGD and the meta learning procedure for few-shot learning on linear and logistic regression using synthetically created datasets.
\end{abstract}

\section{Introduction}
The success of modern machine learning models increasingly depends on the careful selection of hyperparameters, a task that remains one of the most challenging and resource intensive aspects of developing high-performance systems. Unlike model parameters, which are learned directly from data through optimization, hyperparameters must be specified prior to training and fundamentally shape the learning dynamics, model capacity, and generalization behavior. As models have grown in scale and complexity, from deep neural networks with billions of parameters to foundation models trained on vast datasets, the hyperparameter search space has expanded dramatically, making effective tuning both more critical and more computationally prohibitive. Traditional approaches to hyperparameter optimization have evolved from manual tuning and grid search \citep{JMLR:v13:bergstra12a} to more sophisticated methods like Bayesian optimization \citep{snoek2012practicalbayesianoptimizationmachine, NIPS2011_86e8f7ab} and multi-armed bandit algorithms \citep{li2018hyperbandnovelbanditbasedapproach}. 
While such strategies can discover well-performing hyperparameter combinations, they require extensive computational resources, often necessitating hundreds or thousands of complete training runs. 
Despite substantial progress in automated hyperparameter optimization, current methodologies exhibit a fundamental gap: they provide limited insight into the underlying mechanisms by which hyperparameters influence learning dynamics and fail to leverage the rich information available during training. Further, such heuristics often offer no guarantees on optimality, or require strong data-dependent assumptions \citep{enet_2022}. 

In this work, we approach hyperparameter tuning in a principled way by first investigating how hyperparameters affect learning dynamics. We propose the Langevin Gradient Descent Algorithm (LGD), which performs gradient descent with Langevin updates. We treat the regularizer and the learning rate as our hyperparameters, and show the existence of optimal hyperparameters that correspond to a prior distribution over tasks. Given optimal hyperparameters, we show that Langevin Gradient Descent samples from the posterior, allowing our estimator to achieve Bayes' optimality, something not guaranteed through a plan gradient descent. Thus, in our setting where tasks are sampled from an underlying prior distribution, learning to learn reduces to hyperparameter tuning of the hyperparameters of LGD. This contradicts with popular learning to learn approaches \citep{finn2017modelagnosticmetalearningfastadaptation,nichol2018firstordermetalearningalgorithms, franceschi2018bilevelprogramminghyperparameteroptimization, rajeswaran2019metalearningimplicitgradients, liu2021investigatingbileveloptimizationlearning}, where optimality of the learning algorithm is not guaranteed even with an optimal choice of the hyperparameters. 

We then study generalization guarantees on learning the optimal regularizer and learning rate from tasks for both regression and classification problems. For our empirical evaluation, we learn hyperparameters through gradient descent over the validation loss on observed tasks. This is similar to gradient-based hyperparameter tuning approaches of prior work \citep{hyperparam_grad_dougal15, hypergrad_baydin18, engstrom2025optimizingmltrainingmetagradient}.

Formally, we consider a regression problem with inputs $X\in \R^{n\times \dx}$ and outputs $y\in \R^n$, where $n$ is the number of training
samples and $\dx$ is the feature space dimension. For a function $f(x;w):\R^\dx\times \R^\dw \rightarrow \R$, 
and test/validation inputs $X_v\in \R^{n_v\times \dx}$, we want to find an estimator $\hat{f}_v(.,.,.):\R^{n_v\times \dx}\times\R^{n\times \dx}\times \R^n\rightarrow\R^{n_v} $ that takes as input $X_v,X,y$ and outputs an estimate of the validation/test output, $y_v\in\R^{n_v}$, 
that minimizes the validation error, $l(y_v,\hat{f}_v(X_v,X,y))$ for some loss function $l$. Similarly, we consider a classification problem with inputs $X\in \R^{n\times \dx}$ and outputs $y\in [m]^{n}$, where $m$ is the number of classes. We consider functions with signature $f(x;w):\R^\dx\times \R^\dw \rightarrow \R^m$, where we treat the outputs as ``weights" on different classes. For example, it is common in literature to treat the outputs as logits, or proportional to log-probabilities of classes. For test/validation inputs $X_v\in \R^{n_v\times \dx}$, we want to find an estimator $\hat{f}_v(.,.,.):\R^{n_v\times \dx}\times\R^{n\times \dx}\times \R^n\rightarrow\R^{n_v\times m} $ that takes as input $X_v,X,y$ and outputs an estimate of the true weights of classes for $y_v\in[m]^{n_v}$.
We want to find an estimator $\hat{f}_v$ that minimizes the validation error, $l(y_v,\hat{f}_v(X_v,X,y))$ for an appropriate loss function $l$. For mathematical convenience we will directly consider $\hat{F}_v \in \R^{n_v\times m}$, which are the predictions of probabilities of different classes of $y_v\in[m]^{n_v}$, instead of the weight vector predictions $\hat{f}_v$, such that the corresponding weight vector minimizes the validation loss.

Directly optimizing to minimize the validation error often leads to overfitting, which motivates the use of regularizers. Regularizers allow for the possibility of finding a good balance between fitting the training data and generalizing to unseen data. We incorporate regularizers in our objective such that the gradients of regularizers are of the form $r(w;\theta)\in\R^{\dw}$ where $\theta\in \Theta$ are hyperparameters. 
We restrict our study to the case when $l(y,f(X;w))$ is convex in $w$ and the regularizer is strongly convex in $w$ for all hyperparameters. Strong convexity of the objective is a common assumption in optimization that leads to tight bounds \citep{nesterov2018lectures}.

We propose a Langevin diffusion-inspired gradient descent on $w$ that follows: 
\begin{align*}
    w_{(k+1)} = w_{(k)} - \eta (\nabla l(y,f(X;w)) + r(w;\theta)) + \sqrt{2\eta}\xi_k,
\end{align*}
where $\xi_k$ is standard Gaussian noise. After the diffusion stage, we perform a consolidation step that 
averages the predictions of final few iterates of the diffusion stage. We begin with a brief background into statistical generalization guarantees and Langevin algorithms in Section \ref{sec:bckg}. 
We then formalize our setting in Section \ref{sec:setting}. We introduce the proposed Algorithms \ref{alg:lgd}, \ref{alg:lgd_class} in Section \ref{sec:lgd} and show that the proposed strategy converges close to the Bayes optimal solution for appropriately chosen $\theta$ and $\eta$ for specific commonly used loss functions. 

In Section \ref{sec:guarantees}, we discuss finding the optimal values of $\eta, \theta$. We assume access to a set of tasks sampled similarly, and propose using $\eta, \theta$ that minimize the average validation loss across tasks. This makes our setting similar to multi-task learning since we use previously seen tasks to develop a procedure for future, unseen tasks. We compute the pseudo-dimension of the function class corresponding to validation losses for $\eta,\theta$ in Theorem \ref{thm:pdim_comp} for regression and Theorem \ref{thm:pdim_class} for classification, and find a pseudo-dimension of $O(\dw\dhp)$, up to logarithmic terms for either case. For the case of linear regression with Elastic Net, this matches the worst-case bounds developed in \citet{balcan_23} up to logarithmic terms. While the bounds of \citet{balcan_23} degrade sharply with the number of training samples for logistic regression, we address this limitation by accounting for hardware approximations of complicated functions.
Interestingly, we show a dependence on the number of parameters $\dw$, instead of the feature space dimension $\dx$, and we extend their analysis to any regression problem with a convex objective. We show generalization guarantees on this procedure in Theorem \ref{thm:gen_guarantee}. We further show convergence of the hyperparameter tuning algorithm to the Bayes' optimal solution in Theorem \ref{thm:erm_bayes_conv} for regression and Theorem \ref{thm:erm_bayes_class} for classification. Finally, we present empirical evidence of the success of the proposed Langevin Gradient Descent algorithm and the meta learning procedure for few-shot learning on linear and logistic regression on a few synthetic datasets in Section \ref{sec:expts}, where we observe that the meta-learned algorithm performs on par with the optimal algorithm with as little as $1/10$th the number of iterations.

\subsection{Summary of Contributions and Technical Novelty}
Our contributions are summarized as follows:
\begin{enumerate}[leftmargin=*,topsep=0pt,partopsep=1ex,parsep=1ex]\itemsep=-4pt
    \item \textbf{Proposed Algorithm for Regression.} We propose the Langevin Gradient Descent (LGD) Algorithm for regression in Algorithm \ref{alg:lgd}. The algorithm makes Langevin-diffusion inspired updates to estimate the mean prediction of a posterior distribution corresponding to the loss function and a regularizer. As opposed to prior work \citep{enet_2022, balcan_23, goyal_25} that restrict their attention to estimators of the parameters $w$, we propose a consolidation procedure that directly outputs an estimate of $y_v$ for some test/validation input $X_v$ based on the training data. We illustrate the significance of our consolidation step in Theorem \ref{thm:lgd_conv}
    which shows the Bayes' optimality of our procedure for the squared loss. We note further, that the proof of the theorem does not assume that training and test datasets are sampled i.i.d., meaning that the Bayes' optimality of the procedure holds even under distribution shift.
    \item \textbf{Proposed Algorithm for Classification.} We propose the variant of LGD for classification in Algorithm \ref{alg:lgd_class}. The algorithm makes Langevin-diffusion inspired updates followed by a consolidation procedure to estimate the posterior probabilities of different output classes. We demonstrate the closeness of our estimates to the true posterior probabilities in Theorem \ref{thm:lgd_conv_class}. Similar to above, the proof of the theorem does not assume that training and test datasets are sampled i.i.d., meaning that the Bayes' optimality of the procedure holds even under distribution shift.
    \item \textbf{Pseudo-Dimension bounds on Learning Hyperparameters for Regression.} We utilize the Goldberg-Jerrum framework for computing the pseudo-dimension of a function class as introduced in \citet{gj_95} to compute the pseudo-dimension of the function class corresponding to validation loss values in terms of the hyperparameters $\eta,\theta$ for regression problems in Theorem \ref{thm:pdim_comp}. While \citet{balcan_23} compute pseudo-dimension for linear regression with elastic net, we extend their results to any regression with a convex objective and any parametrized family of regularizers. Further, while \citet{balcan_23, goyal_25} assume perfect optimization of the elastic net, we study optimization using our proposed Algorithm \ref{alg:lgd}, and hence account for errors in optimization. Since we study the optimization procedure directly, we are able to give guarantees on any loss function that satisfy our assumptions, and do not rely on closed form solutions. Despite these generalizations, we obtain a pseudo-dimension bound that is linear (up to logarithmic terms) in the product of the number of hyperparameters $h$ and the mixing time of the algorithm, which is typically linear in the number of parameters $d$. We thus obtain a pseudo-dimension bound of $O(dh\log(d))$, which recovers the pseudo-dimension bounds of \citet{balcan_23} for the case of linear regression on elastic net up to logarithmic terms. 
    \item \textbf{Pseudo-Dimension bounds on Learning Hyperparameters for Classification.} Similar to our results on regression, we compute the pseudo-dimension of the function class corresponding to validation loss values in terms of the hyperparameters $\eta,\theta$ for classification problems in Theorem \ref{thm:pdim_class}. For our algorithm which is $\epsilon$-approximate with probability $\ge 1-\delta$ for some $\delta>0$ and a total of $(n+n_v)$ training and validation samples per task, we obtain a pseudo-dimension bound of the form $O\left(dh\log(d(n+n_v))\log(n_v)\left(\frac{1}{\epsilon^2} \log(1/\epsilon) + \frac{1}{\epsilon^4}\log(2/\delta)\right)\right)$. We compare our results with results from \citet{balcan_23}, where the authors compute pseudo-dimension bounds for logistic regression using either a $l1$ or $l2$ penalty, thus allowing for only $h=1$ hyperparameters. We extend their results to any classification problem with a convex objective and any parametrized family of regularizers. Further, \citet{balcan_23} consider an $\epsilon$-approximate algorithm that does not generalize beyond logistic regression. Despite these restrictions, \citet{balcan_23} suggest a pseudo-dimension bound of $O((n+n_v)^2 + \log(1/\epsilon))$, which is significantly worse from our bounds in terms of the dependence on number of samples. We however incur a degradation in terms of the dependence on $d$ and $\epsilon$, which we attribute to our choice of a sampling-based algorithm as opposed to an optimization-based algorithm. We further show that the bounds for regularized logistic regression can be improved to $O(\log(n_v/\epsilon))$ in Appendix \ref{sec:rlr_pdim}. Another notable difference is that our bounds analyze hardware-aware procedures that account for hardware approximations of the softmax function implemented as piecewise polynomials.
    \item \textbf{Generalization of Meta-Learned Algorithm to Optimal Algorithms.} We show generalization guarantees on estimating the optimal $\eta, \theta$ from previously seen tasks sampled similar to the target task in Theorem \ref{thm:gen_guarantee}. We then use these results to obtain concentration bounds of the ERM hyperparameters to the Bayes' optimal solution. We show that for well-specified regression problems with squared loss having log-strongly-concave priors, $T=O_\delta\left(\frac{dh\log(d)}{\epsilon^6}\right)$ tasks are sufficient to learn hyperparameters that achieve a generalization gap of less than $\epsilon$ from the Bayes' optimal solution with probability $\ge 1-\delta$ in Theorem \ref{thm:erm_bayes_conv}, where $O_\delta$ suppresses the dependence on $\delta$. 
    While typical generalization error bounds (for example, \citep{balcan_23, goyal_25}) only suffer from an $\epsilon$-dependence of $O(1/\epsilon^2)$, we note that such work assumes perfect optimization of the loss functions, and fail to achieve any concentration to the Bayes' optimal if the prior density is not Gaussian, Laplace or a product of the two.
    We show similar bounds for well-specified classification problems in Theorem \ref{thm:erm_bayes_class}. This further extends the treatment of prior work \citep{balcan_23, goyal_25} that fail to show convergence to Bayes' optimal solutions outside of Gaussian and Laplace priors while also assuming perfect optimization.
    \item \textbf{Empirical Evidence.} Finally, we present empirical evidence on the success of the proposed Langevin Gradient Descent (LGD) algorithm and meta learning hyperparameters for this algorithm in Section \ref{sec:expts}. We observe that LGD greatly out-performs plain gradient descent with no regularization on few-shot learning for linear regression, and outperforms an oracle gradient descent algorithm with perfect regularization on an atypical prior with mean $\neq$ mode. Compared to our results on linear regression, our results on classification show an even wider range of sample sizes for which few-shot learning using our techniques is advantageous due to the increased difficulty of the tasks. For the atypical prior, we additionally observe that the meta-learned algorithm beats the theoretical optimal algorithm with the same number of iterations, and performs on par with the theoretically best algorithm with $10\times$ the number of iterations for linear regression. For logistic regression, the meta-learned algorithm beats both the theoretical optimal algorithm with the same and with $10\times$ the number of iterations, again due to the increased difficulty of classification.
\end{enumerate}

\subsection{Related Work}\label{sec:rw}
Hyperparameter optimization is a well-studied problem in machine learning literature. Well-studied and widely adopted techniques include random search~\cite{JMLR:v13:bergstra12a}, Bayesian optimization~\cite{snoek2012practicalbayesianoptimizationmachine,NIPS2011_86e8f7ab}, population-based training~\cite{jaderberg2017populationbasedtrainingneural}, and bandit algorithms~\cite{li2018hyperbandnovelbanditbasedapproach}. Another commonly studied approach for hyperparameter optimization and meta learning is to formulate them as bi-level optimization problems~\citep{finn2017modelagnosticmetalearningfastadaptation,nichol2018firstordermetalearningalgorithms, franceschi2018bilevelprogramminghyperparameteroptimization, rajeswaran2019metalearningimplicitgradients, liu2021investigatingbileveloptimizationlearning}. 

Our approach falls under the \textit{data-driven algorithm design} framework of \citet{balcan2020data} where the authors suggest using data for provable tuning of hyperparameters in algorithms. This approach has been successful for designing various learning algorithms such as in \citet{bartlett_linalg_22, enet_2022, balcan_23, goyal_25}. \citet{enet_2022, balcan_23} give pseudo-dimension based generalization guarantees on tuning the elastic net. \citet{goyal_25} give a data-dependent analysis using Rademacher complexities that improves upon the analysis of \citet{balcan_23} for ``nice" distributions, such as when samples within each task are i.i.d.\ draws from broad well-studied classes of distributions including sub-Gaussians, and gives bounds that get tighter with increasing number of samples per task. In this work, we generalize the setting of the elastic net and provide guarantees on learning descent algorithms instead of just regularization coefficients. 
We additionally show that this provides the optimal regularization approach for any log-strongly-concave prior (as opposed to just Gaussian and Laplace). 
The consolidation step of our algorithm, involving averaging the predictions of the final few iterates, can be thought of as similar to Polyak averaging \citep{polyak_92, grattafiori2024llama3herdmodels}, which is an averaging of the final few iterates of Stochastic Gradient Descent. 
On the statistical front, our approach falls under the $g$-modelling framework for Empirical Bayes methods, a connection explored in \citet{goyal_25}.

While we assume access to the ERM solution for hyperparameters in this paper,
finding hyperparameters through gradient descent is a popular approach in empirical literature. \citet{gd_by_gd_16} suggest a similar approach to ours where they model the update step of gradient descent using an LSTM, and propose learning the parameters of the LSTM by meta learning but do not give any theoretical guarantees. This is different from our approach where we suggest Langevin-style updates that use the provided loss function with a regularizer.
The popular MAML algorithm \citep{finn2017modelagnosticmetalearningfastadaptation} simplified their approach by only optimizing for initial neural network parameters that optimize well on downstream tasks. Other approaches such as the hypergradient descent algorithm \citep{hyperparam_grad_dougal15, hypergrad_baydin18} and the recently proposed metagradient descent algorithm \citep{engstrom2025optimizingmltrainingmetagradient} also suggest optimizing over hyperparameters by approximating the gradients over the hyperparameters. DataRater \citep{datarater_calian25} suggest a similar approach for assigning optimal weights to training samples using meta learning. On the theory side,~\cite{pmlr-v97-balcan19a, NEURIPS2019_f4aa0dd9} have shown provable guarantees for MAML-like meta learning algorithms where the initialization of the weights for each task is meta-learned. Here, a geometric notion of task similarity is assumed (the optimal weights for each task are geometrically \textit{close}), as opposed to a prior-based assumption as in our work, and techniques from online convex optimization are used to obtain regret upper bounds. Learning hyperparameters for optimization also fall under the framework of learning to optimize \citep{l20_chen22}. Similar to our algorithms, Langevin dynamics are generally used a central tool for scalable posterior sampling in Bayesian inference. We explore this connection in further detail in Appendix \ref{sec:add_rel_work}.

\section{Background}\label{sec:bckg}
We begin our discussion with relevant background on statistical concentration inequalities and generalization error bounds using pseudo-dimension. We discuss the Goldberg-Jerrum framework that is the primary tool used in this paper to calculate the pseudo-dimension of the validation loss function class. Finally, we present a brief discussion on sampling using the Unadjusted Langevin Algorithm.

\subsection{Concentration Inequalities and Uniform Convergence}
We begin with Hoeffding's inequality, which shows that the mean of random variables concentrates exponentially fast around their  expected value.
\begin{theorem}[Hoeffding's inequality~\citep{all_of_stats}]\label{thm:Hoeffding}
For random numbers $Z_1, \ldots, Z_N$ sampled i.i.d., denote $\overline{Z}_N = \frac{\sum Z_i}{N}$ and $\E[Z_i] = \mu$. The following hold given that $Z_i\in [0,C]$:
\begin{enumerate}
    \item \begingroup\abovedisplayskip=0pt\belowdisplayskip=0pt\begin{align*}
        \Pr(|\overline{Z}_N - \mu| \geq t) \leq 2\exp{\left(\frac{-2Nt^2}{C^2}\right)}
    \end{align*}\endgroup
    \item \begingroup\abovedisplayskip=0pt\belowdisplayskip=0pt\begin{align*}
        \Pr(\overline{Z}_N - \mu \geq t) \leq \exp{\left(\frac{-2Nt^2}{C^2}\right)}
    \end{align*}\endgroup
    \item For any $\delta>0$, with probability $\geq 1-\delta$,\begingroup\abovedisplayskip=0pt\belowdisplayskip=0pt
    \begin{align*}
        \overline{Z}_N \leq \mu + C\sqrt{\frac{\ln{1/\delta}}{2N}}
    \end{align*}\endgroup
\end{enumerate}
\end{theorem}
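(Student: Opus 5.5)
The plan is the classical Chernoff-bound argument: exponentiate, apply Markov's inequality, use independence to factor the moment generating function, and control each factor with Hoeffding's lemma. Throughout, set $S_N = \sum_{i=1}^N (Z_i - \mu)$, so that $\overline{Z}_N - \mu = S_N/N$.

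\textbf{Hoeffding's lemma.} First I would show that a random variable $X \in [a,b]$ with $\E[X]=0$ satisfies $\E[e^{\lambda X}] \le \exp(\lambda^2 (b-a)^2/8)$ for every $\lambda \in \R$. By convexity of $x \mapsto e^{\lambda x}$ on $[a,b]$ we have
\begin{align*}
e^{\lambda x} \le \frac{b-x}{b-a}e^{\lambda a} + \frac{x-a}{b-a}e^{\lambda b},
\end{align*}
and taking expectations with $\E[X]=0$ gives $\E[e^{\lambda X}] \le e^{L(u)}$. Here $u=\lambda(b-a)$, $p=-a/(b-a)\in[0,1]$, and
\begin{align*}
L(u) = -pu + \log\bigl(1-p+pe^{u}\bigr).
\end{align*}
One checks that $L(0)=L'(0)=0$. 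Writing $q = pe^u/(1-p+pe^u)$, we get $L''(u)=q(1-q)\le 1/4$. Taylor's theorem then yields $L(u)\le u^2/8$. I expect this second-derivative bound to be the main obstacle; everything after it is routine.

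\textbf{Part 2.} Each $Z_i-\mu$ is mean zero and lies in an interval of length $C$, since $Z_i\in[0,C]$. For $\lambda>0$, Markov's inequality and independence give
\begin{align*}
\Pr(S_N \ge Nt) \le e^{-\lambda N t}\prod_{i=1}^N \E\bigl[e^{\lambda(Z_i-\mu)}\bigr] \le \exp\bigl(-\lambda N t + N\lambda^2 C^2/8\bigr).
\end{align*}
Choosing $\lambda = 4t/C^2$ gives the bound $\exp(-2Nt^2/C^2)$, which is part 2.

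\textbf{Part 1.} The variables $-Z_i$ also lie in an interval of length $C$, so the same argument applied to them bounds the lower tail $\Pr(\overline{Z}_N-\mu\le -t)$ by the same quantity. A union bound over the two tails gives the factor $2$.

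\textbf{Part 3.} Set $\delta = \exp(-2Nt^2/C^2)$ in part 2 and solve for $t$, which gives $t = C\sqrt{\ln(1/\delta)/(2N)}$. With probability at least $1-\delta$ the complementary event holds, namely $\overline{Z}_N < \mu + t$, and this is the claimed bound. For $\delta\ge 1$ the statement is trivial, so we may assume $\delta\in(0,1)$, where this choice of $t$ is well defined.
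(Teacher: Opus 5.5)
Your proof is correct and matches the standard one. The paper gives no proof of its own and only cites Wasserman's \emph{All of Statistics}, whose proof is exactly this Chernoff-bound argument, with Hoeffding's lemma proved through the convexity bound, $L(0)=L'(0)=0$ and $L''(u)\le 1/4$. The one thing you should state explicitly is that parts 1 and 2 require $t>0$: you need it so that $\lambda=4t/C^2>0$ makes the Markov step valid, and for $t<0$ the stated bounds are false.
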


We present a version of the Bernstein's inequality that shows tighter concentration bounds than the Hoeffding's inequality if the variance of i.i.d. observations is known.

\begin{theorem}[Bernstein's inequality~\citep{ML_theory_Shwartz_14}]\label{thm:Bernstein}
For random numbers $Z_1, \ldots, Z_N$ sampled i.i.d., denote $\overline{Z}_N = \frac{\sum Z_i}{N}$, $\E[Z_i] = \mu$ and $\var(Z_i) = V$. Given that $Z_i\in [-C,C]$ for $t>0$:
\begin{align*}
        \Pr(\overline{Z}_N - \mu \geq t) \leq \exp{\left(\frac{-3Nt^2}{2(3V+Ct)}\right)}
    \end{align*}
\end{theorem}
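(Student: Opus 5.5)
The plan is to use the standard Chernoff (exponential moment) method for the upper tail of $\overline{Z}_N - \mu$, together with a Bennett-type bound on the moment generating function of a bounded, centered random variable with known variance.

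\textbf{Step 1: Chernoff reduction.} Set $Y_i = Z_i - \mu$, so $\E[Y_i]=0$ and $\var(Y_i)=V$. Since $Z_i\in[-C,C]$, we have $|\mu|\le C$ and hence $|Y_i|\le 2C$. To avoid losing this factor of $2$ in the denominator, I would instead work with the one-sided bound $Y_i \le C'$ and absorb the constant; if the stated form requires exactly $C$, I would take $|Y_i|\le C$ as the intended reading, which is the convention of \citet{ML_theory_Shwartz_14}. For any $\lambda>0$, Markov's inequality and independence give
\begin{align*}
\Pr(\overline{Z}_N-\mu\ge t)\le e^{-\lambda N t}\prod_{i=1}^N \E[e^{\lambda Y_i}].
\end{align*}

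\textbf{Step 2: MGF bound.} Using $\E[Y_i]=0$, $\E[Y_i^2]=V$, and $|Y_i|^k\le C^{k-2}Y_i^2$ for $k\ge 2$, expand the exponential as a power series:
\begin{align*}
\E[e^{\lambda Y_i}] \le 1+\sum_{k\ge2}\frac{\lambda^k C^{k-2}V}{k!}.
\end{align*}
Next bound $k!\ge 2\cdot 3^{k-2}$. The resulting geometric series gives, for $\lambda C<3$,
\begin{align*}
\E[e^{\lambda Y_i}]\le 1+\frac{\lambda^2 V}{2(1-\lambda C/3)}\le \exp\!\left(\frac{\lambda^2V}{2(1-\lambda C/3)}\right).
\end{align*}

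\textbf{Step 3: Choice of $\lambda$.} Combining the two steps yields
\begin{align*}
\Pr(\overline{Z}_N-\mu\ge t)\le \exp\!\left(-N\Big(\lambda t-\frac{\lambda^2V}{2(1-\lambda C/3)}\Big)\right).
\end{align*}
Choose $\lambda = t/(V+Ct/3)$. This automatically satisfies $\lambda C<3$, and it gives $1-\lambda C/3 = V/(V+Ct/3)$. Substituting, the exponent becomes
\begin{align*}
N\left(\frac{t^2}{V+Ct/3}-\frac{t^2}{2(V+Ct/3)}\right)=\frac{Nt^2}{2(V+Ct/3)}=\frac{3Nt^2}{2(3V+Ct)},
\end{align*}
which is exactly the claimed bound.

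The main obstacle is the MGF bound in Step 2, specifically getting the constant $1/3$ through the factorial estimate $k!\ge 2\cdot 3^{k-2}$. The rest is algebra, apart from the convention issue about whether the centered variable is bounded by $C$ or $2C$, which I would resolve by adopting the reference's convention.
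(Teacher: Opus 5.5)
The paper gives no proof of this statement; it only cites the Shalev-Shwartz--Ben-David textbook. There, Bernstein's inequality is stated (without proof) for zero-mean variables with $|Z_i|<M$, which is your reading $|Z_i-\mu|\le C$. It is derived from Bennett's inequality, whose exponent involves $h(a)=(1+a)\log(1+a)-a$, via the elementary bound $h(a)\ge a^2/(2+2a/3)$. Your argument is a correct, self-contained alternative. You bound the moment generating function directly from the moment series using $k!\ge 2\cdot 3^{k-2}$, and the explicit choice $\lambda=t/(V+Ct/3)$ gives exactly the stated constants. The Bennett route yields a sharper intermediate bound and needs only the one-sided condition $Z_i-\mu\le C$. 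Yours is shorter but genuinely uses the two-sided bound, through $|Y_i|^k\le C^{k-2}Y_i^2$.

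There are two things to fix.

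First, the issue you flagged is not a matter of convention: under the literal hypothesis $Z_i\in[-C,C]$ the inequality is false. Take $Z_i=\pm C$ with $\Pr(Z_i=C)=0.01$, so $\mu=-0.98C$ and $V=0.0396C^2$, and take $t=0.02C$. Then $\overline{Z}_N-\mu\ge t$ is the event $\mathrm{Bin}(N,0.01)\ge 0.02N$. Its probability is at least $\frac{1}{N+1}e^{-N\,\mathrm{KL}(0.02\,\|\,0.01)}\approx \frac{1}{N+1}e^{-0.00391N}$, where $\mathrm{KL}$ is the Bernoulli relative entropy. The claimed bound equals $e^{-0.00432N}$, so at $N=10^5$ the claim fails. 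The assumption $|Z_i-\mu|\le C$ is therefore forced, not optional. It holds, for example, for $Z_i\in[0,C]$, which is how the paper uses the inequality for losses in $[0,C]$. ``Absorbing the constant'' is not available because the constants are explicit. A one-sided bound $Y_i\le C'$ would also not feed your Step 2. Under the literal hypothesis, your proof run with $|Y_i|\le 2C$ gives the valid bound with $Ct$ replaced by $2Ct$.

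Second, $\lambda=t/(V+Ct/3)$ satisfies $\lambda C<3$ only when $V>0$. For $V=0$ your expression divides by $1-\lambda C/3=0$. In that case $Z_i=\mu$ almost surely and the probability is $0$, so dispose of it separately.
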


While concentration inequalities show concentration of means of random variables, we are often interested in \textit{uniform convergence}, that show that means of functions of random variables concentrate to their expected values \textit{uniformly over all functions in a class}. Notationally, we are interested in calculating $\sup_{g\in\mcG}\left|\frac{1}{N}\sum g(Z_i) - \E[g(Z)]\right|$ for some function class $\mcG$. We provide these guarantees for our algorithm using Pseudo-dimension, as defined below.
\begin{definition}[\citep{nn_theory_bartlett_99}]
    Let $\mcG$ be a set of functions mapping from $\mathcal{Z}$ to $\R$, and suppose that $S= \{z_1,...,z_s\}\subseteq \mathcal{Z}$. Then $S$ is pseudo-shattered by $\mcG$ if there
are real numbers $r_1,...,r_s$ such that for each $b \in \{0,1\}^s$ there is a function $f_b$ in $\mcG$ with
$sign(f_b(z_i) - r_i)=b_i$ for $i\in[s]$. We say that $r= (r_1,...,r_s)$ witnesses the shattering. We say that
$\mcG$ has pseudo-dimension $Pdim(\mcG)$ if $Pdim(\mcG)$ is the maximum cardinality of a subset $S$ of $\mathcal{Z}$ that is pseudo-shattered
by $\mcG$. If no such maximum exists, we say that $\mcG$ has infinite pseudo-dimension.
\end{definition}

Calculating the pseudo-dimension of a function class allows us to show uniform convergence results of the form below:
\begin{theorem}[\citep{nn_theory_bartlett_99}]\label{thm:pdim_generalization}
    Suppose $\mcG$ is a class of real-valued functions with range in $[0,C]$ and finite $Pdim(\mcG)$. Then for any $\epsilon > 0$ and $\delta \in (0,1)$, for any distribution $\mcD$ and for any set $S$ of $s = O(\frac{C^2}{\epsilon^2}(Pdim(\mcG) + \log(1/\delta)))$ samples drawn from $\mcD$, w.p. at least $1-\delta$, we have 
\begin{align*}
    \left|\frac{1}{s}\sum_{z_i\in S} g(z_i) - \E_{z\sim \mathcal{D}}[g(z)]\right| \le \epsilon, \quad \text{for all } g\in\mcG.
\end{align*}
\end{theorem}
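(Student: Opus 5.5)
The plan is the standard two-step route: bound the covering numbers of $\mcG$ by its pseudo-dimension, then run a symmetrization argument. We may rescale the range, working with $\mcG/C$ with range $[0,1]$ and converting back by multiplying $\epsilon$ by $C$, which accounts for the $C^2/\epsilon^2$ factor. So assume $C=1$.

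\textbf{Step 1 (symmetrization).} Draw a ghost sample $S'$ of size $s$ from $\mcD$. For $s \ge 2/\epsilon^2$, Chebyshev's inequality applied to a fixed bad function gives
\begin{align*}
\Pr\Big(\sup_{g}\big|\hat\E_S g - \E g\big|>\epsilon\Big) \le 2\Pr\Big(\sup_g \big|\hat\E_S g - \hat\E_{S'} g\big|>\epsilon/2\Big).
\end{align*}
Next introduce Rademacher signs $\sigma_i$ that swap $z_i$ and $z_i'$. The right-hand side then equals the expectation over $(S,S')$ of a conditional probability over $\sigma$ alone.

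\textbf{Step 2 (covering and the union bound).} Condition on the $2s$ points. Let $N$ be an $\epsilon/8$-cover of $\mcG$ restricted to these points in the empirical $L_1$ (or $L_\infty$) metric. By the triangle inequality, it suffices to bound the deviation by $\epsilon/4$ on the cover. Hoeffding's inequality (Theorem \ref{thm:Hoeffding}, applied to the bounded independent signed terms) gives a per-function probability of at most $2\exp(-s\epsilon^2/32)$. A union bound then yields an overall failure probability of at most $4N\exp(-s\epsilon^2/32)$.

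\textbf{Step 3 (covering number via pseudo-dimension).} This is the main obstacle. We need $N \le (c/\epsilon)^{O(Pdim(\mcG))}$, independent of $s$. I would invoke Haussler's packing bound $\mathcal N_1(\epsilon,\mcG,2s)\le e(d+1)(2e/\epsilon)^d$ with $d=Pdim(\mcG)$. Alternatively, I would reduce to VC-dimension: the thresholded class $\{(z,r)\mapsto \mathbb 1[g(z)>r]\}$ has VC-dimension $d$ by the definition of pseudo-shattering. Discretize thresholds at multiples of $\epsilon$, and apply Sauer's lemma to the $2s/\epsilon$ resulting labeled points. This gives the weaker bound $N\le (2es/(\epsilon d))^{d/\epsilon}$-type estimates, which only cost logarithmic factors. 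To obtain the clean form stated in the theorem, I would take Haussler's $s$-free bound.

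\textbf{Step 4 (conclusion).} Substitute the covering bound and require $4e(d+1)(2e/\epsilon)^d\exp(-s\epsilon^2/32)\le\delta/2$. Solving gives $s = O\big(\frac{1}{\epsilon^2}(d\log(1/\epsilon)+\log(1/\delta))\big)$. The stated $O(\frac{1}{\epsilon^2}(d+\log(1/\delta)))$ without the $\log(1/\epsilon)$ requires chaining (Dudley's entropy integral, which converges because $\int_0^1\sqrt{d\log(1/u)}\,du<\infty$) in place of a single-scale cover. I would use chaining on the Rademacher complexity bound $\mathfrak R_s(\mcG)\lesssim\sqrt{d/s}$, together with McDiarmid's inequality for the $\log(1/\delta)$ term. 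Undoing the scaling by $C$ completes the proof.
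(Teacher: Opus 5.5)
The paper gives no proof of this statement: it quotes it as a black box from Anthony and Bartlett. There is therefore no internal argument to compare against, and your proposal has to stand on its own. It does, but only through your final step.

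Steps 1--4 (ghost sample, single-scale $\epsilon/8$ cover in empirical $L_1$, Haussler's bound) prove only $s=O\bigl(\frac{C^2}{\epsilon^2}(Pdim(\mcG)\log(1/\epsilon)+\log(1/\delta))\bigr)$. The $\log(1/\epsilon)$-free form that the paper states, and reuses verbatim in Theorem \ref{thm:gen_guarantee}, comes entirely from your final step. That step replaces Steps 1--2 rather than refining them. It runs as follows:
\begin{itemize}
\item McDiarmid's inequality applied to $\sup_g|\hat{\E}_S g-\E g|$, which has bounded differences $1/s$ after rescaling to $[0,1]$.
\item Symmetrization in expectation, bounding that expectation by $2\,\mathfrak{R}_s(\mcG)$.
\item Dudley's entropy integral for $\mathfrak{R}_s(\mcG)$.
\end{itemize}
One step there should be made explicit. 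Dudley's integral needs $L_2(P_s)$ covering numbers, while Haussler's bound as you quote it is for $L_1$. For $[0,1]$-valued functions, $\|f-g\|_{L_2(P_s)}^2\le\|f-g\|_{L_1(P_s)}$. Hence $\log\mathcal{N}_2(u)\le\log(e(d+1))+d\log(2e/u^2)$, the integral is $O(\sqrt{d})$, and $\mathfrak{R}_s(\mcG)=O(\sqrt{d/s})$ follows. Solving for $s$ and undoing the scaling by $C$ then gives exactly the stated bound.

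A few small slips do not affect the conclusion:
\begin{itemize}
\item In the Sauer aside, discretizing thresholds gives about $2s/\epsilon$ labeled points. Sauer's lemma bounds the number of labelings by $(2es/(\epsilon d))^{d}$, with exponent $d$, not $d/\epsilon$. An exponent of $d/\epsilon$ would cost a polynomial factor in $1/\epsilon$, not a logarithmic one.
\item At scale $\epsilon/8$, Haussler's bound reads $e(d+1)(16e/\epsilon)^d$, not $e(d+1)(2e/\epsilon)^d$.
\item The paper's Theorem \ref{thm:Hoeffding} is stated for i.i.d.\ variables in $[0,C]$. Your conditional sum $\sum_i\sigma_i(g_0(z_i)-g_0(z_i'))$ for a cover element $g_0$ has independent but non-identically distributed terms in $[-1,1]$. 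You need the general form of Hoeffding's inequality there.
\end{itemize}
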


\subsection{Goldberg-Jerrum Framework}
The Goldberg-Jerrum framework \citep{gj_95} is a popular tool for computing the Pseudo dimension of a function class.
\begin{definition}[GJ algorithm \citep{bartlett_linalg_22, balcan_23}]
    A GJ algorithm $\Gamma$ operates on real-valued inputs, and can perform two types of operations:
    \begin{itemize}
        \item Arithmetic operators of the form $\nu'' = \nu \odot\nu'$, where $\odot\in \{+,-,\times, \div\}$ for $\nu,\nu',\nu''\in \R$
        \item Conditional statement of the form ``if $\nu\ge 0$\ldots else \ldots".
    \end{itemize}
    In both cases, $\nu$ and $\nu'$ are either inputs or values previously computed by the algorithm.  
\end{definition}

\begin{definition}[\citep{bartlett_linalg_22, balcan_23}]
    The \textbf{degree} of a GJ algorithm is the maximum degree of any rational function it computes of the inputs. The \textbf{predicate complexity} of the GJ algorithm is the number of distinct rational functions that appear in its conditional statements.
\end{definition}

\begin{lemma}[\citep{bartlett_linalg_22, balcan_23}]\label{lem:gj_pdim}
    Suppose that each function $g\in\mcG$ is specified by $h$ real parameters. Suppose that for every $x\in\mcX$ and $r\in\R$, there is a GJ algorithm $\Gamma_{x,r}$ that given $g\in\mcG$, returns ``true" if $g(x)\ge r$, and ``false" otherwise. Assume that $\Gamma_{x,r}$ has degree $\Delta$ and predicate complexity $\Lambda$. Then, $Pdim(\mcG) = O(h\log(\Delta\Lambda))$.
\end{lemma}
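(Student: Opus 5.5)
The plan is to reduce pseudo-shattering to counting sign patterns of a family of polynomials in the $h$ parameters, and then apply the Warren / Milnor--Thom bound on the number of connected components (equivalently, sign patterns) of the complement of a real algebraic variety. This is the standard Goldberg--Jerrum argument.

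\textbf{Step 1: make the algorithm polynomial.} Suppose a set $\{x_1,\dots,x_s\}$ is pseudo-shattered with witnesses $r_1,\dots,r_s$. Fix $i$ and view $\Gamma_{x_i,r_i}$ as a computation tree in the parameter vector $\rho\in\R^h$ specifying $g$. Along any root-to-leaf path, every value computed is a rational function of $\rho$ of degree at most $\Delta$. Each conditional ``$\nu\ge 0$'' with $\nu=p/q$ is equivalent to a sign condition on the polynomial $pq$, whose degree is at most $2\Delta$. Division by zero can be handled by also recording the sign of $q$. So the outcome of $\Gamma_{x_i,r_i}$ is a fixed Boolean function of the signs of at most $O(\Lambda)$ distinct polynomials in $\rho$, each of degree $O(\Delta)$. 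Since the predicate complexity counts \emph{distinct} rational functions, this count does not depend on the depth of the tree.

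\textbf{Step 2: count sign patterns.} Collect all these polynomials over $i\in[s]$, giving $M\le O(s\Lambda)$ polynomials of degree at most $D=O(\Delta)$ in $h$ variables. By Warren's theorem, the number of distinct sign vectors they realize as $\rho$ ranges over $\R^h$ is at most $(8eDM/h)^h$. If sign patterns including zeros are a concern, use the refined count that includes zero patterns, or perturb the witnesses slightly. Within a region of constant sign vector, the vector of outputs of $(\Gamma_{x_i,r_i})_i$ is constant. Hence the number of distinct labelings $(\mathbb{1}[g(x_i)\ge r_i])_{i\in[s]}$ is at most $(8eC\Delta s\Lambda/h)^h$ for an absolute constant $C$.

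\textbf{Step 3: solve the counting inequality.} Shattering requires all $2^s$ labelings, so $2^s\le (C'\Delta\Lambda s/h)^h$, that is, $s\le h\log_2(C'\Delta\Lambda s/h)$. A standard lemma states that $s\le a\log(bs)$ implies $s=O(a\log(ab))$. Taking $a=h$ and $b=C'\Delta\Lambda/h$ gives $s=O(h\log(\Delta\Lambda))$, which proves the lemma.

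\textbf{Main obstacle.} The delicate step is Step 1. The count must use the number of \emph{distinct} predicate polynomials, not the number of branches. Branch-dependent intermediate values could in principle generate new polynomials on each path, which would blow up $M$. I would handle this by invoking the definition of predicate complexity: all predicates appearing anywhere in the tree come from a set of size $\Lambda$. I would also have to take care that strict versus non-strict inequalities and vanishing denominators are covered by counting sign vectors in $\{-,0,+\}^M$, for which Warren's bound still holds up to constants.
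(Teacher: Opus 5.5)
Your proposal is correct and is essentially the standard Goldberg--Jerrum argument, which is how the cited sources prove this result; the paper gives no proof of its own and imports the lemma from \citet{bartlett_linalg_22, balcan_23}. The argument turns each predicate $p/q\ge 0$ into sign conditions on polynomials of degree $O(\Delta)$, bounds the number of sign vectors of the resulting $O(s\Lambda)$ polynomials in $h$ variables by Warren's theorem, and solves $2^s\le (C\Delta\Lambda s/h)^h$. The only technicality you leave unstated is that Warren's bound needs at least as many polynomials as variables: you can pad with dummy polynomials, or note that if $M<h$ then $2^s\le 3^M<3^h$ already gives $s=O(h)$.
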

Note that both $\Delta,\Lambda\ge1$ above since the query $g(x)\ge r$ is itself a conditional statement of degree 1 in $g(x)$. Typically, $g(x)$ will have a degree higher than $1$ in $x$.

\subsection{Langevin Algorithms}
Langevin diffusion is a popular technique to sample from a distribution with a known score function (gradient of log of pdf). For a distribution $\pi \propto \exp(-U)$ defined over $\R^d$, Langevin diffusion follows a continuous process governed by the following Stochastic Differential Equation (SDE):
\begin{align*}
    \dd z_t = -\nabla U(z_t)\dd t + \sqrt{2}\dd \xi_t,
\end{align*}
where $\xi_t$ corresponds to Brownian motion in $\R^d$. We use a discrete approximation of this continuous process called the homogeneous Unadjusted Langevin Algorithm (ULA) which performs the following iterative updates:
\begin{align*}
    z_{(k+1)} = z_{(k)} - \eta\nabla U+\sqrt{2\eta} \xi_k;\quad \xi_k\sim \mcN(0,\mathbb{I}).
\end{align*}

We will use results from \citep{durmus_ula_19} which provide a detailed analysis of the ULA. To begin, we note the following two assumptions:
\begin{assumption}\label{ass:grad_lip}
    The function $U$ is continuously differentiable on $\R^d$ and gradient Lipschitz: there exists $\beta\ge 0$ such that for all $z_1,z_2\in\R^d$, $\|\nabla U(z_1)-\nabla U(z_2)\| \le \beta\|z_1-z_2\|$.
\end{assumption}
\begin{assumption}\label{ass:strong_conv}
    $U$ is strongly convex, i.e. there exists $\alpha > 0$ such that for all $z_1,z_2\in \R^d$,
    \begin{align*}
        U(z_2) \ge U(z_1) + \langle\nabla U(z_1), z_2-z_1\rangle + (\alpha/2)\|z_1-z_2\|^2.
    \end{align*}
\end{assumption}
Note that for both of these assumptions to hold, $\alpha\le \beta$. We now discuss some important results about the homogeneous ULA with $\eta \le 1/(\alpha+\beta)$ and $\kappa = \frac{2\alpha\beta}{\alpha + \beta}$. 


The following Proposition shows convergence guarantees on the empirical average of a function $g$ applied to the final few iterates of the homogeneous ULA algorithm. Define $\|g\|_{Lip}$ to be the Lipschitz constant of $g$ and $\hat{\pi}_b^B(g)$ as the empirical average of $g$ applied to the last $b$ iterates after $B$ burn-in steps. Thus,
\begin{equation}
    \hat{\pi}_b^B(g) = \frac{1}{b}\sum_{i=B+1}^{b+B} g(z_{(i)}).
\end{equation}

\begin{proposition}\label{prop:bounds}
    If we run a homogeneous ULA with potential function $U$ that satisfies the assumptions \ref{ass:grad_lip} and \ref{ass:strong_conv} and $\eta=O(\epsilon^2/(\|g\|_{Lip}^2d))$, $B = \Omega\left(\frac{d\|g\|_{Lip}^2}{\epsilon^2}\log\left(\frac{(d+\|z-z^*\|^2)\|g\|_{Lip}^2}{\epsilon^2}\right)\right)$
    and $b$ that satisfies $b\eta \ge \frac{128\|g\|_{Lip}^2\log(1/\delta)}{\epsilon^2\kappa^2}$,
    \begin{align*}
        \var(\hat{\pi}_b^B)&\le \frac{\epsilon^2}{8\log(1/\delta)} \quad \text{and,}\\
        |\hat{\pi}_b^B(g) - \pi(g)| &\le \epsilon \quad \text{w.p. $\ge 1-\delta$}.
    \end{align*}
\end{proposition}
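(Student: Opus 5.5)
The proof splits the error into bias and fluctuation,
\begin{align*}
|\hat{\pi}_b^B(g) - \pi(g)| \le |\E[\hat{\pi}_b^B(g)] - \pi(g)| + |\hat{\pi}_b^B(g) - \E[\hat{\pi}_b^B(g)]|,
\end{align*}
and makes each piece at most $\epsilon/2$. For the bias, I would use the Wasserstein-2 analysis of the homogeneous ULA from \citet{durmus_ula_19}. Under Assumptions \ref{ass:grad_lip} and \ref{ass:strong_conv} with $\eta\le 1/(\alpha+\beta)$, the law $\mu_k$ of $z_{(k)}$ satisfies
\begin{align*}
W_2^2(\mu_k,\pi) \lesssim (1-\kappa\eta/2)^k\left(\|z-z^*\|^2 + d/\alpha\right) + \eta d\,(\text{const depending on }\alpha,\beta).
\end{align*}
Since $g$ is Lipschitz, $|\mu_k(g)-\pi(g)|\le \|g\|_{Lip} W_1(\mu_k,\pi)\le \|g\|_{Lip}W_2(\mu_k,\pi)$. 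The choice $\eta = O(\epsilon^2/(\|g\|_{Lip}^2 d))$ makes the stationary discretization bias term at most $(\epsilon/4)^2/\|g\|_{Lip}^2$. The burn-in $B$, of order $(\kappa\eta)^{-1}\log\big((d+\|z-z^*\|^2)\|g\|_{Lip}^2/\epsilon^2\big)$, makes the transient term equally small; substituting $\eta$ gives exactly the stated order of $B$, treating $\kappa$ as a constant. Averaging over $i\in\{B+1,\dots,B+b\}$ preserves the bound, so the bias is at most $\epsilon/2$.

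For the fluctuation, I would use the Gaussian concentration for ULA additive functionals from \citet{durmus_ula_19}. The Markov kernel $R_\eta$ of ULA is a contraction in $W_2$ with rate $(1-\kappa\eta)^{1/2}$, and its Gaussian innovations give a log-Sobolev/Gaussian-Poincaré constant $2\eta$. Tensorizing along the chain with this contraction (Marton coupling, or a martingale decomposition with Lipschitz constants of $R_\eta^j g$ summed geometrically) shows that $\hat{\pi}_b^B(g)$, viewed as a function of the Gaussian noises $\xi_{(1)},\dots,\xi_{(B+b)}$, is Lipschitz with constant of order $\|g\|_{Lip}\sqrt{2\eta}\cdot\frac{1}{b}\big(\sum_k(\sum_j (1-\kappa\eta)^{j/2})^2\big)^{1/2}\lesssim \|g\|_{Lip}/(\kappa\sqrt{b\eta})$. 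Gaussian concentration then yields
\begin{align*}
\var(\hat\pi_b^B(g))\lesssim \frac{\|g\|_{Lip}^2}{\kappa^2 b\eta},\qquad \Pr\left(|\hat\pi_b^B(g)-\E\hat\pi_b^B(g)|\ge t\right)\le 2\exp\left(-\frac{\kappa^2 b\eta\, t^2}{c\,\|g\|_{Lip}^2}\right).
\end{align*}
Plugging in $b\eta\ge 128\|g\|_{Lip}^2\log(1/\delta)/(\epsilon^2\kappa^2)$ gives the variance bound $\epsilon^2/(8\log(1/\delta))$. Taking $t=\epsilon/2$ gives a fluctuation at most $\epsilon/2$ with probability $\ge 1-\delta$, and the constant $128$ is tuned to make these bounds hold. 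Combining with the bias bound finishes the proof.

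The main obstacle is this concentration step: getting the exact dependence $\|g\|_{Lip}^2/(\kappa^2 b\eta)$ with explicit constants. I would try first to cite the corresponding theorem of \citet{durmus_ula_19} directly, checking its hypotheses ($\eta\le1/(\alpha+\beta)$ and $\kappa$ as defined). If I have to derive it instead, I would run the tensorization argument with the contraction rate $1-\kappa\eta$, carefully tracking factors of $2$. A secondary check is that the burn-in condition also controls the variance contribution from the initial point, so that the concentration bound does not depend on $z$.
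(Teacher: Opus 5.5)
Your proposal is correct and follows essentially the same route as the paper. Both use the same bias/fluctuation split. The bias is controlled through the Durmus--Moulines $W_2$ bound (the paper's Lemma \ref{lem:wass_ub} and Proposition \ref{lem:ula_conv}). The fluctuation is controlled through their variance and Gaussian concentration bounds for additive functionals (Theorem \ref{thm:prob_mcmc}), which the paper cites directly rather than re-deriving. The one caveat is that the cited bound carries a factor $1+(\kappa^{-1}+2/(\alpha+\beta))/(b\eta)$, which your tensorization sketch drops. So, as the paper does implicitly with ``$b\eta\gg 1$'', you need $b\eta\ge\kappa^{-1}+2/(\alpha+\beta)$ for the constant $128$ to yield exactly $\epsilon^2/(8\log(1/\delta))$ and deviation $\epsilon/2$.
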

\begin{proof}
    Deferred to Appendix \ref{sec:asymp_lgd}.
\end{proof}

These and other relevant results on the ULA are proved in Appendix \ref{sec:asymp_lgd}.

\section{Problem Setting and Notation}\label{sec:setting}
We follow the notation of \citet{goyal_25}. We denote a regression or classification task as the quadruple of training and validation datasets $(X, y, X_v, y_v)$.
Each input $x$ in either the training or validation set lies in $\mathbb{R}^{\dx}$. The matrices $X \in \mathbb{R}^{n \times \dx}$ and $X_v \in \mathbb{R}^{n_v \times \dx}$ collect the $n$ training inputs and $n_v$ validation inputs, respectively. The outputs for regression are real-valued, with $y \in \mathbb{R}^n$ and $y_v \in \mathbb{R}^{n_v}$. The outputs for classification tasks are categorical, with $y \in [m]^n$ and $y_v \in [m]^{n_v}$, where $m$ is the number of classes.
We study regression with respect to weights $w \in \mathbb{R}^{\dw}$ for the function $f(x; w) : \mathbb{R}^{\dx} \times \mathbb{R}^{\dw} \to \mathbb{R}$.
As a special case, $f(x; w) = x^\intercal w$ recovers standard linear regression. Similarly, we study classification with respect to weights $w \in \mathbb{R}^{\dw}$ for the function $f(x; w) : \mathbb{R}^{\dx} \times \mathbb{R}^{\dw} \to \mathbb{R}^m$. When applying $f$ to a dataset, we interpret this operation row-wise; for instance, $f(X; w)$ denotes the vector obtained by evaluating $f(\cdot; w)$ on each row of the matrix $X$. Additionally, we will denote the $i$th entry of an input/output matrix with a superscript in parenthesis. For example, $X^{(i)}$ is the $i$th input in the training set.

Depending on whether the task is regression or classification, the output space $\mcY$ can be $\R$ or $[m]$ respectively, where $m$ is the number of classes.
We denote the predictions of a learning algorithm on $X_v$ as $\hat{f}_v(X_v,X,y;\phi,\xi)$, for $\hat{f}_v:\R^{n_v\times \dx}\times\R^{n\times \dx}\times \R^n \times \R^\dhp\times \R^e \rightarrow\mcY^{n_v}$ where $\phi\in\R^\dhp$ represent the hyperparameters of the learning algorithm, and $\xi\in\R^e$ is the randomness of the algorithm sampled as $\xi\sim\mcD_{E}$.
When the data and hyperparameters are obvious from context, we will use $\hat{f}_v$
as a shorthand. We will overload the notation for loss functions to be $l: \R\times \R\rightarrow\R$ or $l: [m] \times \R^m\rightarrow\R$ depending on whether the task is regression or classification respectively. We can then write the validation loss as:
\begin{align*}
    \lv(\phi,\xi, (X,y,X_v,y_v)) = \frac{1}{n_v}\sum_i l(y_v^{(i)},\hat{f}_v(X_v^{(i)},X,y;\phi,\xi)).
\end{align*}

We consider the problem of finding the optimal hyperparameters, $\phi^*$ from $T$ tasks. 
We assume that each task is sampled from a distribution $\mcD$ such that $(X,y,X_v,y_v) \sim \mcD$ and denote
the support of $\mcD$ as $\mathcal{P}$, the set of all tasks.
We define a problem instance as an ordered set of tasks sampled i.i.d.:
\begin{align*}
    S = \{(X^t,y^t,X^t_{v}, y^t_{v})_{t=1}^T: (X^t,y^t,X^t_{v}, y^t_{v})\stackrel{\text{i.i.d.}}{\sim} \mcD, \forall t\in[T]\}.
\end{align*}
The superscript $t$ without any parentheses denotes the task, so that $X^{t(i)}$ is the $i$th input in the training set of the $t$th task.
Define the expected validation error as follows:
\begin{align*}
    \lv(\phi) &= \E_{(X,y,X_v,y_v)\sim\mcD,\xi\sim\mcD_E}[\lv(\phi, \xi, (X,y,X_v,y_v))]\\
    &= \E_{(X,y,X_v,y_v)\sim\mcD,\xi\sim\mcD_E}\left[\frac{1}{n_v}\sum_i l(y_v^{(i)},\hat{f}_v(X_v^{(i)},X,y;\phi,\xi))\right],
\end{align*}
which allows us to define $\phi^*$ as $\phi^* = \argmin_\phi \lv(\phi)$.
We will similarly define the empirical validation error using randomness $\xi^{t}$ for task $t$ as follows:
\begin{align*}
    \lv(\phi, \xi, S) = \frac{1}{T}\sum_t\frac{1}{n_v}\sum_i l(y_v^{t(i)},\hat{f}_v(X_v^{t(i)},X^t,y^t;\phi,\xi^t)),
\end{align*}
where we represent each individual task with a superscript. This allows us to define the ERM estimator of $\phi$ as $\phi_{ERM}(\xi)$ which satisfies $\phi_{ERM}(\xi) = \argmin_\phi \lv(\phi, \xi, S)$. Similarly, we define $\phi^*(\xi) = \argmin_\phi \E_S[\lv(\phi, \xi, S)]$. Note that both $\phi_{ERM}(\xi)$ and $\phi^*(\xi)$ depend on the randomness $\xi$ used for learning the hyperparameters. $\phi^*$ as defined above is the hyperparameter that performs the best on average over randomness of the algorithm. 

Some of our results assume the existence of a true data-generating prior $\pi$ over the parameter $w$, such that the distributions of $y$ and $y_v$ only depend on $X,w$ and $X_v,w$ respectively. We define $B_\pi$ as the Bayes' risk, which is equal to $l_v(\phi)$ in these cases. Formally for an estimator $\hat{f}_v(X_v,X,y;\phi,\xi)$, and the true data-generating prior $\pi$ over the parameter $w$,
\begin{align*}
    B_\pi(\hat{f}_v) &= \lv(\phi) = \E_{(X,y,X_v,y_v),\xi}[\lv(\phi, \xi, (X,y,X_v,y_v))]\\
    &= \E_{X,X_v,\xi}[\E_{w\sim \pi}[\E_{y|(X,w), y_v|(X_v,w)}[\lv(\phi, \xi, (X,y,X_v,y_v))]]]\\
    &= \E_{X,y,X_v,\xi}[\E_{w|X,y}[\E_{y_v|X_v,w}l(y_v,\hat{f}_v(X_v,X,y;\phi,\xi))]].
\end{align*}
The Bayes' optimal estimator minimizes the Bayes' risk such that $\hat{f}_v^{opt} = \argmin_{\hat{f}_v} B_\pi(\hat{f}_v)$. 
For Langevin updates we employ gradients of different functions. The partial derivative of a scalar function $g(z)$ with respect to a scalar input $z$ is written as $\partial_z g$. If the input is a vector, $z\in\R^{N}$ for some $N\in\mathbb{N}$, we write the gradient as $\nabla_zg$, where the $i^{th}$ entry is a partial derivative, $\nabla_zg[i] = \partial_{z[i]}g$. Lastly, if the output of the function $g\in\R^{N_1}$ and $z\in\R^{N_2}$ for $N_1,N_2\in\mathbb{N} $, we write the Jacobian matrix as $J_zg\in\R^{N_1\times N_2} $, where the $(i,j)-$th entry is a partial derivative, $J_zg[i,j] = \partial_{z[j]} g[i] $.

We present closeness of optimal LGD algorithm to Bayes' optimal solutions for well-specified problems in Section \ref{sec:lgd}. We then present generalization guarantees on estimating $\phi^*$ using $\phi_{ERM}(\xi)$ in Section \ref{sec:guarantees} and show convergence of the outputs of LGD algorithm using hyperparameters $\phi_{ERM}(\xi)$ to Bayes' optimal outputs.


\section{Langevin Gradient Descent}\label{sec:lgd}
We propose the Langevin Gradient Descent (LGD) algorithm in Algorithms \ref{alg:lgd} and \ref{alg:lgd_class} for regression and classification tasks respectively. Notably, while typical regression algorithms compute the minimizer of regularised loss, the LGD performs Langevin-style updates on the regularised loss, motivated by the Unadjusted Langevin Algorithm as discussed in Section \ref{sec:bckg}. Intuitively, when the regularizer represents the score of a prior over parameters $w$, the Langevin style updates should mimic the ULA over the posterior distribution. 

For regression tasks, Algorithm \ref{alg:lgd} averages the final few iterates of the ULA step to achieve an estimate of the mean predictions using the posterior. 
Theorem \ref{thm:lgd_conv} shows that this procedure achieves Bayes' optimality for the squared loss given optimal hyperparameters for the gradient of regulariser function $r(.)$. This is in sharp contradiction to gradient descent algorithms, where we only obtain the mode of the posterior distribution, which is not necessarily the Bayes' optimal solution, even with optimal hyperparameter configurations.

For classification tasks, Algorithm \ref{alg:lgd_class} treats the outputs of the function $f$ as logits and averages the final few iterates of ULA to output an estimate of the posterior probability (not the logits of probabilities). We denote the estimate of the posterior probabilities as $\hat{F}_v$ to distinguish from $\hat{f}_v$, which would be a posterior estimator of the function $f$. We use the function $\smax(.):\R^m\rightarrow\Sigma_{m}$, where $\Sigma_m=\{v\in[0,1]^m|\sum_jv[j]=1\}$ is the probability simplex, in the algorithm, which we assume is a piecewise-rational function 
to accurately model hardware implementations of the $\exp(.)$ function, which are piecewise polynomial approximations \citep{fp_arithmetic_muller_18}. This also eases calculations in Section \ref{sec:guarantees}. Similar to other notation in the paper, the function $\smax(.)$ applied to a matrix of size $n\times m$ for any $n$ is equal to the matrix obtained by applying $\smax$ to each row of the input matrix and composing the results. 
Since softmax is 1/2-Lipschitz as discussed in \citet{nair2026softmax}, we select the approximation of softmax $\smax(.)$ such that each entry is 1-Lipschitz. Formally, for all $x,x'$ in $\mathcal{P}$ and $w,w'\in\R^\dw$, the following assumption should be satisfied.
\begin{assumption}\label{ass:e_approx}
    $\left|\smax(f(x;w))[j] - \smax(f(x';w'))[j]\right| \le \|f(x;w)-f(x';w')\|,\; \forall j \in [m]$.
\end{assumption}

We make the following assumptions on the loss function $l(\cdot,\cdot)$ the gradient of regulariser $r(\cdot;\cdot)$ and the function $f(\cdot;\cdot)$, valid for all training pairs $X,y$ in $\mathcal{P}$, $w,w'\in\R^\dw$ and $\theta\in\R^{\dhp}$:
\begin{assumption}[]\label{ass:loss_lips}
    $l(y,f(X;w))$ is gradient-Lipschitz and convex in $w$.
\end{assumption}
\begin{assumption}\label{ass:reg_lips}
    $r(w;\theta)$ admits a positive definite Jacobian with respect to $w$ and is Lipschitz in $w$ satisfying $\|r(w;\theta)-r(w';\theta)\|\le \mu_r\|w-w'\|$ for some $\mu_r$.
\end{assumption}
\begin{assumption}\label{ass:f_lips_w}
    $f(x;w)$ is Lipschitz in $w$ for all $x$, such that $\|f(x;w)-f(x;w')\|\le \|f\|_{Lip}\|w-w'\|$.
\end{assumption}

\begin{figure*}[ht]
\centering

\begin{minipage}[t]{0.48\linewidth}
\begin{algorithm}[H]
\caption{$LGD$-Regression}
\label{alg:lgd}
\begin{algorithmic}[1]

\REQUIRE Learning rate $\eta$, regulariser parameters $\theta$, training data $(X,y)$, regression function $f(.)$, gradient of regulariser $r(.)$, Langevin noise $\xi$, 
burn-in steps $B$,
averaging steps $b$,
and validation input $X_v$.


\STATE $w_{(0)} \gets 0$

\vspace{0.5\baselineskip}
\textbf{Step 1}: ULA starting at $w_{(0)}$
\FOR{$k = 0$ \textbf{to} $B+b-1$}
    \STATE $g_{(k)} \gets  
    \nabla_w f(X; w_{(k)})\partial_{y'}l(y,y'=f(X;w_{(k)}))$
    
    \hspace{3em} $ + \; r(w_{(k)};\theta)$
    \STATE $w_{(k+1)} \gets w_{(k)} - \eta\, g_{(k)} + \sqrt{2\eta}\,\xi_{(k)}$
\ENDFOR

\vspace{0.5\baselineskip}
\textbf{Step 2}: Empirical mean of predictions
\STATE $\hat{f}_v \gets \frac{1}{b} \sum_{k=B+1}^{B+b} f(X_v; w_{(k)})$


\vspace{0.5\baselineskip}
\STATE \textbf{return} $\hat{f}_v$

\end{algorithmic}
\end{algorithm}
\end{minipage}
\hfill
\begin{minipage}[t]{0.48\linewidth}
\begin{algorithm}[H]
\caption{$LGD$-Classification}
\label{alg:lgd_class}
\begin{algorithmic}[1]

\REQUIRE Learning rate $\eta$, regulariser parameters $\theta$, training data $(X,y)$, model $f(.)$, gradient of regulariser $r(.)$, Langevin noise $\xi$, 
burn-in steps $B$,
averaging steps $b$,
and validation input $X_v$.


\STATE $w_{(0)} \gets 0$

\vspace{0.5\baselineskip}
\textbf{Step 1}: ULA starting at $w_{(0)}$
\FOR{$k = 0$ \textbf{to} $B+b-1$}
    \STATE $g_{(k)} \gets  
    J_w (f(X; w_{(k)}))^\intercal\nabla_{p'}l(y,p'=f(X;w_{(k)}))$
    
    \hspace{3em} $ + \; r(w_{(k)};\theta)$
    \STATE $w_{(k+1)} \gets w_{(k)} - \eta\, g_{(k)} + \sqrt{2\eta}\,\xi_{(k)}$
\ENDFOR

\vspace{0.5\baselineskip}
\textbf{Step 2}: Empirical mean of probabilities
\STATE $\hat{F}_v \gets \frac{1}{b} \sum_{k=B+1}^{B+b} \smax(f(X_v; w_{(k)}))$


\vspace{0.5\baselineskip}
\STATE \textbf{return} $\hat{F}_v$

\end{algorithmic}
\end{algorithm}
\end{minipage}

\end{figure*}


\begin{remark}[On the practicality of assumptions]
    We observe that the assumptions stated above all require Lipschitzness of some form, and note that the practicality of such an assumption is not immediate. To see the practicality, we note that Lipschitzness for several functions follows trivially after assuming boundedness of the inputs. And we can assume all inputs are bounded with high probability since real-world distributions typically follow well-behaved tail distributions \citep{kontorovich13,galvez24}. We would like to redirect the reader to \citet{goyal_25} for a detailed discussion. Further, Lipschitzness of $\smax$ can be intuitively justified since the function is an approximation of the softmax function, which is 1/2-Lipschitz as noted above.
\end{remark}

\begin{remark}[On the choice of polynomially approximating the softmax]
    Our choice of polynomially approximating the softmax is a distinct departure from prior work \citep{balcan_23}, that derives bounds on learning l1 and l2 regularization for binary logistic regression using the actual softmax. As we note in Section \ref{sec:guarantees}, the bounds of \citet{balcan_23} degrade sharply with the number of samples, which is counter-intuitive. While we do improve their bounds for the exact softmax in Appendix \ref{sec:rlr_pdim}, we note that the techniques of Appendix \ref{sec:rlr_pdim} do not immediately generalize to the general cases considered in the paper. The sharp degradation with the number of samples as observed by \citet{balcan_23} in fact a fundamental limitation of working with exponential functions \citep{gabrielov2004complexity}. Thus our assumption of a piecewise polynomial approximation leads to better bounds, along with being more faithful to hardware implementations. Finally, while we focus our work on giving bounds under hardware restrictions, we note that analogous bounds to ours can be derived by considering the existence of a piecewise polynomial approximation of the softmax, without assuming hardware restrictions. This can be done following the treatment of \citet{poly_approx_vitercik2020}, where the authors prove upper bounds on the Rademacher complexity of a class of algorithms that can be implemented approximately using piecewise polynomials.
\end{remark}

The theorem below establishes that Algorithm \ref{alg:lgd} computes estimators close to the Bayes' optimal for regression tasks.
\begin{theorem}\label{thm:lgd_conv}
    Assume a well-specified regression problem such that there exists a $\ws$ for which $y \sim \mcN(f(X;\ws), I_n)$ and $y_v \sim \mcN(f(X_v;\ws),I_{n_v})$. That is, there exists a ground truth $\ws$ for which the problem is exactly solvable up to Gaussian noise. Further assume that the ground truth $\ws \sim \mcD_W$. For the squared loss $l(y,y') = \|y-y'\|^2$, the Bayes' optimal predictions $\hat{f}_v$ satisfy:
    \begin{equation*}
        \hat{f}_v^{opt}(X_v) = \E_{\ws|X,y}[f(X_v;\ws)].
    \end{equation*}
    If the Assumptions \ref{ass:loss_lips}-\ref{ass:f_lips_w} are satisfied, $\theta$ is given such that $r(w;\theta) = -\nabla_w \log\mcD_W(w)$, 
    then for $\eta = O(\epsilon^2/(\|f\|_{Lip}^2\dw))$ and within $B = O\left(\frac{\dw\|f\|_{Lip}^2}{\epsilon^2}\log\left(\frac{(\dw+\|w_{(0)}-w_{min}\|^2)\|f\|_{Lip}^2}{\epsilon^2}\right)\right)$,
    the LGD algorithm as given in Algorithm \ref{alg:lgd} is $\epsilon$-close in expectation to the Bayes' optimal solution for each entry of $X_v$:
    \begin{align*}
        |\E[\hat{f}_v(X_v^{(i)})] - \E_{\ws|X,y}[f(X_v^{(i)};\ws)]| &\le \epsilon\; \forall i\in [n_v]\\
        \implies \|\E[\hat{f}_v] - \E_{\ws|X,y}[f(X_v;\ws)]\| &\le \sqrt{n_v}\epsilon,
    \end{align*}
    where $w_{min}$ is the mode of the posterior $w_{min} = \argmax_w Pr(w|X,y)$. 
    Further, $b= O\left(\frac{\|f\|_{Lip}^2\log(1/\delta)}{\eta\epsilon^2}\right)$  is sufficient to guarantee, for a particular entry of $X_v$ denoted $X_v^{(i)}$:
    \begin{align*}
        Pr(|\hat{f}_v(X_v^{(i)}) - \E_{\ws|X,y}[f(X_v^{(i)};\ws)]|\ge \epsilon)\le \delta.
    \end{align*}
    Alternatively,
    \begin{align*}
        Pr(\|\hat{f}_v - \E_{\ws|X,y}[f(X_v;\ws)]\|\ge \sqrt{n_v}\epsilon)\le n_v\delta.
    \end{align*}
\end{theorem}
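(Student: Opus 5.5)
We split the argument into a Bayesian-decision part and a sampling part, the latter reduced to Proposition \ref{prop:bounds}.

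\emph{Bayes-optimal predictor.} Under the well-specified model, $y_v$ given $(X,y,X_v)$ is independent of $(X,y)$ given $\ws$. Hence the posterior predictive mean of $y_v$ is $\E_{\ws|X,y}[f(X_v;\ws)]$. Using the last expression for $B_\pi$ in Section \ref{sec:setting}, it suffices to minimize pointwise, for each fixed $(X,y,X_v)$, the conditional risk $\E[\|y_v-\hat f_v\|^2\mid X,y,X_v]$. Expanding the square around the conditional mean, this risk equals the conditional variance plus $\|\hat f_v-\E[y_v|X,y,X_v]\|^2$. It is therefore minimized by $\hat f_v^{opt}=\E_{\ws|X,y}[f(X_v;\ws)]$, using $\E[y_v|\ws,X_v]=f(X_v;\ws)$.

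\emph{Algorithm \ref{alg:lgd} is a homogeneous ULA on the posterior.} Take the potential
\[
U(w)=\tfrac12\|y-f(X;w)\|^2-\log\mcD_W(w),
\]
so that $\pi\propto e^{-U}$ is exactly the posterior $\Pr(w|X,y)$ (the Gaussian likelihood has unit variance). The update $g_{(k)}$ with $r=-\nabla\log\mcD_W$ equals $\nabla U(w_{(k)})$, where the loss used in the algorithm is taken as the negative log-likelihood $\tfrac12\|y-y'\|^2$ (a constant rescaling otherwise). We then verify Assumptions \ref{ass:grad_lip} and \ref{ass:strong_conv}:
\begin{itemize}
\item \emph{Gradient Lipschitz.} This follows from Assumption \ref{ass:loss_lips} (gradient Lipschitz loss) together with Assumption \ref{ass:reg_lips} ($r$ is $\mu_r$-Lipschitz). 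So $\beta\le\beta_l+\mu_r$.
\item \emph{Strong convexity.} The loss term is convex by Assumption \ref{ass:loss_lips}. Since $r$ has a positive definite Jacobian, $-\log\mcD_W$ is strongly convex, which gives some $\alpha>0$. This needs a uniform lower bound on the smallest eigenvalue of $J_wr$; we read Assumption \ref{ass:reg_lips} as providing it, as the paper's "strongly convex regularizer" convention intends.
\end{itemize}
This verification is the main obstacle. Beyond the uniform eigenvalue bound, we need the mode $w_{min}$ of the posterior to be the minimizer $z^*$ of $U$, which holds by construction.

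\emph{Applying Proposition \ref{prop:bounds}.} Fix $i$ and apply the proposition with $g(w)=f(X_v^{(i)};w)$. By Assumption \ref{ass:f_lips_w}, $\|g\|_{Lip}\le\|f\|_{Lip}$, and the starting point is $w_{(0)}$. Substituting into the proposition's conditions gives exactly the stated orders for $\eta$ and $B$. The resulting bias bound is $|\E[\hat\pi_b^B(g)]-\pi(g)|\le\epsilon$. We take this from the bias part of the proof of Proposition \ref{prop:bounds} in Appendix \ref{sec:asymp_lgd}, namely the Wasserstein-2 convergence of ULA combined with the Lipschitz property of $g$; alternatively it follows from the high-probability bound by letting $\delta\to0$ together with boundedness of the variance. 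Since $\pi(g)=\E_{\ws|X,y}[f(X_v^{(i)};\ws)]$, this is the per-entry expectation statement. Summing squares over the $n_v$ entries gives the $\sqrt{n_v}\epsilon$ vector bound.

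\emph{High-probability statements.} The condition $b\eta\ge 128\|g\|_{Lip}^2\log(1/\delta)/(\epsilon^2\kappa^2)$ holds with $b=O(\|f\|_{Lip}^2\log(1/\delta)/(\eta\epsilon^2))$, treating $\kappa$ as a constant. Proposition \ref{prop:bounds} then yields the per-entry tail bound. A union bound over the $n_v$ entries shows that, with probability at least $1-n_v\delta$, every coordinate is within $\epsilon$. Hence the Euclidean norm of the error is at most $\sqrt{n_v}\epsilon$.
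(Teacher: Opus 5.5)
Your proposal is correct and takes essentially the paper's route. The Bayes predictor under squared loss is the posterior mean of $f(X_v;\ws)$, Step 1 of Algorithm \ref{alg:lgd} is a ULA targeting the posterior, and Proposition \ref{prop:bounds} applied to $g=f(X_v^{(i)};\cdot)$ (using the bias part of its proof for the expectation claim), plus a union bound over the $n_v$ entries, gives the rest.

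You also check Assumptions \ref{ass:grad_lip}--\ref{ass:strong_conv} explicitly, including the need for a uniform lower eigenvalue bound on $J_w r$, and you fix the loss normalization at $\tfrac12\|y-y'\|^2$; the paper's proof does neither. Two side remarks in your write-up should be dropped. First, the factor $2$ is not a harmless ``constant rescaling'': with $\|y-y'\|^2$ the chain targets $\exp(-\|y-f(X;w)\|^2)\mcD_W(w)$, which is not the posterior. Second, your fallback derivation of the bias bound from the high-probability bound only covers $b$ large enough, whereas the expectation claim must hold for every $b$.
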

\begin{proof}
We first prove the Bayes' optimality of the estimator $\hat{f}_v(X_v) = \E_{\ws|X,y}[f(X_v;\ws)]$. For any estimator $\hat{f}_v(\cdot)$ with randomness $\xi$, we write the Bayes' risk from Section \ref{sec:setting} as:
\begin{align*}
    B_\pi(\hat{f}_v) &=\E_{X,y,X_v,\xi}[\E_{w|X,y}[\E_{y_v|X_v,w}[\|\hat{f}_v(X_v,X,y) - y_v\|^2]]]\\
    &\ge \E_{X,y,X_v}[\E_{w|X,y}[\E_{y_v|X_v,w}[\|\E_\xi[\hat{f}_v(X_v,X,y)] - y_v\|^2]]], 
\end{align*}
by Jensen's inequality. We can thus focus on deterministic estimators in this discussion.
We note that $y_v$ is sampled from the same ground truth $w^*$ as $y$ so we can re-write this as: 
\begin{align*}
    B_\pi(\hat{f}_v) &= \E_{X,y,X_v}[\E_{w|X,y}[\E_{\epsilon_v}[\|\hat{f}_v(X_v,X,y) - f(X_v;w) - \epsilon_v\|^2]]]\\
    &= \E_{X,y,X_v}[\E_{w|X,y}[\|\hat{f}_v(X_v,X,y) - f(X_v;w)\|^2 + n_v\E[\|\epsilon_v\|^2]]]\\
    &= n_v\E[\|\epsilon_v\|^2] + \E_{X,y,X_v}[\E_{w|X,y}[\|\hat{f}_v(X_v,X,y) - f(X_v;w)\|^2]],
\end{align*}
where $n_v$ is the number of examples in the validation set $X_v$. Now,  we want to find the $\hat{f}_v$ that minimizes the Bayes error above, so we only focus on the second term. Since the mean of a distribution minimizes the expected value of squared distances, we can conclude that the minimizer satisfies: $\hat{f}_v(X_v,X,y) = \E_{\ws|X,y}[f(X_v;w^*)]$.

To show Bayes' optimality of our algorithm, we first note that the posterior satisfies $Pr(w|X,y)\propto Pr(y|X,w)Pr(w)$ so that $\nabla_w\log Pr(w|X,y) = -\nabla_w f(X;w)(f(X;w)-y) - r(w;\theta)$ from the given conditions. Thus, the ULA in Step 1 of Algorithm \ref{alg:lgd} is a ULA on the posterior distribution. Using results from Proposition \ref{prop:bounds}, we can say that for each element $X_v^{(i)}$ of $X_v$, $\left|\E[\hat{f}_v(X_v^{(i)})] - \E_{\ws|X,y}[f(X_v^{(i)};\ws)]\right| \le \epsilon$ for the given values of $B$ and $\eta$ by replacing the function $f(.)$ in the Proposition with $f(X_v^{(i)};.)$. Alternatively, $\|\E[\hat{f}_v] - \E_{\ws|X,y}[f(X_v;\ws)]\| \le \sqrt{n_v}\epsilon$. Similarly, for the chosen value of $b$, we can give the high probability bound $Pr(|\hat{f}_v(X_v^{(i)}) - \E_{\ws|X,y}[f(X_v^{(i)};\ws)]|\ge \epsilon)\le \delta$, or alternatively, $Pr(\|\hat{f}_v(X_v) - \E_{\ws|X,y}[f(X_v;\ws)]\|\ge \sqrt{n_v}\epsilon)\le n_v\delta$ by a union bound. 
\end{proof}

Similar to the above result, we show below that Algorithm \ref{alg:lgd_class} outputs probabilities close to the true softmax of the Bayes' optimal logits for classification. As a reminder to the reader, we will denote the logits by smaller-case notation $f$, and the probabilities by $\smax(f)$ or the upper-case notation $F$.
\begin{theorem}\label{thm:lgd_conv_class}
    Assume a well-specified classification problem such that: 
    \begin{align*}
        Pr(y^{(i)}=j|X^{(i)},w) = \smax(f(X^{(i)},w))[j] \quad \text{
    and} \quad  Pr(y_v^{(i)}  = j|X_v^{(i)},w) = \smax(f(X_v^{(i)},w))[j].
    \end{align*}
    Further assume that the ground truth $\ws \sim \mcD_W$. Assume the loss function $l(y=j,f) = -\log(\smax(f)[j])$ such that $l(y=j,f)$ is convex in all entries of $f$. 
    Then the Bayes' optimal outputs $\hat{f}_v^{opt}$ satisfy:
    \begin{align*}
        Pr(y_v^{(i)}=j|X_v^{(i)},X,y) = \smax(\hat{f}_v^{opt}(X_v^{(i)}))[j]
    \end{align*}
    If the Assumptions \ref{ass:e_approx}-\ref{ass:f_lips_w} are satisfied, $\theta$ satisfies $r(w;\theta) = -\nabla_w \log\mcD_W(w)$, 
    then for $\eta = O(\epsilon^2/(\|f\|_{Lip}^2\dw))$ and within $B = O\left(\frac{\dw\|f\|_{Lip}^2}{\epsilon^2}\log\left(\frac{(\dw+\|w_{(0)}-w_{min}\|^2)\|f\|_{Lip}^2}{\epsilon^2}\right)\right)$,
    the LGD algorithm as given in Algorithm \ref{alg:lgd_class} outputs probabilities that are $\epsilon$-close in expectation to the exponent of Bayes' optimal logits for each entry of $X_v$:
    \begin{align*}
        \left|\E\left[\hat{F}_v(X_v^{(i)})[j]\right] - \smax(\hat{f}_v^{opt}(X_v^{(i)}))[j]\right| &\le \epsilon\; \forall i\in [n_v], j\in[m]\\
        \implies \sum_{i\in [n_v], j\in[m]}\left|\E\left[\hat{F}_v(X_v^{(i)})[j]\right] - \smax(\hat{f}_v^{opt}(X_v^{(i)}))[j]\right| &\le  mn_v\epsilon,
    \end{align*}
    where $w_{min}$ is the mode of the posterior $w_{min} = \argmax_w Pr(w|X,y)$. 
    Further, $b= O\left(\frac{\|f\|_{Lip}^2\log(1/\delta)}{\eta\epsilon^2}\right)$  is sufficient to guarantee, for a particular entry of $X_v$ denoted $X_v^{(i)}$:
    \begin{align*}
        Pr\left(\left|\hat{F}_v(X_v^{(i)})[j] -  \smax(\hat{f}_v^{opt}(X_v^{(i)}))[j]\right|\ge \epsilon\right)\le \delta.
    \end{align*}
    Alternatively,
    \begin{align*}
        Pr\left(\sum_{i\in [n_v], j\in[m]}\left|\hat{F}_v(X_v^{(i)})[j] -  \smax(\hat{f}_v^{opt}(X_v^{(i)}))[j]\right|\ge mn_v\epsilon\right)\le mn_v\delta.
    \end{align*}
\end{theorem}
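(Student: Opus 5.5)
The proof mirrors that of Theorem \ref{thm:lgd_conv}, in two stages: first identify the Bayes' optimal output, then recognize Step 1 of Algorithm \ref{alg:lgd_class} as a ULA on the posterior and invoke Proposition \ref{prop:bounds}.

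\textbf{Stage 1: the Bayes' optimal output.} Since $y_v$ and $y$ share the same ground truth $\ws$ and are conditionally independent given $\ws$, the predictive probability factors through the posterior:
\begin{align*}
Pr(y_v^{(i)}=j\mid X_v^{(i)},X,y) = \E_{\ws\mid X,y}\bigl[\smax(f(X_v^{(i)};\ws))[j]\bigr].
\end{align*}
Under the cross-entropy loss $l(y=j,f)=-\log(\smax(f)[j])$, the Bayes' risk conditional on $(X,y,X_v)$ equals $-\sum_j q_j\log p_j$, where $q$ is this predictive distribution and $p=\smax(\hat{f}_v)$. By Gibbs' inequality, and after reducing to deterministic estimators as in the proof of Theorem \ref{thm:lgd_conv}, this is minimized exactly when $p=q$. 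Hence any $\hat{f}_v^{opt}$ satisfies $\smax(\hat{f}_v^{opt}(X_v^{(i)}))[j] = \E_{\ws|X,y}[\smax(f(X_v^{(i)};\ws))[j]]$. One subtlety is that the piecewise-rational $\smax$ must be surjective onto the interior of the simplex; I would assume this, or else interpret $\hat{f}_v^{opt}$ through the probabilities it induces.

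\textbf{Stage 2: Step 1 is a ULA on the posterior.} We have $\nabla_w\log Pr(w\mid X,y) = -\sum_i J_w f(X^{(i)};w)^\intercal\nabla_{p'} l(y^{(i)},p') - \nabla_w\log \mcD_W(w)$, which by the choice of $\theta$ is exactly $-g_{(k)}$. The potential $U$ must satisfy Assumptions \ref{ass:grad_lip} and \ref{ass:strong_conv}:
\begin{itemize}
\item \emph{Gradient Lipschitzness} follows from Assumption \ref{ass:loss_lips} together with the Lipschitzness of $r$ in Assumption \ref{ass:reg_lips}.
\item \emph{Strong convexity} comes from convexity of the loss plus the positive definite Jacobian of $r$. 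This requires a uniform lower bound on its smallest eigenvalue, which I would read Assumption \ref{ass:reg_lips} as supplying; otherwise I would add it explicitly.
\end{itemize}
This is the main obstacle: the assumptions only give a positive definite Jacobian, so the constant $\alpha$ and hence $\kappa$ must be made uniform. I would also check that convexity of $l$ in the logits transfers to convexity in $w$. Assumption \ref{ass:loss_lips} states convexity in $w$ directly, so I would rely on that.

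\textbf{Stage 3: apply Proposition \ref{prop:bounds}.} Take the test function $g(w)=\smax(f(X_v^{(i)};w))[j]$. By Assumptions \ref{ass:e_approx} and \ref{ass:f_lips_w}, $\|g\|_{Lip}\le\|f\|_{Lip}$. Plugging this into Proposition \ref{prop:bounds} with the stated $\eta$, $B$ and $b$ gives two conclusions:
\begin{itemize}
\item a bias bound $|\E[\hat{F}_v(X_v^{(i)})[j]]-\pi(g)|\le\epsilon$;
\item a high-probability bound $|\hat{F}_v(X_v^{(i)})[j]-\pi(g)|\le\epsilon$ with probability at least $1-\delta$.
\end{itemize}
Here $\pi(g)=\smax(\hat{f}_v^{opt}(X_v^{(i)}))[j]$ by Stage 1. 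Summing the first bound over the $mn_v$ pairs $(i,j)$ gives the $mn_v\epsilon$ bound in expectation. A union bound over the same $mn_v$ pairs, applied to the second, gives the aggregate statement with failure probability $mn_v\delta$.
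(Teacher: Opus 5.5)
Your proposal is correct, and Stages 2 and 3 follow the paper's argument. Like the paper, you identify Step 1 of Algorithm \ref{alg:lgd_class} as a ULA on the posterior and apply Proposition \ref{prop:bounds} to $w\mapsto\smax(f(X_v^{(i)};w))[j]$, with Lipschitz constant $\|f\|_{Lip}$ from Assumptions \ref{ass:e_approx} and \ref{ass:f_lips_w}. You then use $\E_{w\mid X,y}[\smax(f(X_v^{(i)};w))[j]]=Pr(y_v^{(i)}=j\mid X_v^{(i)},X,y)$ and finish with a sum and a union bound over the $mn_v$ pairs. Two small remarks on these stages:
\begin{itemize}
\item In your displayed log-posterior gradient, the prior term should be $+\nabla_w\log\mcD_W(w)=-r(w;\theta)$. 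Your conclusion that the drift is $-g_{(k)}$ is unaffected.
\item Your concern about strong convexity is legitimate. A pointwise positive definite Jacobian in Assumption \ref{ass:reg_lips} only gives strict convexity, and the paper tacitly assumes a uniform $\alpha>0$ when it invokes Proposition \ref{prop:bounds}.
\end{itemize}

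The genuine difference is Stage 1. The paper argues as follows:
\begin{itemize}
\item it reduces to deterministic estimators by Jensen, using convexity of $l$ in $f$;
\item it interchanges gradient and expectation;
\item it shows $\nabla_f B_\pi$ vanishes at the candidate because $\sum_j\smax(f)[j]\equiv 1$;
\item it uses convexity to promote this stationary point to a minimizer.
\end{itemize}
Your Gibbs'-inequality argument instead works pointwise in $(X,y,X_v)$ on the conditional cross-entropy $-\sum_j q_j\log p_j$. This route needs no differentiation, which matters because a piecewise-rational $\smax$ need not be differentiable at its breakpoints. It also needs no interchange of limits and no convexity of $l$. You do not even need the reduction to deterministic estimators, since the entropy lower bound holds for every realization of the algorithm's noise. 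Finally, its equality case shows that every Bayes'-optimal output induces exactly the predictive probabilities, which is what the theorem literally asserts. The stationarity argument only shows that an output with this property is optimal.

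Both routes share the existence caveat you flag: some logit vector must realize $q$ under the approximate $\smax$, and the paper hedges this with ``if there is such a zero''. As you note, Stage 3 only uses the predictive probabilities themselves, so the convergence bounds hold regardless.
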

\begin{proof}
We first prove the Bayes' optimality of the estimator that satisfies:
\begin{align*}
    Pr(y_v^{(i)}=j|X_v^{(i)},X,y) = \smax(\hat{f}_v^{opt}(X_v^{(i)}))[j].
\end{align*}
For any estimator, $\hat{f}_v(\cdot)$ with randomness $\xi$, we write the Bayes' risk from Section \ref{sec:setting} as:
\begin{align*}
    B_\pi(\hat{f}_v) &=\E_{X,y,X_v,\xi}\left[\E_{w|X,y}\left[\E_{y_v|X_v,w}\left[l(y_v,\hat{f}_v(X_v,X,y;\xi))\right]\right]\right]\\
    &\ge \E_{X,y,X_v}[\E_{w|X,y}\left[\E_{y_v|X_v,w}\left[l(y_v,\E_\xi[\hat{f}_v(X_v,X,y;\xi)])\right]\right]], 
\end{align*}
by Jensen's inequality and convexity of $l$.
We can thus focus on deterministic estimators in this discussion.
We can again use convexity to note that 
the minimizer of $B_\pi(\hat{f}_v) = \E_{X,y,X_v}[\E_{y_v|X_v,X,y}[l(y,\hat{f}_v)]]$ is the zero of the gradient, $\nabla_f B_\pi(f)$, if there is such a zero. It is easy to see that for $\hat{f}_v$ that satisfies $Pr(y_v^{(i)}=j|X_v^{(i)},X,y) = \smax(\hat{f}_v^{opt}(X_v^{(i)}))[j]$,
$\nabla_f B_\pi(f)$ is the expected gradient of the score of $Pr(y_v|X_v,X,y)$, which is always zero. We show this below, where we first swap the expectation with the differentiation by noting the differentiability of $l(y_v=j,f)$ in $f$, and the derivative being an integrable function of $X,y,X_v$:
\begin{align*}
    \nabla_f B_\pi(f) &= \nabla_f \E_{X,y,X_v}\left[ \sum_{j\in[m]} l(y_v=j,f)Pr(y_v=j|X_v,X,y)\right]\\
    &= \E_{X,y,X_v}\left[ \nabla_f \left( \sum_{j\in[m]} l(y_v=j,f)Pr(y_v=j|X_v,X,y)\right)\right].
\end{align*}
We can now simplify the resulting expression as follows:
\begin{align*}
    \nabla_f B_\pi(f) &= \E_{X,y,X_v}\left[ \nabla_f \left( \sum_{j\in[m]} l(y_v=j,f)Pr(y_v=j|X_v,X,y)\right)\right]\\
    &= \E_{X,y,X_v}\left[\nabla_f \left(\sum_{j\in[m]} l(y_v=j,f)\exp(\log(Pr(y_v=j|X_v,X,y)))\right)\right]\\
    &= \E_{X,y,X_v}\left[\nabla_f \left(\sum_{j\in[m]} l(y_v=j,f)\exp(- l(y_v=j,\hat{f}_v))\right)\right]\\
    \implies \nabla_f B_\pi(f)|_{f=\hat{f}_v} &= \E_{X,y,X_v}\left[\nabla_f \left.\left(\sum_{j\in[m]} -\exp(- l(y_v=j,f))\right)\right|_{f=\hat{f}_v}\right] \\
    &= \E_{X,y,X_v}\left[\nabla_f \left(\sum_j -\smax(f)[j]\right)\right]= \E_{X,y,X_v}\left[\nabla_f (-1)\right] = 0.
\end{align*}

In the last step we use the fact that the codomain of $\smax$ is the probability simplex $\Sigma_m$ as defined at the start of Section \ref{sec:lgd}. This concludes the first part of the proof.

To show Bayes' optimality of our algorithm, we note similar to the proof of Theorem \ref{thm:lgd_conv} that the ULA in Step 1 of Algorithm \ref{alg:lgd_class} is a ULA on the posterior distribution. 
Now since $\smax$ is 1-Lipschitz from Assumption \ref{ass:e_approx}, we can say that $|\smax(f(X,w))[j] - \smax(f(X,w'))[j]| \le \|f(X,w)-f(X,w')\|\le \|f\|_{Lip}\|w-w'\| $.
Using results from Proposition \ref{prop:bounds}, we can say that for each element $X_v^{(i)}$ of $X_v$, and each class $j\in[m]$, $|\E[\hat{F}_v(X_v^{(i)})[j]] - \E_{w|X,y}[\smax(f(X_v^{(i)},w))[j]]|\le \epsilon $
for the given values of $B$ and $\eta$ by replacing the function $f(.)$ in the Proposition with $\smax(f(X_v^{(i)};.))[j]$. The convergence in expectation then follows by noting that $\smax(f(X_v^{(i)},w))[j] = Pr(y_v^{(i)}=j|X_v^{(i)},w) \implies \E_{w|X,y}[\smax(f(X_v^{(i)},w))[j]] = \E_{w|X,y}[Pr(y_v^{(i)}=j|X_v^{(i)},w)] = Pr(y_v^{(i)}=j|X_v^{(i)},X,y) $.

Similarly, for the chosen value of $b$, we can give the high probability bound, 
\begin{align*}
    Pr\left(\left|\hat{F}_v(X_v^{(i)})[j] -  \smax(\hat{f}_v^{opt}(X_v^{(i)}))[j]\right|\ge \epsilon\right)\le \delta.
\end{align*}
The desired results follow from a union bound.
\end{proof}

\subsection{Bayes Optimal Algorithms for General Loss Functions}
Results in the above Section establish closeness to the Bayes' optimal for the LGD algorithm under specific loss functions. In this Section, we briefly discuss extensions of these Algorithms for Bayes' optimality beyond the specific loss functions.

Theorem \ref{thm:lgd_conv} shows that LGD provides Bayes' optimal solutions for the squared loss if the problem is well-specified. We require two key facts to show this result: firstly, that the output distribution $y\sim \mcN(f(X;\ws), I_n)$ is Gaussian in the ground-truth model outputs, and the consolidation step in Step 2 of Algorithm \ref{alg:lgd} performs an empirical average. The Gaussian distribution implies that the squared loss is the negative log likelihood of the observed outputs, so that Step 1 of Algorithm \ref{alg:lgd} performs sampling from the posterior. The consolidation step computes the empirical mean, since the Bayes' optimal solution for the squared loss is the posterior mean. 

To consider an extension to a general loss function, we first need to revisit the well-specified assumption. Typical definitions of a well-specified regression problem \citep{offset_rad, goyal_25} only require that the expectation value of the output is represented by the model, $\E[y|X,w] = f(X;w)$. These works however, assume the learning objective is the squared loss. We propose the following definition of a well-specified regression problem that allows us to write the negative log-likelihood of the output as the loss function.

\begin{definition}[Well-Specified Regression Problem]
    A regression problem on the function $f(.;.):\R^{\dx}\times \R^{\dw} \rightarrow \R$ and a loss function $l(.,.):\R\times\R\rightarrow\R$ is well-specified if the following probability is well-defined (i.e., integrates to 1 over all $y\in\R$) and for parameters $w\in\R^{\dw}$:
    \begin{align*}
        Pr(y|X,w) = \exp(-l(y,f(X;w))).
    \end{align*}
\end{definition}

The above definition is a generalisation of the assumptions of Theorem \ref{thm:lgd_conv} such that the ULA in Step 1 of Algorithm \ref{alg:lgd} samples  $w$ from the posterior distribution $\mathcal{D}_{w|X,y}$. For a general loss function, the Bayes' optimal solution is not necessarily the posterior mean but the minimizer of $B_\pi(\hat{f}_v) = \E_{w|X,y}[\E_{y_v|X_v,w}[l(y_v, \hat{f}_v(X_v))]]$, which we may not have a closed form solution for. Given a sampling procedure to sample from $\mcD_{y_v|X_v,w}$, we can estimate $B_\pi(\hat{f}_v)$ for any prediction $\hat{f}_v$, and compute the minimizer using standard optimization techniques. Since it is non-trivial to have efficient sampling from both distributions $\mathcal{D}_{w|X,y}$ and $\mcD_{y_v|X_v,w}$ without reducing the problem to linear regression, we defer further discussion on regression problems to future work and restrict our attention to LGD as defined in Algorithm \ref{alg:lgd} in this paper.

For classification problems, efficient sampling from $\mcD_{y_v|X_v,w}$ is simpler because of the finite number of classes available. In fact, the discussion of Algorithm \ref{alg:lgd_class} and Theorem \ref{thm:lgd_conv_class} is fairly general, and does not use any properties of $\smax(.)$ other than piecewise polynomial implementation and 1-Lipschitzness. For a loss function different from the one assumed in Theorem \ref{thm:lgd_conv_class}, we can simply replace the function $\smax(.)$ by any function that is 1-Lipschitz and the guarantees still hold true. It is common to assume $Pr(y=j|x,w) = \frac{\exp(f(x;w)[j])}{\sum_{j'} \exp(f(x;w)[j'])}$ for classification problems, which does not satisfy the piecewise polynomial assumption made previously. However, as noted previously, hardware implementations of $\exp(.)$ are always piecewise-polynomial. Thus, the reader suffers from an approximation error in implementing their assumption on hardware, but none from our proposed algorithm.

\section{Generalization Guarantees}\label{sec:guarantees}
In this Section we show generalization guarantees on $\phi_{ERM}$, which are the combined set of hyperparameters $\{\theta,\sqrt{\eta}\}$ that minimize the meta learning loss as we note in Section \ref{sec:setting}. While there are several empirical approaches to find the minimizer of the meta learning loss as mentioned in Section \ref{sec:rw} such as the hypergradient or metagradient methods \citep{hyperparam_grad_dougal15, engstrom2025optimizingmltrainingmetagradient}, we assume access to the ERM solution directly. We present some empirical evidence using gradient-based methods for finding ERM hyperparameters in Section \ref{sec:expts}. In this Section, we first discuss bounds on generalization for regression and then show analogous results for classification.

\subsection{Hyperparameter Generalization for Regression}

The following Theorem establishes the Pseudo-dimension of the validation loss function class for regression over all hyperparameters $\theta$ and learning rates $\eta$, using the Goldberg-Jerrum framework from Section \ref{sec:bckg}. The validation loss function class is denoted as $\mathcal{L}_v$ below, which is the set of functions identified by the hyperparameters $\theta, \sqrt{\eta}$. that take as input the problem instance (comprising of training and validation data for a task) and the Langevin noise to be used during LGD, and outputs the obtained validation loss on this problem instance.

\begin{theorem}\label{thm:pdim_comp}
    In Algorithm \ref{alg:lgd},
    assume that all entries of $r(.;.)$ can be computed using a GJ algorithm with degree $\Delta_R$ and predicate complexity $\Lambda_R$ in its inputs $w_{(k)}$ and $\theta$. Similarly, assume the regression function $f(.;.)$ is differentiable and can be computed using a GJ algorithm with degree $\Delta_F$ and predicate complexity $\Lambda_F$ in its inputs. Finally, assume that the loss function $l(.,.)$ can be computed using a GJ algorithm with degree $\Delta_L$ and predicate complexity $\Lambda_L$.
    Define the function class 
    $\mcL_v = \left\{l_v(\{\theta, \sqrt{\eta}\},., .):\R^{(B+b)\times d}\times \mathcal{P}\rightarrow \R| \theta\in\R^{\dhp}\right\}$, 
    as the class of validation loss functions for given hyperparameter values for Algorithm \ref{alg:lgd}. The pseudo-dimension of $\mcL_v$ satisfies:
    \begin{align*}
        Pdim(\mcL_v) =  O\left((B+b)\dhp\log(\Delta_R + \Delta_F\Delta_L) + \dhp log((B+b)((n+n_v)(\Lambda_F + \Lambda_L) + \Lambda_R))\right).
    \end{align*}
\end{theorem}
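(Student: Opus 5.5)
The plan is to apply Lemma \ref{lem:gj_pdim} directly. For a fixed problem instance $(X,y,X_v,y_v)$, fixed Langevin noise $\xi\in\R^{(B+b)\times d}$ and threshold $r\in\R$, we build a GJ algorithm $\Gamma_{x,r}$ that takes the hyperparameters $\theta\in\R^{\dhp}$ and $s=\sqrt{\eta}$ as inputs and decides whether $l_v(\{\theta,s\},\xi,(X,y,X_v,y_v))\ge r$. The class $\mcL_v$ is parametrized by $\dhp+1=O(\dhp)$ real numbers, so the lemma gives $Pdim(\mcL_v)=O(\dhp\log(\Delta\Lambda))$. What remains is to bound the degree $\Delta$ and the predicate complexity $\Lambda$ of $\Gamma_{x,r}$ in terms of $(B+b)$, $n+n_v$ and the component complexities. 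We parametrize by $\sqrt{\eta}$ rather than $\eta$ so that the noise term $\sqrt{2\eta}\,\xi_{(k)}=\sqrt2\, s\,\xi_{(k)}$ is a polynomial, of degree one, in the inputs. The update $w_{(k+1)}=w_{(k)}-s^2 g_{(k)}+\sqrt2 s\xi_{(k)}$ then uses only GJ operations.

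We unroll Algorithm \ref{alg:lgd} step by step and track, by induction on $k$, the maximal degree $D_k$ of the entries of $w_{(k)}$ as rational functions of $(\theta,s)$, and the number of distinct predicates used so far. In iteration $k$ we compute the following.
\begin{itemize}
\item $f(X^{(i)};w_{(k)})$ for each of the $n$ training rows, together with its gradient in $w$. We treat the gradient as computable at the same GJ complexity, or absorb it into $\Delta_F,\Lambda_F$ by the differentiability assumption.
\item $\partial_{y'}l$, composed on top of $f$.
\item $r(w_{(k)};\theta)$.
\end{itemize}
Composing GJ algorithms multiplies degrees. The loss-gradient term therefore has degree at most $\Delta_F\Delta_L D_k$, and the regularizer term has degree at most $\Delta_R\max(D_k,1)$. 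The $s^2$ factor adds only a constant. This gives a recursion $D_{k+1}\le c(\Delta_R+\Delta_F\Delta_L)D_k$, hence $D_{B+b}\le (c(\Delta_R+\Delta_F\Delta_L))^{B+b}$.

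For the predicates, each iteration introduces at most $n\Lambda_F+n\Lambda_L+\Lambda_R$ new distinct rational functions in conditionals (possibly up to a factor $d$ absorbed in the logs). Step 2 and the validation loss add $n_v(\Lambda_F+\Lambda_L)$ predicates, each of degree at most $\Delta_F\Delta_L D_{B+b}$. The final comparison with $r$ adds one more predicate. So $\Lambda=O((B+b)((n+n_v)(\Lambda_F+\Lambda_L)+\Lambda_R))$ and $\log\Delta=O((B+b)\log(\Delta_R+\Delta_F\Delta_L))$. Substituting into $O(\dhp\log\Delta+\dhp\log\Lambda)$ gives the claimed bound.

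The main obstacle is making the degree recursion rigorous. The concern is that the degree of $w_{(k)}$ does not blow up beyond a single multiplicative factor per step. This requires care in three places.
\begin{itemize}
\item Division: we track rational functions as numerator/denominator pairs, and summing terms over the $n$ data points can add degrees rather than multiply them. I would handle this by bounding the degrees of numerator and denominator separately, so a sum of $n$ rational terms contributes only a factor polynomial in $n$, which lands inside the $\log$ term.
\item Branching: predicates evaluated on different branches are counted as distinct. I would bound the count by the total number of conditionals encountered along any execution path, following the treatment in \citet{bartlett_linalg_22}.
\end{itemize}
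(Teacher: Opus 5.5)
Your plan is the paper's proof. You unroll Algorithm \ref{alg:lgd} into a GJ algorithm in $(\theta,\sqrt{\eta})$ and derive a multiplicative degree recursion per iteration; the paper's per-step factor $2\Delta_F(\Delta_L+1)+\Delta_R+2$ agrees with your $c(\Delta_R+\Delta_F\Delta_L)$ after taking logarithms. You count predicates additively as $(B+b)(n(2\Lambda_F+\Lambda_L)+\Lambda_R)+n_v(\Lambda_F+\Lambda_L)+1$ and invoke Lemma \ref{lem:gj_pdim}.

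The step that fails as written is your fix for division. The sum over the $n$ training points sits inside the loop, and its output $w_{(k+1)}$ is fed back into $f$ and $r$ at the next iteration. So if summing $n$ rational terms with different denominators inflates the degree bound by a factor $\mathrm{poly}(n)$, that factor is paid at each of the $B+b$ iterations. This gives $\Delta\le(\mathrm{poly}(n)(\Delta_R+\Delta_F\Delta_L))^{O(B+b)}$ and hence a term $(B+b)\dhp\log(n(\Delta_R+\Delta_F\Delta_L))$. That is weaker than the statement, where $\log n$ enters only through the additive $\dhp\log((B+b)(n+n_v)\cdots)$ term, so the factor does not harmlessly ``land inside the log''.

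The paper reaches the stated bound with polynomial degree bookkeeping: a sum takes the maximum degree and a product adds degrees (``over $n$ entries, we get a degree of $2\Delta_L$'', and similarly for $g_{(k)}$). That is legitimate when the GJ routines for $f$, $\partial_{y'}l$ and $r$ are division-free on each branch (piecewise polynomial, matching the paper's hardware motivation), or more generally when all summands share a common denominator. You should adopt such a convention explicitly rather than the $\mathrm{poly}(n)$ argument.

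On branching, your final count is exactly the paper's additive per-iteration count, which the paper likewise uses without further argument, so there you match it. Your two sentences are in tension, however. Lemma \ref{lem:gj_pdim} needs the number of distinct rational functions over all branches. A count along a single execution path bounds this only if the conditionals in iteration $k$ do not depend on $(\theta,\sqrt{\eta})$ through branches taken earlier. Otherwise the count can grow exponentially in $B+b$. For instance, $T$ iterations of the doubling map $w\mapsto 2w \bmod 1$ (one degree-$1$ conditional per step) let a single real parameter pseudo-shatter $T$ points, whereas a path-wise count of about $T$ predicates would give $O(\log T)$. So appealing to a path-wise rule does not by itself justify the additive count when $f$, $l$ or $r$ branch on quantities that depend on the hyperparameters.
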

\begin{proof}
    In the following, we give a GJ algorithm corresponding to the function class $\mcL_v$. For a task $P\in\mathcal{P}$ and Langevin noise $\xi\in\R^{(B+b)\times d}$, the inputs to the GJ algorithm $\Gamma_{(\xi, P),z}$ are 
    $\{\theta, \sqrt{\eta}\}$, 
    and we want it to compute whether 
    $l_v(\{\theta, \sqrt{\eta}\}, \xi,P)\ge z$ for $z\in\R$. 
    To do so, we first compute the predicted $\hw^t$ for each task $t$ following the GJ algorithms for computing $f$ and $r$ respectively and Algorithm \ref{alg:lgd} which only performs arithmetic operations on these values. The update step for $w_{(k)}$ can be analyzed as follows:
    \begin{enumerate}
        \item \textbf{$r(w_{(k)};\theta)$}: Predicate complexity $\Lambda_R$ and degree $\Delta_R$ in the inputs $\theta$ and in $w_{(k)}$.
        \item \textbf{$\nabla_{y'}l(y,y'= f(X;w_{(k)}))$}: We compute this as the derivative of the final rational functions in the computation of $l(y,y'= f(X;w_{(k)}))$. Thus, for each element in the inputs $y$ and $f(X;w_{(k)})$, we would get a degree of $2\Delta_L$ and predicate complexity of $\Lambda_L$. Over $n$ entries, we get a degree of $2\Delta_L$ and predicate complexity of $n\Lambda_L$. 
        The input $f(X;w_{(k)})$ has degree $\Delta_F$ and predicate complexity $\Lambda_F$ in each of its inputs from given assumptions, resulting in an overall degree $\Delta_F$ and predicate complexity $n\Lambda_F$. Thus, this term has an overall degree of $2\Delta_F\Delta_L$ and predicate complexity $n\Lambda_L+n\Lambda_F$ in the inputs $X,y,w_{(k)}$.
        \item \textbf{$\nabla_w f(X;w_{(k)})$}: We compute this as the derivative of the final rational functions in the computation of $f(X;w_{(k)})$.
        Thus for each entry in $X$, it has predicate complexity $\Lambda_F$ and degree at most $2\Delta_F$. Over all entries in the input, we have predicate complexity of $n\Lambda_F$ and degree $2\Delta_F$.
        \item \textbf{$g_{(k)}$}: Combining the above steps, we get that this term has a degree of $2\Delta_F\Delta_L + 2\Delta_F + \Delta_R$ and a predicate complexity of $n(2\Lambda_F+\Lambda_L) + \Lambda_R$.
        \item \textbf{$w_{(k+1)}$}: Since $\eta$ has degree 2 in $\sqrt{\eta}$, this term has a degree of $2\Delta_F\Delta_L + 2\Delta_F + \Delta_R + 2$ and a predicate complexity of $n(2\Lambda_F+\Lambda_L) + \Lambda_R$ in terms of $w_{(k)}$, $\theta$ and $\sqrt{\eta}$.
    \end{enumerate}

    Now, we are interested in computing the degree and predicate complexity of $w_{(k + 1)}$ in terms of $\theta, \sqrt{\eta}$,
    $w_{(0)}$, 
    instead of $w_{(k)}$. Solving for this recurrence relation we can conclude that $w_{(k + 1)}$ has a degree of $(2\Delta_F(\Delta_L+1)+\Delta_R+2)^{(k+1)}$
    in the inputs $\theta, \sqrt{\eta}$.
    Since each step of the algorithm uses $n(2\Lambda_F+\Lambda_L) + \Lambda_R$
    rationals in conditional statements, the overall algorithm will need $(k+1)(n(2\Lambda_F+\Lambda_L) + \Lambda_R)$
    rationals in conditional statements. 

    We can thus conclude that $f(X_v;w_{(k)})$ has a predicate complexity of $k(n(2\Lambda_F+\Lambda_L) + \Lambda_R) + n_v\Lambda_F$ and degree $\Delta_F(2\Delta_F(\Delta_L+1)+\Delta_R+2)^{k}$. 
    Thus computing $\hat{f}_v$ has a predicate complexity of  $(B + b)(n(2\Lambda_F+\Lambda_L) + \Lambda_R) + n_v\Lambda_F$ and a degree of $\Delta_F(2\Delta_F(\Delta_L+1)+\Delta_R+2)^{B+b}$.
    Finally, the validation loss has degree $\Delta_L$ and predicate complexity $\Lambda_L$ for each entry of $\hat{f}_v$. Thus it has a degree of $\Delta_F\Delta_L(2\Delta_F(\Delta_L+1)+\Delta_R+2)^{B+b}$ in the algorithm's inputs and a predicate complexity of $(B + b)(n(2\Lambda_F+\Lambda_L) + \Lambda_R) + n_v\Lambda_F + n_v\Lambda_L$. 
    Computing $l_v(\{\theta, \sqrt{\eta}\}, \xi,P)\ge r$
    takes one more conditional query. We can thus conclude from Lemma \ref{lem:gj_pdim} that the pseudo-dimension is given as follows:
    \begin{align*}
        Pdim(\mcL_v) &= O(\dhp\log\left(\left[\Delta_F\Delta_L(2\Delta_F(\Delta_L+1)+\Delta_R+2)^{B+b}\right] \left[(B + b)(n(2\Lambda_F+\Lambda_L) + \Lambda_R) + n_v\Lambda_F + n_v\Lambda_L \right] \right)\\
        &= O\left((B+b)\dhp\log(\Delta_R + \Delta_F\Delta_L) + \dhp log((B+b)((n+n_v)(\Lambda_F + \Lambda_L) + \Lambda_R)\right).  
    \end{align*}
\end{proof}

Prior work on learning hyperparameters for linear regression using the elastic net \citep{balcan_23} give a pseudo-dimension upper bound of $O(d)$, where they consider perfect optimization and only $h=2$ hyperparameters. As we see in Theorem \ref{thm:erm_bayes_conv}, we obtain pseudo-dimension bounds of $O(dh\log(d(n+n_v))$ based on typical values of $B,b$, without assuming perfect optimization for the much more general case of regression with a convex loss. That is, we match the bounds of prior work, up to logarithmic terms, for a much more general setting while accounting for optimization errors as well.

In order to show our generalization guarantees, we make the following assumption on the boundedness of the loss function $l(y_p,y_t)$ similar to \citet{balcan_23, goyal_25}, valid over all $(X,y,X_v,y_v)\in\mathcal{P}$ and $x_v^{(i)},y_v^{(i)}\in X_v,y_v$, and for all possible estimators $\hat{f}_v$: 
\begin{assumption}\label{assum:bound}
    $l(\hat{f}_v(x_v^{(i)},X,y),y_v^{(i)})\in[0,C]$.
\end{assumption}

Below we give generalization guarantees on $\phi_{ERM}(\xi)$.

\begin{theorem}\label{thm:gen_guarantee}
Assume there exist GJ algorithms to compute the functions $l(\cdot,\cdot)$, $r(\cdot;\cdot)$,$f(\cdot;\cdot)$ with a dimension-independent degree and predicate complexity. Given Assumption \ref{assum:bound} holds, $T = O\left(\frac{C^2}{\epsilon^2}\left(Pdim(\mcL_v)+\log(1/\delta)\right)\right)$ tasks are enough to ensure that with probability $\ge 1-\delta$,
    \begin{align*}
        l_v(\phi_{ERM}(\xi)) - l_v(\phi^*) &\le \epsilon.
    \end{align*}
    Here we denote the combined set of hyperparameters $\{\theta, \sqrt{\eta}\}$ by $\phi$.
\end{theorem}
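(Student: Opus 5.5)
This is the standard ERM-plus-uniform-convergence argument. I will apply Theorem \ref{thm:pdim_generalization} to the class $\mcL_v$ from Theorem \ref{thm:pdim_comp}, and then use the usual three-term decomposition of the excess risk.

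\textbf{Setting up the class.} Each sample is a pair $z^t=(\xi^t,P^t)$, with $P^t\sim\mcD$ and $\xi^t\sim\mcD_E$ drawn independently across tasks. The pairs are therefore i.i.d. draws from $\mcD\times\mcD_E$. Each $g_\phi\in\mcL_v$ maps $z$ to $\lv(\phi,\xi,P)$.

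By Assumption \ref{assum:bound}, every per-sample loss lies in $[0,C]$, so the averages $\lv(\phi,\xi,P)$ also lie in $[0,C]$. The GJ assumptions of the present theorem allow Theorem \ref{thm:pdim_comp} to apply, which gives $Pdim(\mcL_v)<\infty$. Theorem \ref{thm:pdim_generalization} with accuracy $\epsilon/2$ then shows that $T=O\big(\frac{C^2}{\epsilon^2}(Pdim(\mcL_v)+\log(1/\delta))\big)$ tasks suffice. With probability $\ge1-\delta$,
\begin{align*}
\sup_{\phi}\left|\lv(\phi,\xi,S)-\lv(\phi)\right|\le \epsilon/2 ,
\end{align*}
because $\E_{z}[g_\phi(z)]=\lv(\phi)$ by the definition of the expected validation error.

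\textbf{Decomposition.} On this event, I write
\begin{align*}
\lv(\phi_{ERM}(\xi))-\lv(\phi^*) = \big[\lv(\phi_{ERM})-\lv(\phi_{ERM},\xi,S)\big]+\big[\lv(\phi_{ERM},\xi,S)-\lv(\phi^*,\xi,S)\big]+\big[\lv(\phi^*,\xi,S)-\lv(\phi^*)\big].
\end{align*}
The first and third brackets are each at most $\epsilon/2$ by uniform convergence, and the third uses only that $\phi^*$ is a fixed element of the class. The middle bracket is $\le 0$ because $\phi_{ERM}(\xi)$ minimizes the empirical loss. Summing gives excess risk at most $\epsilon$.

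\textbf{Main subtlety.} $\phi_{ERM}$ depends on both $S$ and $\xi$, so a fixed-function concentration bound such as Hoeffding's inequality (Theorem \ref{thm:Hoeffding}) does not apply to it directly. This is exactly why uniform convergence over $\mcL_v$ is needed, and why the noise $\xi$ must be included as part of the input rather than fixed. Theorem \ref{thm:pdim_comp} is stated with $\xi$ as an input, so this is already handled.

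If the minimizers are not attained, I will replace $\phi_{ERM}$ and $\phi^*$ by $\epsilon/4$-approximate minimizers and rescale the constants. The remark about dimension-independent degree and predicate complexity only serves to make $Pdim(\mcL_v)$ concrete, namely $O((B+b)h\log(\cdot))$. It does not affect the argument.
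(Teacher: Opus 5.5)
Your argument is correct, and it takes a shorter route than the paper's.

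\textbf{The paper's route.} The paper introduces the noise-conditional risk $l_v(\phi,\xi)=\E_S[l_v(\phi,\xi,S)]$ and its minimizer $\phi^*(\xi)$, then telescopes through five terms. It discards the two nonpositive ones, $l_v(\phi_{ERM}(\xi),\xi,S)-l_v(\phi^*(\xi),\xi,S)$ and $l_v(\phi^*(\xi),\xi)-l_v(\phi^*,\xi)$. It bounds $l_v(\phi_{ERM}(\xi))-l_v(\phi_{ERM}(\xi),\xi,S)$ by the same uniform convergence you use. It then handles the remaining two terms with separate Hoeffding arguments:
\begin{itemize}
\item $l_v(\phi^*(\xi),\xi,S)-l_v(\phi^*(\xi),\xi)$, over the task draw conditional on $\xi$;
\item $l_v(\phi^*,\xi)-l_v(\phi^*)$, over the noise draw.
\end{itemize}
It finishes with a union bound at $\epsilon/3,\delta/3$.

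\textbf{Your route.} You observe that the uniform-convergence event already controls the fixed hypothesis $\phi^*$. So the standard three-term decomposition closes on a single high-probability event, with no intermediate $\phi^*(\xi)$ and no extra Hoeffding steps. The resulting $T$ is of the same order.

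\textbf{What each buys.} The paper's longer decomposition isolates the Langevin noise in a single-hypothesis term $l_v(\phi^*,\xi)-l_v(\phi^*)$. The subsequent remark uses exactly this term to replace Hoeffding with Bernstein, using the variance $V$ of LGD's predictions. Since the uniform-convergence term still scales as $C^2/\epsilon^2$, this refinement does not improve the order, so your version loses nothing.

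Both proofs rely on the same implicit modeling assumption: the pairs $(\xi^t,S^t)$ are i.i.d.\ draws from the product of the task and noise distributions. The paper uses this without stating it; you state it explicitly.
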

\begin{proof}
We begin by defining the expected validation error using fixed randomness $\xi$ as follows, where $\xi^t$ and $S^t$ represent the noise used in Algorithm \ref{alg:lgd} of the $t$th task and the $t$th task of $S$ respectively, consistent with our notation of using superscripts to denote task numbers:
\begin{align*}
    l_v(\phi, \xi) &= \E_{S}\left[\frac{1}{T}\sum_t l_v(\phi, \xi^t, S^t)\right] = \frac{1}{T} \sum_t \E_P[l_v(\phi, \xi^t, P)]= \E_{S}[l_v(\phi, \xi, S)].
\end{align*}
The second equation above follows from the i.i.d. assumption of tasks within each problem set.
We can now decompose the target error term as follows:
    \begin{align*}
        l_v(\phi_{ERM}(\xi)) - l_v(\phi^*) &\le l_v(\phi_{ERM}(\xi)) - l_v(\phi_{ERM}(\xi),\xi,S) + l_v(\phi_{ERM}(\xi),\xi,S) - l_v(\phi^*(\xi),\xi,S)\\
        &\quad + l_v(\phi^*(\xi),\xi,S) - l_v(\phi^*(\xi), \xi)  + l_v(\phi^*(\xi),\xi) - l_v(\phi^*, \xi)  + l_v(\phi^*, \xi) -l_v(\phi^*).
    \end{align*}
    Since by definition $l_v(\phi^*(\xi),\xi) - l_v(\phi^*, \xi) \le 0$ and $l_v(\phi_{ERM}(\xi),\xi,S) - l_v(\phi^*(\xi),\xi,S) \le 0$, we can simplify the above expression as follows:
    \begin{align}
            l_v(\phi_{ERM}(\xi)) - l_v(\phi^*) 
            &\le l_v(\phi_{ERM}(\xi)) - l_v(\phi_{ERM}(\xi),\xi,S)  + l_v(\phi^*(\xi),\xi,S) - l_v(\phi^*(\xi),\xi)\nonumber\\
            &\quad + l_v(\phi^*, \xi) -l_v(\phi^*).\label{eq:err_decomp}
        \end{align}
    
    \noindent We bound the first term using standard pseudo-dimension theory. Notably, $l_v(\phi_{ERM}(\xi)) - l_v(\phi_{ERM}(\xi),\xi,S) \le \sup_\phi |l_v(\phi) - l_v(\phi,\xi,S)| \le \epsilon$ for $T = O\left(\frac{C^2}{\epsilon^2}\left(Pdim(\mcL_v)+\log(1/\delta)\right)\right)$ using uniform convergence bounds from Theorem \ref{thm:pdim_generalization}. 

    For the second term in Equation \ref{eq:err_decomp}, we use Assumption \ref{assum:bound} and Hoeffding's inequality \ref{thm:Hoeffding} to note that $T = O\left(\frac{C^2}{2\epsilon^2}\log(1/\delta)\right)$ are enough to ensure with probability $\ge 1-\delta$ over the sampling of tasks in $S$, that $l_v(\phi^*(\xi),\xi,S) - l_v(\phi^*(\xi),\xi) \le \epsilon$.
    Similarly for the last term, the same number of tasks are enough to ensure with probability $\ge 1-\delta$ over the sampling of noise $\xi$ for all tasks, that $l_v(\phi^*, \xi) -l_v(\phi^*) \le \epsilon$.
    We recover the desired bounds by combining the above inequalities and replacing $\delta$ with $\delta/3$ and $\epsilon$ with $\epsilon/3$ and absorbing constants into the $O(.)$ notation.
\end{proof}

\begin{remark}
We note that the above bounds can be tightened up to constant factors by accounting for the variance of algorithm \ref{alg:lgd} which decreases with $B,b$ as seen from Proposition \ref{prop:bounds}. Denote $Var_\xi(\hat{f}_v(x_v,X,y)) = V$ as the variance of predictions of any validation input $x_v$. Assume that the loss function $l(.,.)$ is Lipschitz in the first term with Lipschitz constant $L_l$ so that the variance of the loss function  $Var_\xi(l(\hat{f}_v(x_v,X,y),y_v)) = L_l^2V$. Thus, $Var_\xi(l_v(\theta,\xi,P)) \le L_l^2V$, for all $\theta$ and $P$ since $l_v(\theta,\xi,P)$ is an average over $n_v$ validation examples, which are not necessarily i.i.d. Thus $T=O\left(\frac{L_l^2(V+C\epsilon)}{\epsilon^2}\log(1/\delta)\right)$ are enough to ensure $l_v(\phi^*, \xi) -l_v(\phi^*) \le \epsilon$ using Bernstein's inequality (Theorem \ref{thm:Bernstein}).
It may be possible to further tighten this analysis to get a $O(1/(n_v\epsilon^2))$ bound on $T$ by assuming i.i.d. for the validation examples and keeping track of the correlation induced by a common noise vector $\xi$ and common training data between predictions of different validation inputs. However since the dominant terms on $T$ in the bound of Theorem \ref{thm:gen_guarantee} remain $O(C^2/\epsilon^2)$, we defer such a discussion to future work.
\end{remark}

We are now able to combine the bounds of Theorems \ref{thm:lgd_conv} and \ref{thm:gen_guarantee} to give a bound on the convergence of the ERM estimator to the Bayes optimal estimator as given below. 

\begin{theorem}\label{thm:erm_bayes_conv}
    Assume the setting of Theorem \ref{thm:lgd_conv} where we assume Assumptions \ref{ass:loss_lips}-\ref{assum:bound} hold and a well-specified regression problem such that there exists a $\ws$ for which $y \sim \mcN(f(X;\ws), I_n)$ and $y_v \sim \mcN(f(X_v;\ws),I_{n_v})$. That is, there exists a ground truth $\ws$ for which the problem is exactly solvable up to Gaussian noise. Further assume that the ground truth $\ws \sim \mcD_W$, we use the squared loss, $l(y,y') = \|y-y'\|^2$, and there exist GJ algorithms to compute the functions $r(\cdot;\cdot)$,$f(\cdot;\cdot)$ with a dimension-independent degree and predicate complexity. Denote $\zeta = 2\sqrt{C}\frac{\|f\|_{Lip}}{\epsilon_2}$. If we learn ERM hyperparameters of Algorithm \ref{alg:lgd} with $B$ within $O\left(\dw\zeta^2\log\left((\dw+\|w_{(0)}-w_{min}\|^2) \zeta^2\right)\right)$ and $b$ within $O\left(\dw\log(2n_v/\delta)\zeta^4\right)$,
    then
    \begin{align*}
        T=O\left(\frac{C^2}{\epsilon_1^2}dh\log(d(n+n_v))\left(\zeta^2\log\left(\zeta\right) +  \zeta^4\log(2/\delta)\right)\right)
    \end{align*}
    tasks are enough to ensure that, with probability $\ge 1-\delta$,
    \begin{align*}
        B_\pi(\hat{f}_{ERM,\xi}) - B_\pi(\hat{f}_v^{opt}) \le \epsilon_1 + \sqrt{n_v}\epsilon_2.
    \end{align*}
\end{theorem}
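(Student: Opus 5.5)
We split the excess Bayes risk through the LGD algorithm run at the prior-matching hyperparameters. Let $\phi_\pi=\{\theta_\pi,\sqrt{\eta}\}$, where $r(w;\theta_\pi)=-\nabla_w\log\mcD_W(w)$ and $\eta$ is the step size of Theorem \ref{thm:lgd_conv} with accuracy $\epsilon_2/(2\sqrt{C})$. Since $B_\pi(\hat{f}_{ERM,\xi})=l_v(\phi_{ERM}(\xi))$, we write
\begin{align*}
B_\pi(\hat{f}_{ERM,\xi}) - B_\pi(\hat{f}_v^{opt}) = \big[l_v(\phi_{ERM}(\xi)) - l_v(\phi^*)\big] + \big[l_v(\phi^*) - l_v(\phi_\pi)\big] + \big[l_v(\phi_\pi) - B_\pi(\hat{f}_v^{opt})\big].
\end{align*}
The middle bracket is $\le 0$ because $\phi^*$ minimizes $l_v$. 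The first bracket is at most $\epsilon_1$ with probability $\ge 1-\delta$ by Theorem \ref{thm:gen_guarantee}, using $T=O\big(\frac{C^2}{\epsilon_1^2}(Pdim(\mcL_v)+\log(1/\delta))\big)$ tasks. It therefore remains to bound the third bracket by $\sqrt{n_v}\epsilon_2$ and to evaluate $Pdim(\mcL_v)$ at the stated $B,b$.

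For the third bracket we use the decomposition from the proof of Theorem \ref{thm:lgd_conv}. For any estimator, $B_\pi(\hat{f}_v)$ equals the noise term plus $\E\,\E_{w|X,y}\|\hat{f}_v - f(X_v;w)\|^2$. Subtracting the same expression for $\hat{f}_v^{opt}=\E_{w|X,y}[f(X_v;w)]$ leaves only the cross-term-free quantity $\E_{X,y,X_v,\xi}\|\hat{f}_v-\hat{f}_v^{opt}\|^2$, i.e. the mean squared error of LGD relative to the posterior mean. Assumption \ref{assum:bound} controls each entry of the discrepancy: $|\hat{f}_v^{(i)}-\hat{f}_v^{opt,(i)}|\le 2\sqrt{C}$, since both predictions lie within $\sqrt{C}$ of $y_v^{(i)}$. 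We then apply the high-probability part of Theorem \ref{thm:lgd_conv} with accuracy $\epsilon'=\epsilon_2/(2\sqrt{C})$, which makes $\|f\|_{Lip}/\epsilon'=\zeta$. This gives the stated orders of $\eta$ and $B$, and $b=O(\|f\|_{Lip}^2\log(2n_v/\delta')/(\eta\epsilon'^2))=O(\dw\zeta^4\log(2n_v/\delta'))$.

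Rather than using a union bound inside the expectation, it is cleaner to bound the expectation directly. Write $\E\|\hat f_v-\hat f_v^{opt}\|^2 = \|\text{bias}\|^2 + \text{variance}$, and use both conclusions of Proposition \ref{prop:bounds}: the bias is at most $\epsilon'$ per entry, and the variance is at most $\epsilon'^2/(8\log(1/\delta'))$ per entry. This yields $O(n_v\epsilon'^2)$, which is $\le n_v\epsilon_2^2$ after rescaling constants. The theorem's bound is linear, $\sqrt{n_v}\epsilon_2$, so we need to convert carefully. The route I would take is to bound the risk gap by the $2\sqrt{C}$-weighted $\ell_1$-type quantity $2\sqrt{C}\,\E\sum_i|\hat f_v^{(i)}-\hat f_v^{opt,(i)}|$, using $a^2\le 2\sqrt{C}|a|$ for $|a|\le 2\sqrt{C}$. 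Applying Cauchy–Schwarz and the per-entry $\epsilon'$ accuracy then gives $\sqrt{n_v}\cdot 2\sqrt{C}\epsilon'=\sqrt{n_v}\epsilon_2$. The $\delta$-failure event contributes at most $\delta\cdot 4Cn_v$, which is absorbed by choosing $\delta'=\delta/(2n_v)$; this choice is what produces the $\log(2n_v/\delta)$ in $b$.

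Finally, we substitute into Theorem \ref{thm:pdim_comp}. With dimension-independent GJ degrees, $Pdim(\mcL_v)=O\big((B+b)\dhp+\dhp\log((B+b)(n+n_v))\big)$. Here $B+b=O\big(\dw\zeta^2\log((\dw+\|w_{(0)}-w_{min}\|^2)\zeta^2)+\dw\zeta^4\log(2n_v/\delta)\big)$, and the logarithmic factors combine to $\log(\dw(n+n_v))$, giving the stated $T$.

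The main obstacle is the third bracket: passing from Theorem \ref{thm:lgd_conv}'s guarantee (in expectation and with high probability, per entry) to a clean bound on the squared-loss risk gap that is linear in $\sqrt{n_v}\epsilon_2$. I would first try the boundedness/Cauchy–Schwarz argument above and let the constants in $\zeta$ absorb the failure event. The bookkeeping of logarithms in the pseudo-dimension is routine.
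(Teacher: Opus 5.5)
Your decomposition, your treatment of the first two brackets, and your pseudo-dimension substitution match the paper's proof. The gap is in the third bracket, at exactly the step you flagged.

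Your identity $l_v(\phi_\pi)-B_\pi(\hat f_v^{opt})=\E\|a\|_2^2$ with $a=\hat f_v-\hat f_v^{opt}$ is correct. So is the bias--variance bound $\E\|a\|_2^2=O(n_v\epsilon'^2)$ from Proposition \ref{prop:bounds}. But the route you commit to does not give $\sqrt{n_v}\epsilon_2$. With only per-entry accuracy $|a_i|\lesssim\epsilon'$, Cauchy--Schwarz gives $\sum_i|a_i|\le\sqrt{n_v}\|a\|_2\le n_v\epsilon'$. Hence $2\sqrt{C}\,\E\sum_i|a_i|=O(n_v\epsilon_2)$, a factor $\sqrt{n_v}$ too large.

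Per-entry information alone cannot reach the stated rate. If every $a_i=\epsilon'$, which is compatible with $|a_i|\le 2\sqrt{C}$ when $\epsilon_2\le 4C$, then $\|a\|_2^2=n_v\epsilon_2^2/(4C)$. This exceeds $\sqrt{n_v}\epsilon_2$ once $\sqrt{n_v}\epsilon_2>4C$.

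The paper gets the $\sqrt{n_v}$ by applying Assumption \ref{assum:bound} to the whole validation vector, $\|y_v-\hat f_v\|\le\sqrt{C}$. The factor $2\sqrt{C}$ then multiplies $\|a\|_2\le\sqrt{n_v}\epsilon'$ rather than $\|a\|_1$. With that reading your argument closes: $\|a\|_2\le 2\sqrt{C}$ gives $\E\|a\|_2^2\le 2\sqrt{C}\,\E\|a\|_2\le 2\sqrt{C}\sqrt{\E\|a\|_2^2}=O(\sqrt{n_v}\epsilon_2)$. Alternatively, if you keep the $1/n_v$ normalization from the definition of $l_v$, your per-entry route gives $O(\epsilon_2)$, which also suffices.

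The failure-event step is also wrong as written. If you go through the high-probability conclusion of Theorem \ref{thm:lgd_conv}, the bad event adds $O(Cn_v\delta')=O(C\delta)$ to a deterministic expectation. Taking $\delta'=\delta/(2n_v)$ makes the union bound work, but it does not make this term at most $\sqrt{n_v}\epsilon_2$; that would need $\delta'\lesssim\epsilon_2/(C\sqrt{n_v})$.

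You do not need the high-probability route at all. The variance bound of Proposition \ref{prop:bounds} already controls $\E_\xi a_i^2$ with no failure event. The third bracket therefore holds deterministically, and the only probability in the theorem comes from Theorem \ref{thm:gen_guarantee}.

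The paper's proof instead asserts that this deterministic bracket is small ``with probability $\ge 1-\delta/2$'', glossing over the same issue. Your instinct to bound the expectation directly is the better one; it just has to be carried out in $\ell_2$.
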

\begin{proof}
    Similar to Theorem \ref{thm:lgd_conv}, define $\theta^p$ such that $r(w;\theta^p) = \nabla_w \log \mathcal{D}_W(w)$ and $\eta^{p}$ such that $\Omega\left(\frac{\zeta^2\log(1/\delta)}{b}\right) \le \eta^{p}\le O\left(\frac{1}{\zeta^2d}\right)$. We decompose the target bound as follows:
    \begin{align*}
        B_\pi(\hat{f}_{ERM,\xi}) - B_\pi(\hat{f}_v^{opt}) &= l_v(\phi_{ERM}(\xi)) - B_\pi(\hat{f}_v^{opt})\\
        &=  l_v(\phi_{ERM}(\xi)) - l_v(\phi^*) + l_v(\phi^*) - l_v(\{\theta^p,\eta^{p}\}) + l_v(\{\theta^p,\eta^{p}\}) - B_\pi(\hat{f}_v^{opt}).
    \end{align*}
    Note by definition that $l_v(\phi^*) - l_v(\{\theta^p,\eta^{p}\}) \le 0$, since $\phi^*$ is the best hyperparameter configuration for $\phi$ and $\eta$. In order to bound $l_v(\{\theta^p,\eta^{(p)}\}) - B_\pi(\hat{f}_v^{opt})$, we simplify the expression as follows:
    \begin{align*}
        l_v(\{\theta^p,\eta^{(p)}\}) - B_\pi(\hat{f}_v^{opt}) &= \E_{X,y,X_v}[ \E_{y_v|X_v,X,y}[\|y_v - \hat{f}_v^p\|^2 - \|y_v - \hat{f}_v^{opt}\|^2]]\\
        &= \E_{X,y,X_v}[\E_{y_v|X_v,X,y}[\|\hat{f}_v^p - \hat{f}_v^{opt}\|(\|y_v - \hat{f}_v^p\| + \|y_v - \hat{f}_v^{opt}\|) ]]\\
        &\le 2\sqrt{C}\|\hat{f}_v^p - \hat{f}_v^{opt}\|. \quad \text{(Assumption \ref{assum:bound} implies $\|y_v - \hat{f}_v\|\le\sqrt{C}$)}
    \end{align*}
    It is easy to see from Theorem \ref{thm:lgd_conv} that $l_v(\{\theta^p,\eta^{(p)}\}) - B_\pi(\hat{f}_v^{opt}) \le \sqrt{n_v}\epsilon_2$  with probability $\ge 1-n_v\delta$ for $\eta^p = O(1/(\zeta^2\dw))$,$b= O\left( \frac{\zeta^2\log(1/\delta)}{\eta^{p}}\right)$ and $B = O\left(\dw\zeta^2\log\left((\dw+\|w_{(0)}-w_{min}\|^2)\zeta^2\right)\right)$. Note that the constant $2\sqrt{C}$ is absorbed into the definition of $\zeta$ above.
    We substitute our choice of $\eta^p$ in the bound for $b$ and replace $\delta$ with $\delta/2n_v$ to obtain $b = O\left(\dw\zeta^4\log(2n_v/\delta)\right)$ . Thus, with probability $\ge 1-\delta/2$, $l_v(\{\theta^p,\eta\}) - B_\pi(\hat{f}_v^{opt}) \le \sqrt{n_v}\epsilon_2$. 
    
    Further, we know from Theorem \ref{thm:gen_guarantee} that $l_v(\phi_{ERM}(\xi)) - l_v(\phi^*) \le \epsilon_1$ with probability $\ge 1-\delta$ for a choice of $T=O\left(\frac{C^2}{\epsilon_1^2}\left(Pdim(\mathcal{L}_v) + \log(3/\delta)\right)\right)$. For the squared loss, we note that $\Delta_L = 2$ and $\Lambda_L =0$, which are both dimension-independent.
    We  instantiate the pseudo-dimension bounds from Theorem \ref{thm:pdim_comp}, as follows:
    \begin{align*}
        Pdim(\mcL_v) &= O((B+b)h + h\log((B+b)(n+n_v))\\
        &= O\left((d\zeta^2\log(d\zeta^2) + d\zeta^4\log(2n_v/\delta))h + h\log((d\zeta^2\log(d\zeta^2) + d\zeta^4\log(2n_v/\delta))(n+n_v))\right)\\
        &= O\left((d\zeta^2\log(d\zeta^2) + d\zeta^4\log(2n_v/\delta))h + h(\log(d(n+n_v))+\log(\zeta^2+\zeta^4\log(2n_v/\delta)))\right)\\
        &= O\left(d\log(d(n+n_v))(\zeta^2\log(\zeta^2) + \zeta^4\log(2/\delta))h + h(\log(d(n+n_v))+\log(\zeta^2+\zeta^4\log(2/\delta)))\right)\\
        &= O\left(dh\log(d(n+n_v))(\zeta^2\log(\zeta) + \zeta^4\log(2/\delta))\right).
    \end{align*}

    Thus, we have $l_v(\phi_{ERM}(\xi)) - l_v(\phi^*) \le \epsilon_1$ with probability $\ge 1-\delta/2$ for a choice of $T$ that satisfies:
    \begin{align*}
        T &= O\left(\frac{C^2}{\epsilon_1^2}\left(dh\log(d(n+n_v))\left(\zeta^2\log\left(\zeta\right) +  \zeta^4\log(2/\delta)\right) + \log(2/\delta)\right)\right)\\
        &= O\left(\frac{C^2}{\epsilon_1^2}\left(dh\log(d(n+n_v))\left(\zeta^2\log\left(\zeta\right) +  \zeta^4\log(2/\delta)\right) \right)\right).
    \end{align*}

    We thus recover the target bounds by combining both inequalities in the decomposition.
\end{proof}

We observe that the bound in the above theorem achieves a conservative bound on $T$ of the form $O(1/\epsilon^6)$ if $\epsilon_1 = \epsilon_2 = \epsilon$, while typical generalization error bounds (for example, \citep{balcan_23, goyal_25}) only suffer from a quadratic rate of $O(1/\epsilon^2)$. We note however that prior work assumes perfect optimization of the loss functions, and fail to achieve any concentration to the Bayes' optimal if the prior density is not Gaussian, Laplace or a product of the two. 

\subsection{Hyperparameter Generalization for Classification}
Similar to the results in the above section, we can give results on generalization of hyperparameters for classification. The following theorem gives a pseudo-dimension bound on the validation loss function class, again denoted as $\mathcal{L}_v$, which is the set of functions identified by the hyperparameters $\theta, \sqrt{\eta}$. that take as input the problem instance (comprising of training and validation data for a task) and the Langevin noise to be used during LGD, and outputs the obtained validation loss on this problem instance. Since loss functions for classification typically involve the $\log$ function, we assume that the gradient of the loss is computable by a piecewise-polynomial instead of the loss directly.

\begin{theorem}\label{thm:pdim_class}
    In Algorithm \ref{alg:lgd_class},
    assume that all entries of $r(.;.)$ can be computed using a GJ algorithm with degree $\Delta_R$ and predicate complexity $\Lambda_R$ in its inputs $w_{(k)}$ and $\theta$. Similarly, assume the regression function $f(.;.)$ is differentiable and all entries can be computed together using a GJ algorithm with degree $\Delta_F$ and predicate complexity $\Lambda_F$ in its inputs. Further assume that all entries of the function $\smax$ can be computed using a GJ algorithm of degree $\Delta_{SM}$ and predicate complexity $\Lambda_{SM}$.
    Finally, assume that all entries of the gradient of the loss function $l(.,.)$ can be computed using a GJ algorithm with degree $\Delta_{L'}$ and predicate complexity $\Lambda_{L'}$, and given $\smax(\hat{f}_v)$ for all $n_v$ validation entries, computing whether $l_v(\{\theta, \sqrt{\eta}\}, \xi,P)\ge r$ requires a GJ algorithm of degree $n_v\Delta_{Lv}$ and predicate complexity $(n_v\Lambda_{Lv} + 1)$ in the entries of $\smax(\hat{f}_v)$ for any $r$.
    Define the function class 
    $\mcL_v = \left\{l_v(\{\theta, \sqrt{\eta}\},., .):\R^{(B+b)\times d}\times \mathcal{P}\rightarrow \R| \theta\in\R^{\dhp}\right\}$, 
    as the class of validation loss functions for given hyperparameter values for Algorithm \ref{alg:lgd_class}. The pseudo-dimension of $\mcL_v$ satisfies:
    \begin{align*}
        Pdim(\mcL_v) &=  O\left((B+b)\dhp\log(n_v(\Delta_R + \Delta_F\Delta_{L'}\Delta_{Lv}\Delta_{SM}))\right.\\
        &\quad \quad \quad \quad \left.+ \dhp log((B+b)((n+n_v)(\Lambda_F + \Lambda_{L'} + \Lambda_{Lv} + \Lambda_{SM}) + \Lambda_R)\right).  
    \end{align*}
\end{theorem}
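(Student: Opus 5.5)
We mirror the proof of Theorem \ref{thm:pdim_comp}. For a fixed task $P\in\mathcal{P}$, Langevin noise $\xi\in\R^{(B+b)\times d}$ and threshold $r$, we build a GJ algorithm $\Gamma_{(\xi,P),r}$ whose inputs are the $\dhp$ hyperparameters $\{\theta,\sqrt{\eta}\}$ and which returns whether $l_v(\{\theta,\sqrt{\eta}\},\xi,P)\ge r$. We then apply Lemma \ref{lem:gj_pdim}, which gives $Pdim(\mcL_v)=O(\dhp\log(\Delta\Lambda))$. So the whole task is to bound the total degree $\Delta$ and predicate complexity $\Lambda$ of $\Gamma$, tracking how they grow over the $B+b$ Langevin iterations of Algorithm \ref{alg:lgd_class}.

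\textbf{One update step.} We bound the cost of computing $w_{(k+1)}$ from $w_{(k)}$, $\theta$ and $\sqrt{\eta}$, term by term.
\begin{enumerate}
\item Computing $f(X;w_{(k)})$ costs degree $\Delta_F$ and predicate complexity $n\Lambda_F$ over the $n$ rows.
\item The Jacobian $J_w f(X;w_{(k)})$ is obtained by differentiating the final rational functions of each branch. This gives degree at most $2\Delta_F$ and the same predicates $n\Lambda_F$.
\item Substituting $f(X;w_{(k)})$ into the GJ algorithm for $\nabla_{p'}l$ gives degree $\Delta_F\Delta_{L'}$ and predicate complexity $n(\Lambda_F+\Lambda_{L'})$.
\item The regularizer term $r(w_{(k)};\theta)$ adds degree $\Delta_R$ and $\Lambda_R$ predicates.
\end{enumerate}
Multiplying the Jacobian by the loss gradient, adding $r$, and multiplying by $\eta=(\sqrt{\eta})^2$ gives a per-step degree factor $D:=O(\Delta_F\Delta_{L'}+\Delta_F+\Delta_R)$. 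The per-step predicate count is $\Lambda_{\text{step}}=O(n(\Lambda_F+\Lambda_{L'})+\Lambda_R)$. Unrolling the recurrence as in Theorem \ref{thm:pdim_comp}, $w_{(k)}$ has degree at most $D^{k}$ in $(\theta,\sqrt{\eta})$, and the first $k$ steps use at most $k\Lambda_{\text{step}}$ distinct predicates.

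\textbf{Consolidation and the validation loss.} For each $k\in\{B+1,\dots,B+b\}$ we evaluate $f(X_v;w_{(k)})$ and then $\smax$ on each of its $n_v$ rows. This has degree $\Delta_{SM}\Delta_F D^{k}$ and adds $n_v(\Lambda_F+\Lambda_{SM})$ predicates per $k$. Averaging over the $b$ iterates is a sum of rational functions followed by division by the constant $b$.

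Bringing a sum of rationals to a common denominator could multiply degrees. The cleaner bound treats each $D^{k}$ as at most $D^{B+b}$, so the average $\hat{F}_v$ has degree at most $b\cdot\Delta_{SM}\Delta_F D^{B+b}$; the factor $b$ only contributes an additive $\log b$ inside the final logarithm.

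By hypothesis, deciding $l_v\ge r$ from the entries of $\smax(\hat{f}_v)$ costs degree $n_v\Delta_{Lv}$ and $n_v\Lambda_{Lv}+1$ predicates. Composing, we get
\[
\Delta\le b\,n_v\Delta_{Lv}\Delta_{SM}\Delta_F\,D^{B+b}
\]
and
\[
\Lambda\le (B+b)\bigl(n(\Lambda_F+\Lambda_{L'})+\Lambda_R+n_v(\Lambda_F+\Lambda_{SM})\bigr)+n_v\Lambda_{Lv}+1 .
\]

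\textbf{Simplification and main obstacle.} Taking logarithms, $\log\Delta=O\bigl((B+b)\log(n_v(\Delta_R+\Delta_F\Delta_{L'}\Delta_{Lv}\Delta_{SM}))\bigr)$. Here the stray $\log b$, $\log n_v$ and $\log\Delta_{Lv}$ terms are absorbed because $B+b\ge1$ and all the degree parameters are at least $1$. Also $\log\Lambda=O\bigl(\log((B+b)((n+n_v)(\Lambda_F+\Lambda_{L'}+\Lambda_{Lv}+\Lambda_{SM})+\Lambda_R))\bigr)$. Substituting both into Lemma \ref{lem:gj_pdim} yields the stated bound.

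The step needing the most care is the degree bookkeeping through the consolidation average and the composition with the validation-loss GJ algorithm. These must stay multiplicative outside the exponent $B+b$ rather than compounding per iterate, and it is exactly this that places $n_v$ and the product $\Delta_{L'}\Delta_{Lv}\Delta_{SM}$ inside the logarithm multiplying $(B+b)\dhp$.
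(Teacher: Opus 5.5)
Your proposal is correct and follows the same route as the paper. It builds a GJ algorithm in $\{\theta,\sqrt{\eta}\}$ by unrolling the $B+b$ Langevin steps with a multiplicative per-step degree factor, appends $\smax$, the consolidation average and the validation-loss test, and then invokes Lemma \ref{lem:gj_pdim}. The differences are only bookkeeping: the paper does not charge your extra factor $b$ for the averaging's common denominator, just as neither proof charges for summing over the $n$ training rows in each step; and you count the $n_v(\Lambda_F+\Lambda_{SM})$ consolidation predicates once per averaged iterate rather than once overall, which is actually the more accurate count. Both differences are absorbed into the same logarithms.
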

\begin{proof}
    While Algorithm \ref{alg:lgd} performs a scalar-vector multiplication in Line 3, Algorithm \ref{alg:lgd_class} performs a matrix-vector multiplication. This however does not change the degree or the predicate complexity of the computation, and we can proceed similar to the proof of Theorem \ref{thm:pdim_comp}. We thus note that computing all entries of $f(X_v;w_{(k)})$ has a predicate complexity of $k(n(2\Lambda_F+\Lambda_{L'}) + \Lambda_R) + n_v\Lambda_F$ and degree $\Delta_F(2\Delta_F(\Delta_{L'}+1)+\Delta_R+2)^{k}$. 
    
    Now, each softmax computation  uses $\Lambda_{SM}$ rationals in conditional statements and $\Delta_{SM}$ degree in its inputs. Thus the $k^{th}$ entry of the empirical average of $\hat{F}_v$ has a predicate complexity of $k(n(2\Lambda_F+\Lambda_{L'}) + \Lambda_R) + n_v(\Lambda_F + \Lambda_{SM})$ and degree $\Delta_{SM}\Delta_F(2\Delta_F(\Delta_{L'}+1)+\Delta_R+2)^{k}$. Thus $\hat{F}_v$ has a predicate complexity of  $(B + b)(n(2\Lambda_F+\Lambda_{L'}) + \Lambda_R) + n_v(\Lambda_F + \Lambda_{SM})$ and a degree of $\Delta_{SM}\Delta_F(2\Delta_F(\Delta_{L'}+1)+\Delta_R+2)^{B+b}$. 
    
    Finally, we need to compute if $l_v(\{\theta, \sqrt{\eta}\}, \xi,P)\ge r$. This has a predicate complexity of  $(B + b)(n(2\Lambda_F+\Lambda_{L'}) + \Lambda_R) + n_v(\Lambda_F + \Lambda_{SM} + \Lambda_{Lv}) + 1$ and a degree of $n_v\Delta_{Lv}\Delta_{SM}\Delta_F(2\Delta_F(\Delta_{L'}+1)+\Delta_R+2)^{B+b}$. We can thus conclude from Lemma \ref{lem:gj_pdim} that the pseudo-dimension is given as follows:
    \begin{align*}
        Pdim(\mcL_v) &= O(\dhp\log\left(\left[n_v\Delta_F\Delta_{Lv}\Delta_{SM}(2\Delta_F(\Delta_{L'}+1)+\Delta_R+2)^{B+b}\right]\times\right. \\
        &\quad \quad \quad \quad \left.\left[(B + b)(n(2\Lambda_F+\Lambda_{L'}) + \Lambda_R) + n_v(\Lambda_F + \Lambda_{Lv} + \Lambda_{SM}) + 1 \right] \right)\\
        &= O\left((B+b)\dhp\log(n_v(\Delta_R + \Delta_F\Delta_{L'}\Delta_{Lv}\Delta_{SM}))\right.\\
        &\quad \quad \quad \quad \left.+ \dhp log((B+b)((n+n_v)(\Lambda_F + \Lambda_{L'} + \Lambda_{Lv} + \Lambda_{SM}) + \Lambda_R)\right).  
    \end{align*}
\end{proof}

Prior work on learning hyperparameters for logistic regression on $m=2$ classes using $l1$ or $l2$ penalties \citep{balcan_23} give a pseudo-dimension upper bound of $O((n+n_v)^2+\log(1/\epsilon))$ for an $\epsilon$-approximate optimization algorithm, using $h=1$ hyperparameters. This is distinctly different from their results on regression which do not degrade with the number of samples, but do degrade with the dimensionality. This distinction heavily derives from the difference in analysis for both regression and classification done in \citet{balcan_23}, and the fact that they consider specialised algorithms that do not generalize beyond the linear case. The degradation in terms of the number of samples for classification arises from the analysis of the exact softmax function, the behavior of which can be much more irregular than the piecewise-polynomial approximation considered in this paper.
Our bounds on classification match our bounds on regression up to logarithmic terms owing to our consideration of general algorithms. In order to compare our bounds with those of \citet{balcan_23}, we first need to note that we do not consider averaging of our loss function across validation examples, so that Algorithm \ref{alg:lgd_class}, which is $mn_v\epsilon$-approximate as shown in Theorem \ref{thm:lgd_conv_class}, is $\epsilon$-approximate by the definition of \citet{balcan_23}. Further note that we do not have an algorithm that is always $\epsilon$-approximate, but is $\epsilon$-approximate with probability $\ge 1-\delta$.
Theorem \ref{thm:erm_bayes_class} below shows a pseudo-dimension upper bound of the form $O\left(d\log(d(n+n_v))\log(n_v)\left(\frac{1}{\epsilon^2} \log(1/\epsilon) + \frac{1}{\epsilon^4}\log(2/\delta)\right)\right)$. We thus improve greatly in terms of the dependence on number of samples, which goes from quadratic to poly-logarithmic. As noted previously, our bounds use a piecewise-polynomial approximation of the softmax function, which exactly models the hardware implementation of the function, which leads to this improvement.
Our bounds however degrade in terms of the dimension, where we introduce a linear dependence, and the approximation error, which increases from logarithmic to polynomial. We credit both of these degradations to our consideration of a sampling algorithm (ULA), as opposed to the optimization algorithm of \citet{balcan_23} that does not generalize beyond specific regularizers. Such degradations are commonly seen when comparing between optimization and sampling algorithms. For example, convergence to an $\epsilon$-approximate solution of a strongly convex loss function requires $O(\log(1/\epsilon))$ steps \citep{nesterov2018lectures}, whereas sampling from a log-strongly-concave distribution requires polynomial in $1/\epsilon$ number of steps (Proposition \ref{lem:ula_conv}).

In addition to proving bounds on more general scenarios including hardware approximation errors, we also consider a more careful treatment of the bounds of \citet{balcan_23} that assume an $\epsilon$-approximate algorithm but assume perfect hardware implementation of the $\exp(.)$ function. We show that their pseudo-dimension upper bounds can be improved to $O(\log(n_v/\epsilon))$ in Appendix \ref{sec:rlr_pdim}, matching our logarithmic dependence on the number of samples. While we maintain our focus on bounds that follow hardware restrictions, we note that bounds analogous to ours can be derived assuming perfect implementation of the $\exp(.)$ function following the treatment of \citet{poly_approx_vitercik2020}, where the authors prove upper bounds on the Rademacher complexity of a class of algorithms that can be implemented approximately using piecewise polynomials.

We now show convergence of the ERM estimator to the Bayes optimal estimator for classification, similar to Theorem \ref{thm:erm_bayes_conv}.

\begin{theorem}\label{thm:erm_bayes_class}
    Assume the setting of Theorem \ref{thm:lgd_conv_class} where we assume Assumptions \ref{ass:e_approx}-\ref{assum:bound} hold and a well-specified classification problem such that $Pr(y^{(i)}=j|X^{(i)},w) = \smax(f(X^{(i)},w))[j]$
    and $Pr(y_v^{(i)}  = j|X_v^{(i)},w) = \smax(f(X_v^{(i)},w))[j]$. 
    Further assume that the ground truth $\ws \sim \mcD_W$.
    Assume the loss function $l(y=j,f) = -\log(\smax(f)[j])$ such that $l(y=j,f)$ is convex in all entries of $f$, and assume 
    there exist GJ algorithms to compute all entries of $\smax(.)$ and the functions  $r(\cdot;\cdot)$,$f(\cdot;\cdot)$ with a dimension-independent degree and predicate complexity. Denote $\zeta = \frac{\|f\|_{Lip}}{\epsilon_2}$. For some $a>0$ if Algorithm \ref{alg:lgd_class} outputs values such that $\hat{F}_v(X_v^{(i)})[j]\ge a $ 
    for all $i\in[n_v],j\in[m]$, and we learn ERM hyperparameters of Algorithm \ref{alg:lgd_class} with $B$ within $O\left(\dw\zeta^2\log\left((\dw+\|w_{(0)}-w_{min}\|^2) \zeta^2\right)\right)$ and $b$ within $O\left(\dw\log(2mn_v/\delta)\zeta^4\right)$,
    then
    \begin{align*}
        T=O\left(\frac{C^2}{\epsilon_1^2}dh\log(d(n+n_v))\log(n_v)\left(\zeta^2\log\left(\zeta\right) +  \zeta^4\log(2m/\delta)\right)\right)
    \end{align*}
    tasks are enough to ensure that, with probability $\ge 1-\delta$,
    \begin{align*}
        B_\pi(\hat{f}_{ERM,\xi}) - B_\pi(\hat{f}_v^{opt}) \le \epsilon_1 + mn_v\frac{\log(1/a)}{(1-a)}\epsilon_2.
    \end{align*}
\end{theorem}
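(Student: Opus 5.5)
The proof will mirror that of Theorem \ref{thm:erm_bayes_conv}. Define $\theta^p$ with $r(w;\theta^p) = -\nabla_w\log\mcD_W(w)$, and choose $\eta^p$ with $\Omega\left(\zeta^2\log(1/\delta)/b\right)\le \eta^p\le O\left(1/(\zeta^2 d)\right)$. Then decompose
\begin{align*}
B_\pi(\hat{f}_{ERM,\xi}) - B_\pi(\hat{f}_v^{opt}) = \left[l_v(\phi_{ERM}(\xi)) - l_v(\phi^*)\right] + \left[l_v(\phi^*) - l_v(\{\theta^p,\eta^p\})\right] + \left[l_v(\{\theta^p,\eta^p\}) - B_\pi(\hat{f}_v^{opt})\right].
\end{align*}
The middle bracket is nonpositive because $\phi^*$ is optimal. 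The first bracket is at most $\epsilon_1$ with probability $\ge 1-\delta/2$ by Theorem \ref{thm:gen_guarantee}, once $T$ is large enough for the pseudo-dimension bound.

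For the third bracket, work per validation entry with the cross-entropy loss, writing $p=\smax(\hat f_v^{opt}(X_v^{(i)}))$ for the true posterior predictive and $q=\hat F_v(X_v^{(i)})$. The excess risk is $\sum_j p_j\log(p_j/q_j)$. I will bound it by a Lipschitz argument: $|\log p_j - \log q_j|\le |p_j-q_j|/\min(p_j,q_j)$, and then sum over classes and entries. This is the step I expect to be the main obstacle, since the constant $\log(1/a)/(1-a)$ must come out exactly. The plan is to use the lower bound $q_j\ge a$ together with concavity of $\log$. The secant slope of $\log$ on $[a,1]$ is $\log(1/a)/(1-a)$, and since $q_j\ge a$, the chord-type bound $|\log q_j - \log p_j|\le \frac{\log(1/a)}{1-a}|q_j-p_j|$ should hold when both points lie in $[a,1]$. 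If $p_j<a$ is possible, I will bound the term through $p_j\le 1$ and the direction of the inequality instead; if needed, I will assume $p_j\ge a$ as well, which follows for small $\epsilon_2$ from the closeness of $p_j$ and $q_j$. Each term of the excess loss is then at most $\frac{\log(1/a)}{1-a}\epsilon_2$. Summing over $j\in[m]$ and $i\in[n_v]$ (the loss is not averaged, following the paper's convention) gives $mn_v\frac{\log(1/a)}{1-a}\epsilon_2$. The entrywise closeness $|q_j-p_j|\le\epsilon_2$ comes from Theorem \ref{thm:lgd_conv_class} with probability $\ge 1-mn_v\delta'$. Setting $\delta'=\delta/(2mn_v)$ produces $b=O(d\zeta^4\log(2mn_v/\delta))$ after substituting $\eta^p=\Theta(1/(\zeta^2 d))$, with $B$ as stated.

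Finally, instantiate Theorem \ref{thm:pdim_class}. All degrees and predicate complexities are dimension-independent, so the first term contributes $O((B+b)h\log n_v)$ and the second contributes $O(h\log((B+b)(n+n_v)))$. Plugging in $B$ and $b$ and simplifying exactly as in Theorem \ref{thm:erm_bayes_conv} gives $Pdim(\mcL_v)=O\left(dh\log(d(n+n_v))\log(n_v)(\zeta^2\log\zeta+\zeta^4\log(2m/\delta))\right)$, where the $\log(2mn_v/\delta)$ factor is split into its $\log n_v$ and $\log(2m/\delta)$ parts and absorbed. Substituting this into $T=O\left(\frac{C^2}{\epsilon_1^2}(Pdim+\log(2/\delta))\right)$ yields the stated $T$, and a union bound over the two failure events completes the argument.
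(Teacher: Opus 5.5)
Your decomposition, the choice of $\theta^p,\eta^p$, the failure-probability bookkeeping with $\delta/(2mn_v)$, and the pseudo-dimension instantiation all match the paper's proof. The problem is the step you flagged as the main obstacle.

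The two-point bound $|\log q_j-\log p_j|\le\frac{\log(1/a)}{1-a}|q_j-p_j|$ for $p_j,q_j\in[a,1]$ is false. Since $\log$ is concave, its chord slopes inside $[a,1]$ approach $1/a>\frac{\log(1/a)}{1-a}$ when both points are near $a$. For example, $a=1/2$, $q_j=0.5$, $p_j=0.51$ gives $\log(1.02)\approx 0.0198$ on the left but only $2\log 2\cdot 0.01\approx 0.0139$ on the right. The secant slope of $[a,1]$ controls chords $[x,1]$ with $x\ge a$, not arbitrary chords in $[a,1]$. Your fallback, the Lipschitz bound $|p_j-q_j|/\min(p_j,q_j)$, yields only the weaker constant $1/a$. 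Also, bounding the weight $p_j$ by $1$ before estimating the logarithm throws away exactly what is needed.

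The missing idea, which is the paper's argument, is to keep the weight and pass to the ratio. If $q_j\ge p_j$, the term $p_j\log(p_j/q_j)$ is nonpositive; this also disposes of the case $p_j<a$, so no extra assumption on $p_j$ is required. If $q_j<p_j$, put $\rho=q_j/p_j$. Because $p_j\le 1$, you get $\rho\ge q_j\ge a$, so $\rho\in[a,1)$. Convexity of $-\log$ on $[a,1]$ together with $-\log 1=0$ (equivalently, $-\log\rho/(1-\rho)$ is decreasing) gives $-\log\rho\le\frac{\log(1/a)}{1-a}(1-\rho)$. Multiplying by $p_j$ turns this into $p_j\log(p_j/q_j)\le\frac{\log(1/a)}{1-a}(p_j-q_j)$, since the factor $p_j$ cancels the $1/p_j$ in $1-\rho$. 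Summing over $i,j$ then gives the stated $mn_v\frac{\log(1/a)}{1-a}\epsilon_2$.

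A smaller omission: the hypotheses only provide GJ algorithms for $\smax$, $r$ and $f$. Before invoking Theorem \ref{thm:pdim_class}, you should check, as the paper does, that the loss-dependent complexities are constants. The gradient of $-\log\smax(f)[j]$ is the softmax gradient divided by $\smax(f)[j]$, so $\Delta_{L'}$ and $\Lambda_{L'}$ are constants. The test $l_v\ge z$ is a single comparison of a degree-$n_v$ product of softmax entries with a constant, so $\Delta_{Lv}=1$ and $\Lambda_{Lv}=0$.
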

\begin{proof}
    Similar to the proof of Theorem \ref{thm:erm_bayes_conv}, we decompose the target term as follows:
    \begin{align*}
        B_\pi(\hat{f}_{ERM,\xi}) - B_\pi(\hat{f}_v^{opt}) &= l_v(\phi_{ERM}(\xi)) - l_v(\phi^*) + l_v(\phi^*) - l_v(\{\theta^p,\eta^{p}\}) + l_v(\{\theta^p,\eta^{p}\}) - B_\pi(\hat{f}_v^{opt})\\
        &\le l_v(\phi_{ERM}(\xi)) - l_v(\phi^*) + l_v(\{\theta^p,\eta^{p}\}) - B_\pi(\hat{f}_v^{opt}).
    \end{align*}

    We expand the last term as follows:
    \begin{align*}
        l_v&(\{\theta^p,\eta^{p}\}) - B_\pi(\hat{f}_v^{opt}) = \E_{X,y,X_v}[\E_{y_v|X_v,X,y}[l(y_v, \hat{f}_v^p) - l(y_v, \hat{f}_v^{opt})]]\\
        &= \E_{X,y,X_v}\left[\sum_{i\in[n_v]}\sum_{j\in[m]} Pr(y_v^{(i)}=j|X_v,X,y)\right.\cdot\\
        &\qquad \qquad \qquad \qquad \qquad \qquad \left.\left(-\log\left(\smax(\hat{f}_v^{p}(X_v^{(i)}))[j]\right) + \log\left(\smax(\hat{f}_v^{opt}(X_v^{(i)}))[j]\right) \right)\right]\\
        &\le \E_{X,y,X_v}\left[\frac{\log(1/a)}{(1-a)} \sum_{i\in[n_v]}\sum_{j\in[m]} \left|\smax(\hat{f}_v^p(X_v^{(i)}))[j] - \smax(\hat{f}_v^{opt}(X_v^{(i)}))[j]\right|\right].
    \end{align*}
    To see the last step, first denote $\smax(\hat{f}_v^p(X_v^{(i)}))[j] = \hat{Pr}(y_v^{(i)}=j|X_v,X,y)$ and note that $Pr(y_v^{(i)}=j|X_v,X,y) = \smax(\hat{f}_v^{opt}(X_v^{(i)}))[j]$ from Theorem \ref{thm:lgd_conv_class}.
    Now if $\hat{Pr}(y_v^{(i)}=j|X_v,X,y) \ge Pr(y_v^{(i)}=j|X_v,X,y)$, the inequality follows immediately so assume $\hat{Pr}(y_v^{(i)}=j|X_v,X,y) < Pr(y_v^{(i)}=j|X_v,X,y)$. Since the function $\frac{-\log(r)}{(1-r)}$ is decreasing for $r<1$, we can write $\frac{-\log(r)}{(1-r)} \le \frac{\log(1/a)}{(1-a)}$ for $r>a$. The last step then follows by replacing $r$ with $\frac{\hat{Pr}(y_v^{(i)}=j|X_v,X,y)}{Pr(y_v^{(i)}=j|X_v,X,y)} \in [a,1) $. We thus have $l_v(\{\theta^p,\eta^{p}\}) - B_\pi(\hat{f}_v^{opt}) \le mn_v\frac{\log(1/a)}{(1-a)}\epsilon_2$ with probability $\ge 1-mn_v\delta$ from Theorem \ref{thm:lgd_conv_class} for $\eta^p = O(1/(\zeta^2\dw))$, and $b,B$ that satisfy $b= O\left( \frac{\zeta^2\log(1/\delta)}{\eta^{p}}\right)$ and $B = O\left(\dw\zeta^2\log\left((\dw+\|w_{(0)}-w_{min}\|^2)\zeta^2\right)\right)$. 
    We substitute our choice of $\eta^p$ in the bound for $b$ and replace $\delta$ with $\delta/2mn_v$ to obtain $b = O\left(\dw\zeta^4\log(2mn_v/\delta)\right)$ . Thus, with probability $\ge 1-\delta/2$, $l_v(\{\theta^p,\eta\}) - B_\pi(\hat{f}_v^{opt}) \le  mn_v\frac{\log(1/a)}{(1-a)}\epsilon_2$. 

    Lastly, we want to bound $l_v(\phi_{ERM}(\xi)) - l_v(\phi^*)$ using the pseudo-dimension of the function class $\mcL_v$, similar to the proof of Theorem \ref{thm:erm_bayes_conv}. For the given loss function, note that the gradient of $l$ is computed as the ratio of the gradient of the $\smax(.)$ function and the $\smax(.)$ function. Since we assume $\Delta_{SM}$ and $\Lambda_{SM}$ are both dimension-independent, we can conclude that both $\Delta_{L'}$ and $\Lambda_{L'}$ are dimension-independent as well. Further, note that computing whether $l_v(\{\theta, \sqrt{\eta}\}, \xi,P)\ge r$ for any $r$ given $\smax(\hat{f}_v)$ requires computing if the product of the appropriate softmax entries is greater or smaller than $\exp(r)$. Thus this computation has a degree of $n_v$ in the entries of $\smax(\hat{f}_v)$ and a predicate complexity of $1$. Thus, $\Delta_{Lv} = 1$ and $\Lambda_{Lv} = 0$. We can then instantiate the pseudo-dimension bounds from Theorem \ref{thm:pdim_class} as follows:
    \begin{align*}
        P&dim(\mcL_v) = O((B+b)h\log(n_v) + h\log((B+b)(n+n_v))\\
        &= O\left((d\zeta^2\log(d\zeta^2) + d\zeta^4\log(2mn_v/\delta))h\log(n_v) + h\log((d\zeta^2\log(d\zeta^2) + d\zeta^4\log(2mn_v/\delta))(n+n_v))\right)\\
        &= O\left((d\zeta^2\log(d\zeta^2) + d\zeta^4\log(2mn_v/\delta))h\log(n_v) + h(\log(d(n+n_v))+\log(\zeta^2+\zeta^4\log(2mn_v/\delta)))\right)\\
        &= O\left(d\log(d(n+n_v))\log(n_v)(\zeta^2\log(\zeta^2) + \zeta^4\log(2m/\delta))h + h(\log(d(n+n_v))+\log(\zeta^2+\zeta^4\log(2m/\delta)))\right)\\
        &= O\left(dh\log(d(n+n_v))\log(n_v)(\zeta^2\log(\zeta) + \zeta^4\log(2m/\delta))\right).
    \end{align*}

    Thus, using Theorem \ref{thm:gen_guarantee}, we have $l_v(\phi_{ERM}(\xi)) - l_v(\phi^*) \le \epsilon_1$ with probability $\ge 1-\delta/2$ for a choice of $T$ that satisfies:
    \begin{align*}
        T &= O\left(\frac{C^2}{\epsilon_1^2}\left(dh\log(d(n+n_v))\log(n_v)\left(\zeta^2\log\left(\zeta\right) +  \zeta^4\log(2m/\delta)\right) + \log(2/\delta)\right)\right)\\
        &= O\left(\frac{C^2}{\epsilon_1^2}\left(dh\log(d(n+n_v))\log(n_v)\left(\zeta^2\log\left(\zeta\right) +  \zeta^4\log(2m/\delta)\right) \right)\right).
    \end{align*}
    
    We recover the target bounds by combining both inequalities in the decomposition.
\end{proof}

\section{Experimental Results}\label{sec:expts}
In this Section, we present experimental results on the Langevin Gradient Descent (LGD) applied to few-shot learning of linear regression presented in Algorithm \ref{alg:lgd} and logistic regression presented in Algorithm \ref{alg:lgd_class}. We present results on three distinct priors with increasing complexity as detailed below.
For each of the three priors, we generate a synthetic dataset in a $d=10$ dimensional space consisting of $T=250$ tasks for both linear regression and logistic regression. For each task, we sample a ground truth $w^{*t}$ from the prior, and sample $n+n_v=500$ inputs from a Gaussian distribution with variance $10$, $X^t\sim N(0,10\mathbb{I}_{10})^{\otimes (n+n_v)}$. 

For our experiments on linear regression, we sample standard normal noise $\epsilon^t\sim N(0,1)^{\otimes (n+n_v)}$ and set $y^t = X^tw^{*t} + \epsilon^t$. We measure loss using the squared loss, $l(y_t,y_p) = (y_p-y_t)^2 $ to match the setting of Theorem \ref{thm:lgd_conv} under which the optimal LGD algorithm is also Bayes' optimal as long as the prior is log-strongly-concave. Figure \ref{fig:lgd_plots} shows the average validation loss on the held-out set of $200$ tasks against increasing number of training samples for the theoretically optimal algorithms against the meta-learned algorithm trained on $50$ tasks. 

For our experiments on logistic regression, we define the softmax using python libraries as our $\smax$ function, since the internal implementations as piecewise polynomial as desired. We sample class labels $0$ or $1$ for each training or validation sample using $[0,Xw]$ as our logits. That is, for the $i^{th}$ sample, $Pr(y^{t(i)} = 1|X^{t(i)},w^{*t}) = \smax([0,X^{t(i)}w^{*t}])[1] $. We measure the loss using the log of $\smax$, $l(y_t=j,f) = -\log(\smax(f)[j]) $ to match the setting of Theorem \ref{thm:lgd_conv_class} under which the optimal LGD algorithm is also Bayes' optimal as long as the prior is log-strongly-concave. Figure \ref{fig:lgd_class_plots} shows the average validation loss on the held-out set of $200$ tasks against increasing number of training samples for the theoretically optimal algorithms against the meta-learned algorithm trained on $50$ tasks.

We note that our experiments on the isotropic Gaussian prior match linear or logistic regression with a tunable l2 penalty, settings covered in the analysis of prior work like \citet{enet_2022, balcan_23, goyal_25}. While our settings are the same for the isotropic Gaussian prior, prior work does not give empirical evidence of the success of their algorithm for the special cases considered therein. Further note that prior work does not give any guarantees for the more general priors considered in this work, namely the non-isotropic Gaussian and softplus scaled Gaussian priors. We describe the priors and optimal algorithms in detail below.

\begin{figure}[htbp]
    \centering
    
    \begin{subfigure}[t]{0.3\textwidth}
        \centering
        \includegraphics[width=\linewidth]{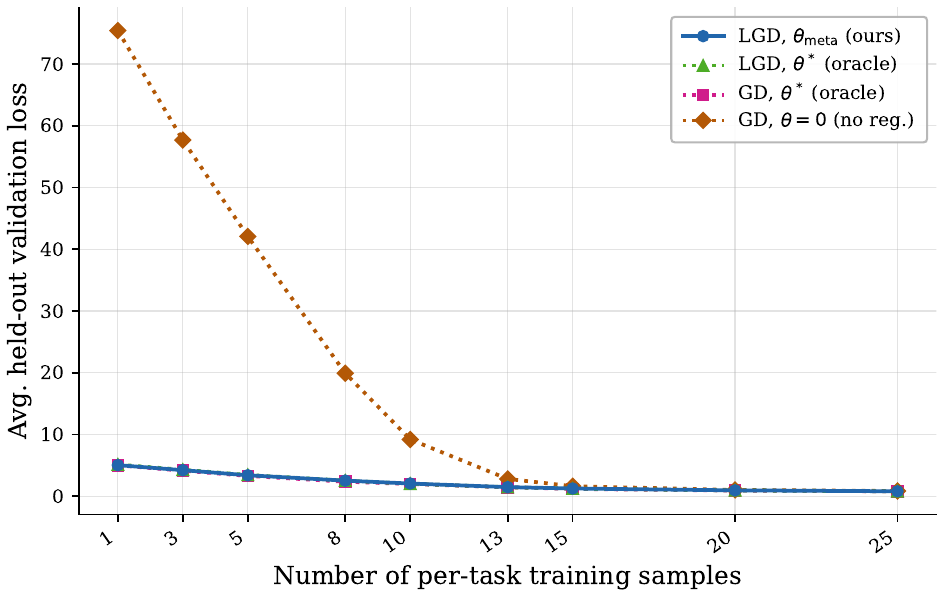}
        \caption{Isotropic Gaussian prior}
    \end{subfigure}
    \hfill
    \begin{subfigure}[t]{0.3\textwidth}
        \centering
        \includegraphics[width=\linewidth]{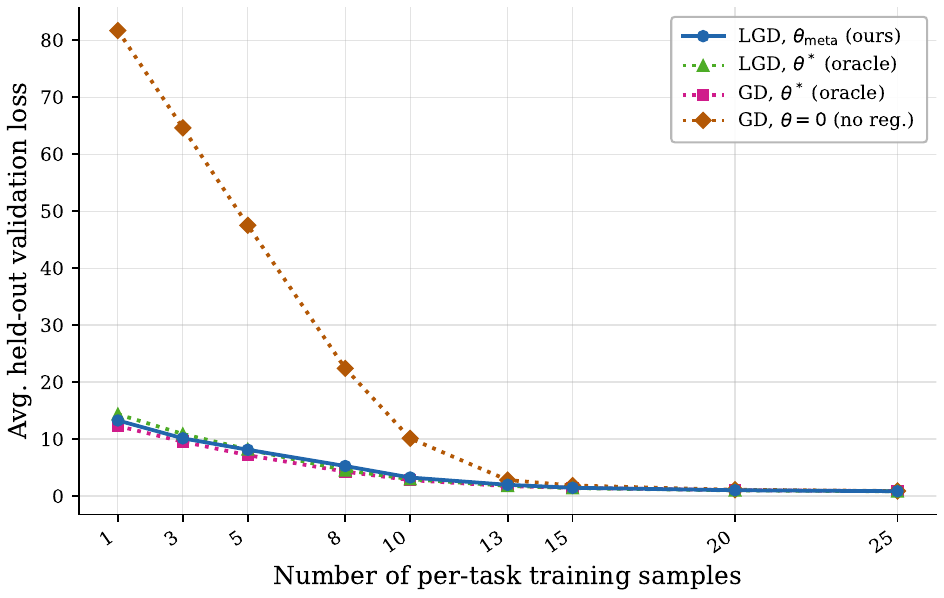}
        \caption{Non-isotropic Gaussian prior}
    \end{subfigure}
    \hfill
    \begin{subfigure}[t]{0.3\textwidth}
        \centering
        \includegraphics[width=\linewidth]{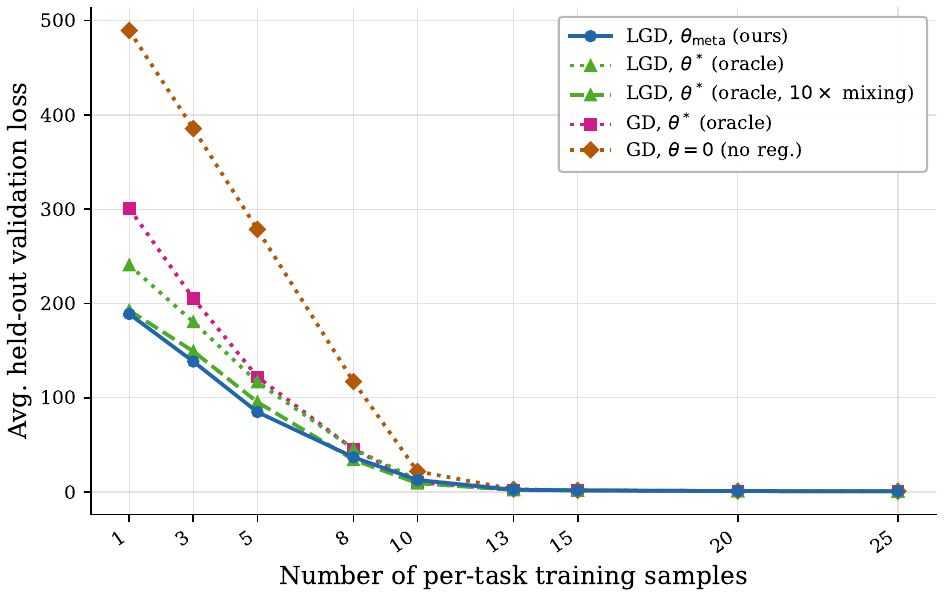}
        \caption{Gaussian with Softplus scaling}
    \end{subfigure}
    
    \caption{Comparison of oracle gradient descent (GD) and Langevin Gradient Descent (LGD) algorithms against meta-learned LGD algorithms for few-shot linear regression. We synthetically generate 250 linear regression tasks in a $d=10$ dimensional space by sampling ground truth vectors from log-strongly-concave priors, inputs $X^t \sim N(0,10\mathbb{I}_{10})^{\otimes (n+n_v)}$ and noise $\epsilon^t \sim N(0,1)^{\otimes n+n_v}$ and set $y^t = X^tw^{*t} + \epsilon^t$. We meta-learn using the first 50 tasks and evaluate on the last 200 for all algorithms. We present the average validation loss on held-out tasks against increasing number of training samples per task.
    (a) Shows comparison on an isotropic Gaussian prior on $w^*$ where the oracle LGD and GD algorithms perform comparatively and outperform GD without regularization. The meta-learned LGD algorithm matches the oracle baselines. (b) Shows comparison on a non-isotropic Gaussian prior, where the LGD oracle performs slightly worse than the GD oracle due to noise, but both out-perform plain GD with no regularization. The meta-learned LGD performs almost on par with the oracle LGD. (c) Shows comparison on a Gaussian prior scaled with Softplus, resulting in a prior with mean$\ne$mode. We see that the LGD oracle beats the GD oracle due to the asymmetric nature of the prior, and the GD oracle beats GD without regularization. The meta-learned LGD algorithm beats both of these oracles, and performs on par or better than the LGD oracle with a $10\times$ higher number of iterations.}
    \label{fig:lgd_plots}
\end{figure}

\begin{figure}[htbp]
    \centering
    
    \begin{subfigure}[t]{0.3\textwidth}
        \centering
        \includegraphics[width=\linewidth]{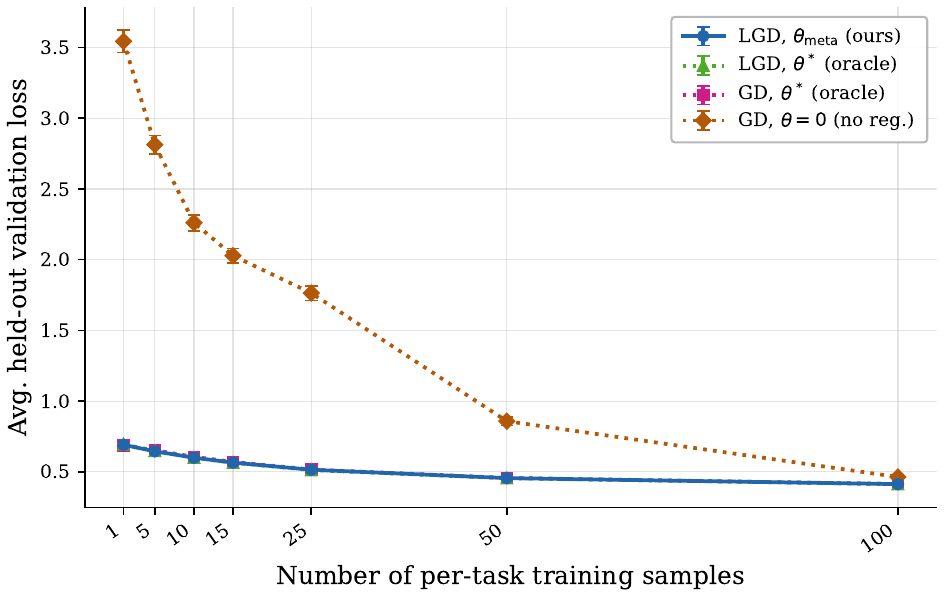}
        \caption{Isotropic Gaussian prior}
    \end{subfigure}
    \hfill
    \begin{subfigure}[t]{0.3\textwidth}
        \centering
        \includegraphics[width=\linewidth]{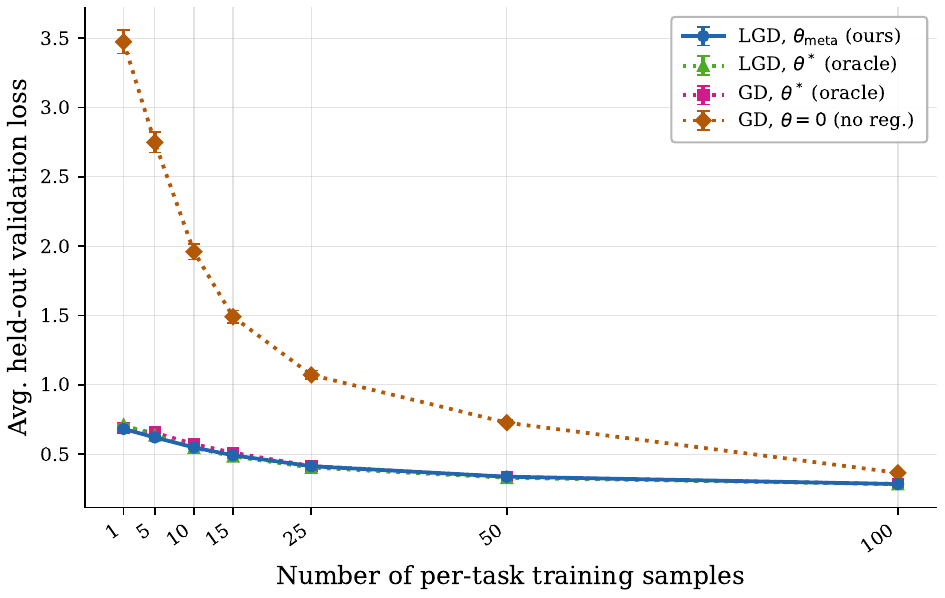}
        \caption{Non-isotropic Gaussian prior}
    \end{subfigure}
    \hfill
    \begin{subfigure}[t]{0.3\textwidth}
        \centering
        \includegraphics[width=\linewidth]{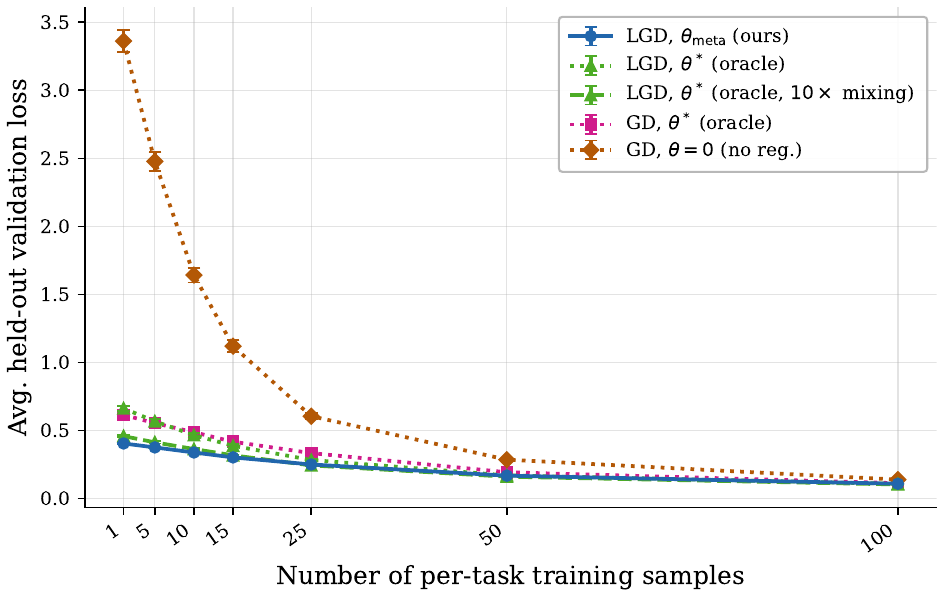}
        \caption{Gaussian with Softplus scaling}
    \end{subfigure}
    
    \caption{Comparison of oracle gradient descent (GD) and Langevin Gradient Descent (LGD) algorithms against meta-learned LGD algorithms for few-shot logistic regression. We synthetically generate 250 logistic regression tasks in a $d=10$ dimensional space by sampling ground truth vectors from log-strongly-concave priors, inputs $X^t \sim N(0,10\mathbb{I}_{10})^{\otimes (n+n_v)}$ and sample output labels that satisfy for each example $i$, $Pr(y^{t(i)} = 1|X^{t(i)},w^{*t}) = \smax([0,X^{t(i)}w^{*t}])[1] $. We meta-learn using the first 50 tasks and evaluate on the last 200 for all algorithms. We present the average validation loss on held-out tasks against increasing number of training samples per task.
    (a) Shows comparison on an isotropic Gaussian prior on $w^*$ where the oracle LGD and GD algorithms perform comparatively and outperform GD without regularization. The meta-learned LGD algorithm matches the oracle baselines. (b) Shows comparison on a non-isotropic Gaussian prior, where the LGD oracle performs slightly better than the GD oracle, because the GD solution is no longer Bayes' optimal. Both out-perform plain GD with no regularization. The meta-learned LGD performs on par with the oracle LGD. (c) Shows comparison on a Gaussian prior scaled with Softplus, resulting in a prior with mean$\ne$mode. We see that the LGD oracle beats the GD oracle due to the asymmetric nature of the prior, and the GD oracle beats GD without regularization. The meta-learned LGD algorithm beats both of these oracles, and performs on par or better than the LGD oracle with a $10\times$ higher number of iterations.}
    \label{fig:lgd_class_plots}
\end{figure}


\subsection{Isotropic Gaussian Prior}
Under the isotropic Gaussian prior, we sample each entry of $w^{*t}$ from a Gaussian distribution of variance $0.1$ so that $w^{*t}\sim N(0,0.1\mathbb{I}_{10}) $. Under this prior, Theorem \ref{thm:lgd_conv} gives that the Bayes' optimal estimator for linear regression is the mean of the posterior distributed as $Pr(w|X,y)\propto \exp{\left(-\frac{\|y-Xw\|^2}{2} - \frac{\|w\|^2}{2\times 0.1}\right)}$. Since the mean of this posterior is equal to its mode, the optimal Bayes' estimator is equal to both the LGD estimator and GD estimator using an l2 regularizer $\theta\|w\|^2$ with $\theta^*=10$. For logistic regression the posterior of $w$ is not a Gaussian and the LGD estimator is Bayes' optimal though the GD estimator is not. For both linear and logistic regression, we show the performance of LGD run with burn-in $B=500$, $b=5000$ averaging steps and learning rate $\eta = 9e-4$, and GD with a total of $5500$ iterations and $\eta = 9e-4$ in Figures \ref{fig:lgd_plots}, \ref{fig:lgd_class_plots}. We note that the oracle LGD performs on par with the oracle GD for linear regression as expected, and only slightly better than oracle GD for logistic regression because of the relatively large regularization. GD without regularization matches the performance of the oracles around 15 training samples for linear regression and around 100 training samples for logistic regression due to the difficulty of the tasks. This also indicates that few-shot learning is beneficial in a larger regime for classification tasks. We additionally plot LGD using meta-learned $\theta$ and learning rate $\eta$ with same values of $B,b$ for both linear and logistic regression. Meta learning is done using the Adam optimizer \citep{Adam} with a learning rate of $0.4$ and $50$ steps with the log-values of $\theta$ and $\eta$ as parameters for stability, and clamping $\eta$ in the range $[1e-7,1.2e-3]$. We observe that the meta-learned LGD matches the oracle baselines even using 1 training sample per task for both linear and logistic regression. Note that practically, when the underlying prior is unknown, the oracle algorithms are unachievable whereas the meta-learned algorithm is still realizable.

\subsection{Non-Isotropic Gaussian}
We extend the above prior to a non-isotropic Gaussian. We first sample $d=10$ variance values $v_1,\ldots, v_d$ uniformly between $0.05$ and $0.5$ and create the diagonal covariance matrix $\Sigma = \operatorname{diag}(v_1,\ldots, v_d)$. We now use this covariance matrix to sample the ground truth for each task, $w^{*t}\sim N(0,\Sigma)$. Similar to above, the Bayes' optimal estimator for linear regression for this prior is the mean of the posterior distributed as $Pr(w|X,y)\propto \exp{\left(-\frac{\|y-Xw\|^2}{2} - \left(\sum \frac{w_i^2}{2v_i}\right) \right)}$. Since this is again a Gaussian posterior, the mean and the mode are equal and equal to the minimizer of $\|Xw-y\|^2 + \sum \frac{w_i^2}{v_i}$. Similar to above, the posterior of $w$ is not Gaussian for logistic regression, and hence the LGD estimator is Bayes' optimal while the GD estimator is not. For both linear and logistic regression, we show the performance of LGD run with burn-in $B=500$, $b=5000$ averaging steps and learning rate $\eta = 9e-4$ using the gradient of regularizer $r(w;\theta) = \sum w_i\theta_i$ for $\theta^* = (v_1,\ldots,v_d)$, as well as GD run with the same regularizer for $5500$ iterations and $\eta = 9e-4$ in Figures \ref{fig:lgd_plots}, \ref{fig:lgd_class_plots}. We observe that the oracle LGD algorithm performs slightly worse than oracle GD due to noise in the algorithm and a sub-optimal mixing time for linear regression. On the contrary, oracle LGD performs at par or slightly better than oracle GD for logistic regression since the oracle GD is no longer the Bayes' optimal. Similar to the isotropic Gaussian prior, GD without regularization matches the performance of the oracles around 15 training samples for linear regression and around 100 training samples for logistic regression due to the difficulty of the tasks. Lastly, we also plot the performance of the meta-learned LGD algorithm for both linear and logistic regression, where we again use the Adam optimizer \citep{Adam} with learning rate $0.4$ using $50$ steps, and optimize the log of $\theta$ and $\eta$ while clamping $\eta$ in the range $[1e-7,1.2e-3]$. We observe that the meta-learned LGD matches the performance of the oracle LGD even with 1 training example per task for both linear and logistic regression. We note again that the oracle algorithms are unachievable in practice, while the meta-learned algorithm is.

\subsection{Gaussian with Softplus Scaling}
Finally, we conduct experiments on a prior with a clear separation between mean and mode, to better illustrate the advantages of LGD over plain GD. Towards this, we study the prior
\begin{align*}
    \pi_{\lambda, \beta, \gamma}(u)&\propto \exp{\left(-\frac{\gamma u^2}{2} -\lambda\log(1+\exp(\beta u))\right)} = \exp{\left(-\frac{\gamma u^2}{2}\right)}\left(\frac{1}{1+\exp(\beta u)}\right)^\lambda.
\end{align*}
While this prior is still log-strongly-concave as required by the conditions of Theorem \ref{thm:lgd_conv}, the mean of the prior is well-separated from the mode. This creates a posterior distribution where the mean and mode are separated as well, and thus LGD performs better than GD for both linear and logistic regression. Formally, first note that we sample each element of each ground truth vector from this prior, $w^{*t(i)}\sim \pi_{1,10,0.1}$. We then construct the gradient of regularizer $r(w;\theta) = \theta[0]w + \beta\theta[1]\left(\frac{1}{1 + \exp(-\beta w)}\right)$, where we treat $\beta$ as a constant for simplicity. We plot the performance of LGD run with $\theta^*=(1, 0.1)$ with burn-in $B=5000$, $b=50000$ averaging steps and learning rate $\eta = 1e-4$, and the oracle GD run with a total of $55000$ steps and $\eta = 1e-4$ in Figures \ref{fig:lgd_plots}, \ref{fig:lgd_class_plots}. For both linear and logistic regression, we see that the oracle LGD beats the oracle GD algorithm and massively out-performs the GD algorithm without regularization. We additionally plot the meta-learned LGD algorithm, where meta learning is done using the Adam optimizer \citep{Adam} with learning rate $0.4$ using $50$ steps, and optimize the log of $\theta$ and $\eta$ while clamping $\eta$ in the range $[1e-7,1e-4]$. We observe that the meta-learned algorithm manages to outperform both of these algorithms by a wide margin. We hypothesize that the LGD oracle performs worse because the poor condition number of the Hessian of the log-prior requires a higher mixing time, and the meta-learned algorithm finds a different prior with a lower mixing time. To confirm this, we plot another oracle LGD using $10\times$ the values of $B$ and $b$ and $\eta = 1e-4$. We see that the meta-learned algorithm performs almost on par with the longer oracle. Thus, the meta-learned algorithm manages to perform on par with the optimal algorithm using $1/10$th the number of iterations! As mentioned in previous Sections, none of the baselines other than gradient descent without regularization are achievable in practice (we can expect hand-tuned gradient descent to perform somewhere between GD with no regularization and the oracle GD), which illustrates the advantages of using our approach. 

\section{Discussion and Future Work}
In this work we study the generalized setting of learning hyperparameters for gradient descent with a convex objective. We propose the Langevin Gradient Descent algorithm which performs Langevin-style updates on gradient descent and uses these updates to directly estimate the outputs for test/validation samples. We show that the Langevin-style sampling followed by a consolidation involving averaging predictions of the final iterates results in Bayes' optimality of the algorithm's estimates under mild assumptions. We additionally show generalization guarantees for finding the hyperparameters corresponding to regularization and learning rate from multiple tasks in a data-driven setting using pseudo-dimension based arguments. While our setting extends the setting of prior work on regression using the elastic net significantly, our bounds match the pseudo-dimension bounds for elastic net up to logarithmic terms. On the other hand, we observe a worse dependence of sample complexity on generalization error while accounting for imperfect optimization and Bayes' optimality for generic priors. We similarly extend the setting of prior work on regularized logistic regression to much more general models and regularizers. Reducing to the setting of prior work, we observe a much tighter dependence of pseudo-dimension on the number of samples per task owing to our consideration of hardware-aware procedures. We observe a worse dependence on the approximation error and number of parameters. We attribute these degradations to our choice of a generic sampling-based algorithm that easily extends beyond the linear case, whereas prior work restricts their attention to optimization-based algorithms that don't extend beyond the linear case. We then provide experimental results on three different synthetic datasets for linear regression and three different synthetic datasets for logistic regression. Our results demonstrate the massive advantage that the proposed Langevin Gradient Descent algorithm, and meta learning provide for few-shot learning of regression with convex objectives. We observe an even greater advantage with few-shot learning of classification since our algorithms perform better than algorithms without regularization for a wider range of sample sizes.
Exploring data-dependent bounds using Rademacher complexities is an interesting direction for future research. Another interesting direction would be to explore extensions of the algorithm for multi-modal priors, as opposed to strongly convex priors, and correspondingly extend the framework to the analysis of gradient descent on neural networks. On the empirical side, experiments on non-linear models for regression and classification would be directions for future research.

\bibliographystyle{plainnat}
\bibliography{ref}

@misc{engstrom2025optimizingmltrainingmetagradient,
      title={Optimizing ML Training with Metagradient Descent}, 
      author={Logan Engstrom and Andrew Ilyas and Benjamin Chen and Axel Feldmann and William Moses and Aleksander Madry},
      year={2025},
      eprint={2503.13751},
      archivePrefix={arXiv},
      primaryClass={stat.ML},
      url={https://arxiv.org/abs/2503.13751}, 
}

@article{li2018hyperbandnovelbanditbasedapproach,
  author  = {Lisha Li and Kevin Jamieson and Giulia DeSalvo and Afshin Rostamizadeh and Ameet Talwalkar},
  title   = {Hyperband: A Novel Bandit-Based Approach to Hyperparameter Optimization},
  journal = {Journal of Machine Learning Research},
  year    = {2018},
  volume  = {18},
  number  = {185},
  pages   = {1--52},
}

@InProceedings{finn2017modelagnosticmetalearningfastadaptation,
  title = 	 {Model-Agnostic Meta-Learning for Fast Adaptation of Deep Networks},
  author =       {Chelsea Finn and Pieter Abbeel and Sergey Levine},
  booktitle = 	 {Proceedings of the 34th International Conference on Machine Learning},
  pages = 	 {1126--1135},
  year = 	 {2017},
  editor = 	 {Precup, Doina and Teh, Yee Whye},
  volume = 	 {70},
  series = 	 {Proceedings of Machine Learning Research},
  month = 	 {06--11 Aug},
  publisher =    {PMLR},
}

@article{polyak_92,
author = {Polyak, B. T. and Juditsky, A. B.},
title = {Acceleration of Stochastic Approximation by Averaging},
journal = {SIAM Journal on Control and Optimization},
volume = {30},
number = {4},
pages = {838-855},
year = {1992},
doi = {10.1137/0330046}
}

@article{gj_95,
author = {Goldberg, Paul W. and Jerrum, Mark R.},
title = {Bounding the Vapnik-Chervonenkis Dimension of Concept Classes Parameterized by Real Numbers},
year = {1995},
issue_date = {Feb./March 1995},
publisher = {Kluwer Academic Publishers},
address = {USA},
volume = {18},
number = {2–3},
issn = {0885-6125},
doi = {10.1007/BF00993408},
journal = {Machine Learning},
month = feb,
pages = {131–148},
numpages = {18}
}

@inproceedings{balcan_23,
 author = {Balcan, Maria-Florina F and Nguyen, Anh and Sharma, Dravyansh},
 booktitle = {Advances in Neural Information Processing Systems},
 editor = {A. Oh and T. Naumann and A. Globerson and K. Saenko and M. Hardt and S. Levine},
 pages = {80066--80078},
 publisher = {Curran Associates, Inc.},
 title = {New Bounds for Hyperparameter Tuning of Regression Problems Across Instances},
 volume = {36},
 year = {2023}
}

@InProceedings{bartlett_linalg_22,
  title = 	 {Generalization Bounds for Data-Driven Numerical Linear Algebra},
  author =       {Bartlett, Peter and Indyk, Piotr and Wagner, Tal},
  booktitle = 	 {Proceedings of Thirty Fifth Conference on Learning Theory},
  pages = 	 {2013--2040},
  year = 	 {2022},
  editor = 	 {Loh, Po-Ling and Raginsky, Maxim},
  volume = 	 {178},
  series = 	 {Proceedings of Machine Learning Research},
  month = 	 {02--05 Jul},
  publisher =    {PMLR},
}

@misc{goyal_25,
      title={Distribution-dependent Generalization Bounds for Tuning Linear Regression Across Tasks}, 
      author={Maria-Florina Balcan and Saumya Goyal and Dravyansh Sharma},
      year={2025},
      eprint={2507.05084},
      archivePrefix={arXiv},
      primaryClass={cs.LG},
}

@article{durmus_ula_19,
author = {Alain Durmus and {\'E}ric Moulines},
title = {{High-dimensional Bayesian inference via the unadjusted Langevin algorithm}},
volume = {25},
journal = {Bernoulli},
number = {4A},
publisher = {Bernoulli Society for Mathematical Statistics and Probability},
pages = {2854 -- 2882},
year = {2019},
doi = {10.3150/18-BEJ1073},
}

@inproceedings{enet_2022,
 author = {Balcan, Maria-Florina and Khodak, Misha and Sharma, Dravyansh and Talwalkar, Ameet},
 booktitle = {Advances in Neural Information Processing Systems},
 editor = {S. Koyejo and S. Mohamed and A. Agarwal and D. Belgrave and K. Cho and A. Oh},
 pages = {27769--27782},
 publisher = {Curran Associates, Inc.},
 title = {Provably tuning the {ElasticNet} across instances},
 volume = {35},
 year = {2022}
}

@book{nn_theory_bartlett_99,
place={Cambridge}, 
title={Neural Network Learning: Theoretical Foundations}, 
publisher={Cambridge University Press}, 
author={Anthony, Martin and Bartlett, Peter L.}, 
year={1999}}

@article{JMLR:v13:bergstra12a,
  author  = {James Bergstra and Yoshua Bengio},
  title   = {Random Search for Hyper-Parameter Optimization},
  journal = {Journal of Machine Learning Research},
  year    = {2012},
  volume  = {13},
  number  = {10},
  pages   = {281--305},
}

@inproceedings{snoek2012practicalbayesianoptimizationmachine,
 author = {Snoek, Jasper and Larochelle, Hugo and Adams, Ryan P},
 booktitle = {Advances in Neural Information Processing Systems},
 editor = {F. Pereira and C.J. Burges and L. Bottou and K.Q. Weinberger},
 pages = {},
 publisher = {Curran Associates, Inc.},
 title = {Practical Bayesian Optimization of Machine Learning Algorithms},
 volume = {25},
 year = {2012}
}

@inproceedings{NIPS2011_86e8f7ab,
 author = {Bergstra, James and Bardenet, R\'{e}mi and Bengio, Yoshua and K\'{e}gl, Bal\'{a}zs},
 booktitle = {Advances in Neural Information Processing Systems},
 editor = {J. Shawe-Taylor and R. Zemel and P. Bartlett and F. Pereira and K.Q. Weinberger},
 pages = {},
 publisher = {Curran Associates, Inc.},
 title = {Algorithms for Hyper-Parameter Optimization},
 volume = {24},
 year = {2011}
}

@misc{jaderberg2017populationbasedtrainingneural,
      title={Population Based Training of Neural Networks}, 
      author={Max Jaderberg and Valentin Dalibard and Simon Osindero and Wojciech M. Czarnecki and Jeff Donahue and Ali Razavi and Oriol Vinyals and Tim Green and Iain Dunning and Karen Simonyan and Chrisantha Fernando and Koray Kavukcuoglu},
      year={2017},
      eprint={1711.09846},
      archivePrefix={arXiv},
      primaryClass={cs.LG},
      url={https://arxiv.org/abs/1711.09846}, 
}

@article{nn_pd_bartlett19,
  author  = {Peter L. Bartlett and Nick Harvey and Christopher Liaw and Abbas Mehrabian},
  title   = {Nearly-tight VC-dimension and Pseudodimension Bounds for Piecewise Linear Neural Networks},
  journal = {Journal of Machine Learning Research},
  year    = {2019},
  volume  = {20},
  number  = {63},
  pages   = {1--17},
}

@inproceedings{balcan2020data,
  title={{Data-Driven Algorithm Design} (Book Chapter)},
  author={Maria-Florina Balcan},
  booktitle={Beyond Worst-Case Analysis of Algorithms, Tim Roughgarden (Ed)},
  year={2020},
  publisher={{Cambridge University Press}}
}

@InProceedings{hyperparam_grad_dougal15,
  title = 	 {Gradient-based Hyperparameter Optimization through Reversible Learning},
  author = 	 {Maclaurin, Dougal and Duvenaud, David and Adams, Ryan},
  booktitle = 	 {Proceedings of the 32nd International Conference on Machine Learning},
  pages = 	 {2113--2122},
  year = 	 {2015},
  editor = 	 {Bach, Francis and Blei, David},
  volume = 	 {37},
  series = 	 {Proceedings of Machine Learning Research},
  address = 	 {Lille, France},
  month = 	 {07--09 Jul},
  publisher =    {PMLR},
}

@inproceedings{
hypergrad_baydin18,
title={Online Learning Rate Adaptation with Hypergradient Descent},
author={Atilim Gunes Baydin and Robert Cornish and David Martinez Rubio and Mark Schmidt and Frank Wood},
booktitle={International Conference on Learning Representations},
year={2018},
}

@inproceedings{gd_by_gd_16,
 author = {Andrychowicz, Marcin and Denil, Misha and G\'{o}mez, Sergio and Hoffman, Matthew W and Pfau, David and Schaul, Tom and Shillingford, Brendan and de Freitas, Nando},
 booktitle = {Advances in Neural Information Processing Systems},
 editor = {D. Lee and M. Sugiyama and U. Luxburg and I. Guyon and R. Garnett},
 pages = {},
 publisher = {Curran Associates, Inc.},
 title = {Learning to learn by gradient descent by gradient descent},
 volume = {29},
 year = {2016}
}

@inproceedings{
datarater_calian25,
title={DataRater: Meta-Learned Dataset Curation},
author={Dan A. Calian and Gregory Farquhar and Iurii Kemaev and Luisa Zintgraf and Matteo Hessel and Jeremy Shar and Junhyuk Oh and Andr{\'a}s Gy{\"o}rgy and Tom Schaul and Jeff Dean and Hado van Hasselt and David Silver},
booktitle={The Thirty-ninth Annual Conference on Neural Information Processing Systems},
year={2025},
}

@misc{nichol2018firstordermetalearningalgorithms,
      title={On First-Order Meta-Learning Algorithms}, 
      author={Alex Nichol and Joshua Achiam and John Schulman},
      year={2018},
      eprint={1803.02999},
      archivePrefix={arXiv},
      primaryClass={cs.LG},
      url={https://arxiv.org/abs/1803.02999}, 
}

@misc{franceschi2018bilevelprogramminghyperparameteroptimization,
      title={Bilevel Programming for Hyperparameter Optimization and Meta-Learning}, 
      author={Luca Franceschi and Paolo Frasconi and Saverio Salzo and Riccardo Grazzi and Massimilano Pontil},
      year={2018},
      eprint={1806.04910},
      archivePrefix={arXiv},
      primaryClass={stat.ML},
      url={https://arxiv.org/abs/1806.04910}, 
}

@misc{rajeswaran2019metalearningimplicitgradients,
      title={Meta-Learning with Implicit Gradients}, 
      author={Aravind Rajeswaran and Chelsea Finn and Sham Kakade and Sergey Levine},
      year={2019},
      eprint={1909.04630},
      archivePrefix={arXiv},
      primaryClass={cs.LG},
      url={https://arxiv.org/abs/1909.04630}, 
}

@article{liu2021investigatingbileveloptimizationlearning,
  author={Liu, Risheng and Gao, Jiaxin and Zhang, Jin and Meng, Deyu and Lin, Zhouchen},
  journal={IEEE Transactions on Pattern Analysis and Machine Intelligence}, 
  title={Investigating Bi-Level Optimization for Learning and Vision From a Unified Perspective: A Survey and Beyond}, 
  year={2022},
  volume={44},
  number={12},
  pages={10045-10067},
  doi={10.1109/TPAMI.2021.3132674}
}

@misc{grattafiori2024llama3herdmodels,
      title={The Llama 3 Herd of Models}, 
      author={{Meta AI}},
      year={2024},
      eprint={2407.21783},
      archivePrefix={arXiv},
      primaryClass={cs.AI},
}

@InProceedings{pmlr-v97-balcan19a,
  title = 	 {Provable Guarantees for Gradient-Based Meta-Learning},
  author =       {Balcan, Maria-Florina and Khodak, Mikhail and Talwalkar, Ameet},
  booktitle = 	 {Proceedings of the 36th International Conference on Machine Learning},
  pages = 	 {424--433},
  year = 	 {2019},
  editor = 	 {Chaudhuri, Kamalika and Salakhutdinov, Ruslan},
  volume = 	 {97},
  series = 	 {Proceedings of Machine Learning Research},
  month = 	 {09--15 Jun},
  publisher =    {PMLR},
}

@inproceedings{NEURIPS2019_f4aa0dd9,
 author = {Khodak, Mikhail and Balcan, Maria-Florina F and Talwalkar, Ameet S},
 booktitle = {Advances in Neural Information Processing Systems},
 editor = {H. Wallach and H. Larochelle and A. Beygelzimer and F. d\textquotesingle Alch\'{e}-Buc and E. Fox and R. Garnett},
 pages = {},
 publisher = {Curran Associates, Inc.},
 title = {Adaptive Gradient-Based Meta-Learning Methods},
 volume = {32},
 year = {2019}
}

@book{all_of_stats,
  address = {New York},
  author = {Wasserman, Larry},
  description = {All of Statistics: A Concise Course in Statistical Inference},
  publisher = {Springer},
  title = {All of Statistics},
  year = 2010
}

@article{l20_chen22,
  author  = {Tianlong Chen and Xiaohan Chen and Wuyang Chen and Howard Heaton and Jialin Liu and Zhangyang Wang and Wotao Yin},
  title   = {Learning to Optimize: A Primer and A Benchmark},
  journal = {Journal of Machine Learning Research},
  year    = {2022},
  volume  = {23},
  number  = {189},
  pages   = {1--59},
}

@misc{hpa_balcan_21,
      title={How much data is sufficient to learn high-performing algorithms? Generalization guarantees for data-driven algorithm design}, 
      author={Maria-Florina Balcan and Dan DeBlasio and Travis Dick and Carl Kingsford and Tuomas Sandholm and Ellen Vitercik},
      year={2021},
      eprint={1908.02894},
      archivePrefix={arXiv},
      primaryClass={cs.LG},
}

@book{ML_theory_Shwartz_14,
place={Cambridge}, 
title={Understanding Machine Learning: From Theory to Algorithms}, 
publisher={Cambridge University Press}, 
author={Shalev-Shwartz, Shai and Ben-David, Shai},
year={2014}}

@article{Adam,
  title={Adam: A Method for Stochastic Optimization},
  author={Diederik P. Kingma and Jimmy Ba},
  journal={CoRR},
  year={2014},
  volume={abs/1412.6980},
}

@article{grenander1983tutorial,
  title={Tutorial in pattern theory},
  author={Grenander, Ulf},
  journal={Report, Division of Applied Mathematics},
  year={1983},
  publisher={Brown University}
}

@misc{etde_5964505,
title = {Correlation functions and computer simulations},
author = {Parisi, G},
abstractNote = {If the equilibrium properties of a statistical system are obtained by solving numerically the associated Langevin equation describing the approach to equilibrium, the connected correlation functions can be computed directly with small effort and high precision.},
journal = {},
volume = {180:3},
place = {Netherlands},
year = {1981},
month = {May}
}

@inproceedings{Welling2011BayesianLV,
  title={Bayesian Learning via Stochastic Gradient Langevin Dynamics},
  author={Welling, Max and Teh, Yee Whye},
  booktitle={Proceedings of the 28th International Conference on Machine Learning (ICML-11)},
  pages={681--688},
  year={2011},
  publisher={Omnipress}
}

@InProceedings{pmlr-v65-dalalyan17a,
  title = 	 {Further and stronger analogy between sampling and optimization: Langevin Monte Carlo and gradient descent},
  author = 	 {Dalalyan, Arnak},
  booktitle = 	 {Proceedings of the 2017 Conference on Learning Theory},
  pages = 	 {678--689},
  year = 	 {2017},
  editor = 	 {Kale, Satyen and Shamir, Ohad},
  volume = 	 {65},
  series = 	 {Proceedings of Machine Learning Research},
  month = 	 {07--10 Jul},
  publisher =    {PMLR},
}

@article{durmus2016nonasymptoticconvergenceanalysisunadjusted,
author = {Alain Durmus and {\'E}ric Moulines},
title = {{Nonasymptotic convergence analysis for the unadjusted Langevin algorithm}},
volume = {27},
journal = {The Annals of Applied Probability},
number = {3},
publisher = {Institute of Mathematical Statistics},
pages = {1551 -- 1587},
year = {2017},
doi = {10.1214/16-AAP1238},
}

@InProceedings{raginsky2017nonconvexlearningstochasticgradient,
  title = 	 {Non-convex learning via Stochastic Gradient Langevin Dynamics: a nonasymptotic analysis},
  author = 	 {Raginsky, Maxim and Rakhlin, Alexander and Telgarsky, Matus},
  booktitle = 	 {Proceedings of the 2017 Conference on Learning Theory},
  pages = 	 {1674--1703},
  year = 	 {2017},
  editor = 	 {Kale, Satyen and Shamir, Ohad},
  volume = 	 {65},
  series = 	 {Proceedings of Machine Learning Research},
  month = 	 {07--10 Jul},
  publisher =    {PMLR},
}

@inproceedings{xu2020globalconvergencelangevindynamics,
 author = {Xu, Pan and Chen, Jinghui and Zou, Difan and Gu, Quanquan},
 booktitle = {Advances in Neural Information Processing Systems},
 editor = {S. Bengio and H. Wallach and H. Larochelle and K. Grauman and N. Cesa-Bianchi and R. Garnett},
 pages = {},
 publisher = {Curran Associates, Inc.},
 title = {Global Convergence of Langevin Dynamics Based Algorithms for Nonconvex Optimization},
 volume = {31},
 year = {2018}
}

@inproceedings{Dubey2016VarianceRI,
 author = {Dubey, Kumar Avinava and J. Reddi, Sashank and Williamson, Sinead A and Poczos, Barnabas and Smola, Alexander J and Xing, Eric P},
 booktitle = {Advances in Neural Information Processing Systems},
 editor = {D. Lee and M. Sugiyama and U. Luxburg and I. Guyon and R. Garnett},
 pages = {},
 publisher = {Curran Associates, Inc.},
 title = {Variance Reduction in Stochastic Gradient Langevin Dynamics},
 volume = {29},
 year = {2016}
}

@inproceedings{ma2015completerecipestochasticgradient,
 author = {Ma, Yi-An and Chen, Tianqi and Fox, Emily},
 booktitle = {Advances in Neural Information Processing Systems},
 editor = {C. Cortes and N. Lawrence and D. Lee and M. Sugiyama and R. Garnett},
 pages = {},
 publisher = {Curran Associates, Inc.},
 title = {A Complete Recipe for Stochastic Gradient MCMC},
 volume = {28},
 year = {2015}
}

@article{https://doi.org/10.1111/j.1467-9868.2010.00765.x,
author = {Girolami, Mark and Calderhead, Ben},
title = {Riemann manifold Langevin and Hamiltonian Monte Carlo methods},
journal = {Journal of the Royal Statistical Society: Series B (Statistical Methodology)},
volume = {73},
number = {2},
pages = {123-214},
year = {2011}
}

@article{nemeth2019stochasticgradientmarkovchain,
author = {Christopher Nemeth and Paul Fearnhead},
title = {Stochastic Gradient Markov Chain Monte Carlo},
journal = {Journal of the American Statistical Association},
volume = {116},
number = {533},
pages = {433--450},
year = {2021},
publisher = {Taylor \& Francis},
doi = {10.1080/01621459.2020.1847120},
}

@inproceedings{
chung2024diffusionposteriorsamplinggeneral,
title={Diffusion Posterior Sampling for General Noisy Inverse Problems},
author={Hyungjin Chung and Jeongsol Kim and Michael Thompson Mccann and Marc Louis Klasky and Jong Chul Ye},
booktitle={The Eleventh International Conference on Learning Representations },
year={2023},
}

@inproceedings{song2020generativemodelingestimatinggradients,
 author = {Song, Yang and Ermon, Stefano},
 booktitle = {Advances in Neural Information Processing Systems},
 editor = {H. Wallach and H. Larochelle and A. Beygelzimer and F. d\textquotesingle Alch\'{e}-Buc and E. Fox and R. Garnett},
 pages = {},
 publisher = {Curran Associates, Inc.},
 title = {Generative Modeling by Estimating Gradients of the Data Distribution},
 volume = {32},
 year = {2019}
}

@inproceedings{
song2021scorebased,
title={Score-Based Generative Modeling through Stochastic Differential Equations},
author={Yang Song and Jascha Sohl-Dickstein and Diederik P Kingma and Abhishek Kumar and Stefano Ermon and Ben Poole},
booktitle={International Conference on Learning Representations},
year={2021},
}

@book{nesterov2018lectures,
  title     = {Lectures on Convex Optimization},
  author    = {Nesterov, Yurii},
  year      = {2018},
  publisher = {Springer},
  address   = {Cham, Switzerland},
  series    = {Springer Optimization and Its Applications}
}

@InProceedings{offset_rad,
  title = 	 {Learning with Square Loss: Localization through Offset Rademacher Complexity},
  author = 	 {Liang, Tengyuan and Rakhlin, Alexander and Sridharan, Karthik},
  booktitle = 	 {Proceedings of The 28th Conference on Learning Theory},
  pages = 	 {1260--1285},
  year = 	 {2015},
  editor = 	 {Grünwald, Peter and Hazan, Elad and Kale, Satyen},
  volume = 	 {40},
  series = 	 {Proceedings of Machine Learning Research},
  address = 	 {Paris, France},
  month = 	 {03--06 Jul},
  publisher =    {PMLR},
}

@Book{fp_arithmetic_muller_18,
  title        = {Handbook of Floating-Point Arithmetic, 2nd edition},
  author    = {Muller, Jean-Michel and Brunie, Nicolas and de Dinechin, Florent and Jeannerod, Claude-Pierre and Joldes, Mioara and
          Lef{\`e}vre, Vincent and Melquiond, Guillaume and Revol,
          Nathalie and  Torres, Serge},
  publisher    = {{B}irkh\"auser {B}oston },
  pages    = {632},
  note        = {{ACM} {G}.1.0; {G}.1.2; {G}.4; {B}.2.0; {B}.2.4; {F}.2.1.,
                  ISBN 978-3-319-76525-9},
  year        = {2018},
}

@article{
nair2026softmax,
title={Softmax is \$1/2\$-Lipschitz: A tight bound across all \${\textbackslash}ell\_p\$ norms},
author={Pravin Nair},
journal={Transactions on Machine Learning Research},
issn={2835-8856},
year={2026},
}

@InProceedings{kontorovich13,
  title = 	 {Concentration in unbounded metric spaces and algorithmic stability},
  author = 	 {Kontorovich, Aryeh},
  booktitle = 	 {Proceedings of the 31st International Conference on Machine Learning},
  pages = 	 {28--36},
  year = 	 {2014},
  editor = 	 {Xing, Eric P. and Jebara, Tony},
  volume = 	 {32},
  series = 	 {Proceedings of Machine Learning Research},
  address = 	 {Bejing, China},
  month = 	 {22--24 Jun},
  publisher =    {PMLR},
}

@article{galvez24,
  author  = {Borja Rodr{{\'i}}guez-G{{\'a}}lvez and Ragnar Thobaben and Mikael Skoglund},
  title   = {More {PAC-B}ayes bounds: From bounded losses, to losses with general tail behaviors, to anytime validity},
  journal = {Journal of Machine Learning Research},
  year    = {2024},
  volume  = {25},
  number  = {110},
  pages   = {1--43},
}

@inproceedings{rosset_04,
 author = {Rosset, Saharon},
 booktitle = {Advances in Neural Information Processing Systems},
 editor = {L. Saul and Y. Weiss and L. Bottou},
 pages = {},
 publisher = {MIT Press},
 title = {Following Curved Regularized Optimization Solution Paths},
 volume = {17},
 year = {2004}
}

@InProceedings{poly_approx_vitercik2020,
  title = 	 {Refined bounds for algorithm configuration: The knife-edge of dual class approximability},
  author =       {Balcan, Maria-Florina and Sandholm, Tuomas and Vitercik, Ellen},
  booktitle = 	 {Proceedings of the 37th International Conference on Machine Learning},
  pages = 	 {580--590},
  year = 	 {2020},
  editor = 	 {III, Hal Daumé and Singh, Aarti},
  volume = 	 {119},
  series = 	 {Proceedings of Machine Learning Research},
  month = 	 {13--18 Jul},
  publisher =    {PMLR},
}

@incollection{gabrielov2004complexity,
  author    = {Gabrielov, Andrei and Vorobjov, Nicolai},
  title     = {Complexity of computations with Pfaffian and Noetherian functions},
  booktitle = {Normal Forms, Bifurcations and Finiteness Problems in Differential Equations},
  editor    = {Ilyashenko, Y. and Rousseau, C.},
  series    = {NATO Science Series II: Mathematics, Physics and Chemistry},
  volume    = {137},
  pages     = {211--250},
  publisher = {Kluwer Academic Publishers},
  address   = {Dordrecht},
  year      = {2004}
}

\appendix

\section{Additional Related Work}\label{sec:add_rel_work}
Proving generalization guarantees on learning algorithms using pseudo-dimension is a popular approach in machine learning literature \citep{nn_theory_bartlett_99}. There are several approaches to bound the pseudo-dimension of a function class, like covering-number based arguments. In this work, we use the algorithmic framework of \citet{gj_95}, which upper bounds the pseudo-dimension of a function class $\mcF$ by constructing an algorithm $\Gamma_{x,r}$ to calculate whether $f(x)\ge r$ for any input $f\in\mcF$. \citet{nn_pd_bartlett19}  use this approach to show an almost-tight upper bound on the pseudo-dimension of a neural network class using ReLU activation with $W$ number of parameters and $D$ layers to be $O(WD\log(W))$. Building on these results, we assume that the predicate complexity and degree of computing the given functions is known and use these values to calculate the pseudo-dimension of the validation loss function class. 

The foundational idea of using gradient based diffusion processes to sample from a target density traces its roots from statistical physics literature, but it was introduced in modern statistics by~\cite{grenander1983tutorial}, and formalized as the Unadjusted Langevin Algorithm (ULA) by~\cite{etde_5964505}. The adoption in machine learning was encouraged by \citet{Welling2011BayesianLV}, who proposed stochastic gradient Langevin dynamics (SGLD), replacing exact gradients with minibatch stochastic gradients and eliminating the Metropolis correction for scalability to large datasets. Non-asymptotic convergence guarantees for the ULA in Wasserstein and TV distance for log-concave distributions were established  by~\cite{pmlr-v65-dalalyan17a} and~\cite{durmus2016nonasymptoticconvergenceanalysisunadjusted}. \citet{raginsky2017nonconvexlearningstochasticgradient, xu2020globalconvergencelangevindynamics} analyse the convergence of Langevin-based gradient algorithms to minimizers of the loss function for non-convex settings by assuming dissipativity of the energy function. Further refinements such as preconditioning with curvature information~\cite{https://doi.org/10.1111/j.1467-9868.2010.00765.x}, stochastic gradient MCMC frameworks unifying Langevin and Hamiltonian variants~\cite{ma2015completerecipestochasticgradient}, and variance-reduced estimators such as SVRG-Langevin~\cite{Dubey2016VarianceRI} were developed, which improve convergence rates while retaining the minibatch scalability, thus making Langevin methods attractive for modern Bayesian deep learning and large-scale latent variable models. As a result, Langevin dynamics has now become a central tool for scalable posterior sampling in Bayesian inference and in the diffusion based generative modeling literature~\citep{nemeth2019stochasticgradientmarkovchain, chung2024diffusionposteriorsamplinggeneral, song2020generativemodelingestimatinggradients, song2021scorebased}.

\section{Convergence Guarantees on Langevin Algorithms}\label{sec:asymp_lgd}


Recall the update step for Langevin diffusion as presented in Section \ref{sec:bckg}. For a distribution $\pi \propto \exp(-U)$ defined over $\R^d$, Langevin diffusion follows a continuous process governed by the following Stochastic Differential Equation (SDE):
\begin{align*}
    \dd z_t = -\nabla U(z_t)\dd t + \sqrt{2}\dd \xi_t,
\end{align*}
where $\xi_t$ corresponds to Brownian motion in $\R^d$. In practice, we simulate this continuous process using a discretization called the Unadjusted Langevin Algorithm (ULA). Starting with a sample $Z_{(0)}$, ULA constitutes the following iterative updates:
\begin{align*}
    z_{(k+1)} = z_{(k)} - \eta_{k + 1}\nabla U+\sqrt{2\eta_{k + 1}} \xi_k;\quad \xi_k\sim \mcN(0,\mathbb{I}).
\end{align*}

For simplicity, we also state ULA with constant step sizes, where $\eta_k = \eta \quad \forall k$. We call this the homogeneous ULA that performs the following iterative updates to produce samples from $\pi \propto \exp(-U)$:
\begin{align*}
    z_{(k+1)} = z_{(k)} - \eta\nabla U+\sqrt{2\eta} \xi_k;\quad \xi_k\sim \mcN(0,\mathbb{I}).
\end{align*}

We will use results from \citep{durmus_ula_19} which provide a detailed analysis of homogeneous and inhomogeneous ULA. We recall the following two assumptions:
\begin{app_assumption}\label{appass:grad_lip}
    The function $U$ is continuously differentiable on $\R^d$ and gradient Lipschitz: there exists $\beta\ge 0$ such that for all $z_1,z_2\in\R^d$, $\|\nabla U(z_1)-\nabla U(z_2)\| \le \beta\|z_1-z_2\|$.
\end{app_assumption}
\begin{app_assumption}\label{appass:strong_conv}
    $U$ is strongly convex, i.e. there exists $\alpha > 0$ such that for all $z_1,z_2\in \R^d$,
    \begin{align*}
        U(z_2) \ge U(z_1) + \langle\nabla U(z_1), z_2-z_1\rangle + (\alpha/2)\|z_1-z_2\|^2.
    \end{align*}
\end{app_assumption}
Note that for both of these assumptions to hold, $\alpha\le \beta$. In the following, assume $\kappa = \frac{2\alpha\beta}{\alpha + \beta}$, and the sequence of step-sizes $\eta_k$ is non-increasing with $\eta_1 \le 1/(\alpha+\beta)$. Lemma \ref{lem:wass_ub} and Proposition \ref{lem:ula_conv} show that ULA samples from a stationary distribution that is close to the target distribution as measured by the Wasserstein-2 metric, which is defined below:
\begin{definition}[Wasserstein-2 metric]
    For two distributions $\pi,\pi'$, and $\Pi(\pi,\pi')$ being the set of all couplings between the two distributions, we define the Wasserstein-2 ($W_2$) distance as:
    \begin{align*}
        W_2^2(\pi,\pi') = \inf_{\gamma\in \Pi(\pi,\pi')} \E_{z,z' \sim \gamma}\left[\|z-z'\|^2\right].
    \end{align*}
\end{definition}

We get the following result on the W2 distance between the distribution of $z_{(k)}$, denoted $\pi^k$, and target distribution of samples, $\pi$.

\begin{lemma}[Thm 5, \citep{durmus_ula_19}]\label{lem:wass_ub}
    Under the assumptions \ref{appass:grad_lip} and \ref{appass:strong_conv}, assuming we start the ULA from $\pi^0 = \delta_z$ for some $z\in \R^d$, we get that for any $k\ge 1$,
    \begin{align*}
        W_2^2(\pi^k, \pi) \le u_k^{(1)}(\eta)\{\|z-z^*\|^2 + d/\alpha\} + u_k^{(2)}(\eta),
    \end{align*}
    where,
    \begin{align*}
        u_k^{(1)}(\eta) = 2\prod_{i=1}^k (1-\kappa\eta_k/2),
    \end{align*}
    and,
    \begin{align*}
        u_k^{(2)}(\eta) = \beta^2d\sum_{i=1}^k\left[\eta_i^2 \{\kappa^{-1} + \eta_i\} \left\{2 + \frac{\beta^2\eta_i}{\alpha} + \frac{\beta^2\eta_i^2}{6}\right\} \prod_{j=i+1}^k(1-\kappa \eta_j/2) \right].
    \end{align*}
    We denote by $z^*$, the unique minimum of the potential function $U$. 
\end{lemma}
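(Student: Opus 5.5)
The plan is a \emph{synchronous coupling} argument comparing the ULA chain with the continuous Langevin diffusion started at stationarity. Let $(Y_t)_{t\ge0}$ solve $\dd Y_t = -\nabla U(Y_t)\dd t+\sqrt2\,\dd \xi_t$ with $Y_0\sim\pi$, so that $Y_t\sim\pi$ for all $t$. Set $t_k=\sum_{i\le k}\eta_i$ and drive the chain with the same Brownian increments, $\sqrt{\eta_{k+1}}\xi_k=(\xi_{t_{k+1}}-\xi_{t_k})/\sqrt{\cdots}$ appropriately normalised. Start the chain at $z_{(0)}=z$, coupled with $Y_0$ arbitrarily. Then $(z_{(k)},Y_{t_k})$ is a coupling of $(\pi^k,\pi)$, and it suffices to bound $\delta_k:=\E\|z_{(k)}-Y_{t_k}\|^2$ by the right-hand side.

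\textbf{Initial error.} Using $\|a+b\|^2\le 2\|a\|^2+2\|b\|^2$, we get $\delta_0\le 2\|z-z^*\|^2+2\E_\pi\|Y-z^*\|^2$. I would show $\E_\pi\|Y-z^*\|^2\le d/\alpha$ by applying the generator of the diffusion to $V(y)=\|y-z^*\|^2$. Stationarity gives
\[
0=\E_\pi\bigl[-2\langle\nabla U(Y),Y-z^*\rangle+2d\bigr].
\]
Strong convexity together with $\nabla U(z^*)=0$ gives $\langle\nabla U(Y),Y-z^*\rangle\ge\alpha\|Y-z^*\|^2$. This produces the $u_k^{(1)}$ prefactor $2$ multiplying $\{\|z-z^*\|^2+d/\alpha\}$. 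A companion identity, $\E_\pi\|\nabla U(Y)\|^2=\E_\pi[\Delta U]\le\beta d$, will be needed later.

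\textbf{One-step recursion.} Write $e_k=z_{(k)}-Y_{t_k}$. Then
\[
e_{k+1}=\bigl[e_k-\eta_{k+1}\bigl(\nabla U(z_{(k)})-\nabla U(Y_{t_k})\bigr)\bigr]+\int_{t_k}^{t_{k+1}}\bigl(\nabla U(Y_s)-\nabla U(Y_{t_k})\bigr)\dd s=:A_k+D_k.
\]
For the contraction term, co-coercivity of $\nabla U$ under Assumptions \ref{appass:grad_lip}--\ref{appass:strong_conv} gives
\[
\langle\nabla U(x)-\nabla U(y),x-y\rangle\ge\tfrac{\alpha\beta}{\alpha+\beta}\|x-y\|^2+\tfrac1{\alpha+\beta}\|\nabla U(x)-\nabla U(y)\|^2,
\]
so that for $\eta\le 1/(\alpha+\beta)$ we obtain $\|A_k\|^2\le(1-\kappa\eta_{k+1})\|e_k\|^2$.

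For the discretization term, apply Itô to $\nabla U(Y_s)$ for smooth $U$ (general $U$ follows by approximation). This splits $D_k$ into a martingale part, of order $\beta\int\!\int\dd\xi$, and a drift part. Using stationarity and the identities above, I expect $\E\|D_k\|^2\lesssim \beta^2 d\,\eta^3\{\ldots\}$ and $\E\|\text{drift part}\|^2\lesssim \beta^2 d\,\eta^2(\ldots)$, producing the factor $\{2+\beta^2\eta/\alpha+\beta^2\eta^2/6\}$.

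\textbf{Main obstacle: the cross term $2\E\langle A_k,D_k\rangle$.} I would first condition on $\mathcal F_{t_k}$. The pure stochastic-integral component of $D_k$ has conditional mean zero, but it is not independent of $A_k$ only through $e_k$; since $A_k$ is $\mathcal F_{t_k}$-measurable, its cross term does vanish. The remaining drift component is handled by Young's inequality with weight $\kappa\eta/2$:
\[
2\langle A,D'\rangle\le\tfrac{\kappa\eta}{2}\|A\|^2+\tfrac{2}{\kappa\eta}\|D'\|^2.
\]
This turns $(1-\kappa\eta)(1+\kappa\eta/2)$ into $\le 1-\kappa\eta/2$, and the error contributions pick up the factor $\eta^2(\kappa^{-1}+\eta)$. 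The result is
\[
\delta_{k+1}\le(1-\kappa\eta_{k+1}/2)\,\delta_k+\beta^2 d\,\eta_{k+1}^2\{\kappa^{-1}+\eta_{k+1}\}\{2+\beta^2\eta_{k+1}/\alpha+\beta^2\eta_{k+1}^2/6\}.
\]

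\textbf{Conclusion.} Unrolling this recursion from $\delta_0$ gives exactly $u_k^{(1)}(\eta)\{\|z-z^*\|^2+d/\alpha\}+u_k^{(2)}(\eta)$, and $W_2^2(\pi^k,\pi)\le\delta_k$ completes the argument. The delicate constant-tracking all sits in the cross-term and Itô estimates; that is where I expect the work to concentrate, and I would follow Durmus--Moulines' conditional-expectation bookkeeping there.
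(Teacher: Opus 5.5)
Your skeleton is sound and is essentially the Durmus--Moulines synchronous-coupling argument that the paper cites without proof. That skeleton is: coupling with the stationary diffusion, $\delta_0\le 2\{\|z-z^*\|^2+d/\alpha\}$ from the generator identity, the co-coercivity contraction $\|A_k\|^2\le(1-\kappa\eta_{k+1})\|e_k\|^2$, Young with weight $\kappa\eta/2$, and unrolling.

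\textbf{The gap is in the discretization term $D_k$.} Applying It\^o's formula to $\nabla U(Y_s)$ requires $U\in C^3$, and it produces the drift $-\nabla^2U(Y_s)\nabla U(Y_s)+\nabla\Delta U(Y_s)$. The third-derivative term is not controlled by Assumptions~\ref{appass:grad_lip}--\ref{appass:strong_conv}. Mollifying $U$ does not help, because $\nabla\Delta U_\epsilon$ need not stay bounded, so no estimate in terms of $\beta$ and $d$ alone comes out. The bookkeeping also does not match the target. Your drift part is a double time integral, hence $O(\eta^2)$ pathwise, so Young would contribute $O(\eta^3/\kappa)$ per step. That is the sharper rate Durmus--Moulines obtain only under an additional Hessian-Lipschitz hypothesis. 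The $\kappa^{-1}\eta_i^2$ summand in $u^{(2)}_k$ is exactly what Young produces when applied to all of $D_k$. Your conditional-mean idea is legitimate in the form $D_k=\E[D_k\mid\mathcal{F}_{t_k}]+(D_k-\E[D_k\mid\mathcal{F}_{t_k}])$, which needs no smoothness. Under these assumptions, however, it gains nothing, since $\E\|\E[D_k\mid\mathcal{F}_{t_k}]\|^2\le\E\|D_k\|^2$.

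\textbf{The fix, and the ingredient your sketch omits.} With $\eta=\eta_{k+1}$, write
\[
\E\|e_{k+1}\|^2\le\bigl(1+\tfrac{\kappa\eta}{2}\bigr)(1-\kappa\eta)\,\delta_k+\bigl(1+\tfrac{2}{\kappa\eta}\bigr)\E\|D_k\|^2,\qquad \E\|D_k\|^2\le\eta\beta^2\int_0^{\eta}\E\|Y_{t_k+h}-Y_{t_k}\|^2\,\dd h .
\]
The second bound follows from Cauchy--Schwarz and the Lipschitz gradient. You then need a moment bound on diffusion increments,
\[
\E_x\|Y_h-x\|^2\le dh\bigl(2+\beta^2h^2/3\bigr)+\tfrac32\beta^2h^2\|x-z^*\|^2 .
\]
It follows from It\^o on $\|Y_h-x\|^2$, monotonicity of $\nabla U$ (giving $-\langle\nabla U(Y_s),Y_s-x\rangle\le-\langle\nabla U(x),Y_s-x\rangle$), and $\E_x\|Y_u-z^*\|^2\le\|x-z^*\|^2e^{-2\alpha u}+(d/\alpha)(1-e^{-2\alpha u})$.

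Average over $Y_{t_k}\sim\pi$ using $\E_\pi\|Y-z^*\|^2\le d/\alpha$, which you already have; this is where the $\beta^2\eta/\alpha$ term comes from. The integral becomes at most $d\eta^2+\beta^2d\eta^4/12+\beta^2d\eta^3/(2\alpha)$. Since $(1+\frac{2}{\kappa\eta})\eta\le 2(\kappa^{-1}+\eta)$, the error term is exactly $\beta^2d\eta^2\{\kappa^{-1}+\eta\}\{2+\beta^2\eta/\alpha+\beta^2\eta^2/6\}$. Since $(1+\frac{\kappa\eta}{2})(1-\kappa\eta)\le 1-\frac{\kappa\eta}{2}$, the recursion you wrote then holds. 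Without this increment estimate, the specific constants in $u^{(2)}_k$ remain unproved rather than merely tedious.
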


\noindent We consider the following proposition to derive an asymptotic bound on Wasserstein distance, for a homogeneous ULA:
\begin{proposition}\label{lem:ula_conv}
    Under the assumptions \ref{ass:grad_lip} and \ref{ass:strong_conv}, assuming we start the homogeneous ULA from $\pi^0 = \delta_z$ for some $z\in \R^d$,
    we get that for any $k\ge 1$,
    \begin{align*}
        W_2(\pi^k,\pi) \le \epsilon,
    \end{align*}
    using $\eta = O(\epsilon^2/d)$ and $k = \Omega\left(\frac{d}{\epsilon^2}\log\left(\frac{d+\|z-z^*\|^2}{\epsilon^2}\right)\right)$, where we hide constants that depend on $\alpha,\beta$. Here $z^*$ is the unique minimizer of the potential function $U$.
\end{proposition}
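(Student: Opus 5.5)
The plan is to specialize Lemma \ref{lem:wass_ub} to the homogeneous case $\eta_k=\eta$ for all $k$, and then choose $\eta$ and $k$ so that each of the two terms is at most $\epsilon^2/2$. With constant step size the products collapse. We get $u_k^{(1)}(\eta) = 2(1-\kappa\eta/2)^k \le 2\exp(-\kappa\eta k/2)$. The second term becomes
\begin{align*}
u_k^{(2)}(\eta) = \beta^2 d\,\eta^2\{\kappa^{-1}+\eta\}\left\{2+\frac{\beta^2\eta}{\alpha}+\frac{\beta^2\eta^2}{6}\right\}\sum_{i=1}^k (1-\kappa\eta/2)^{k-i}.
\end{align*}
The geometric sum is at most $2/(\kappa\eta)$. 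Since $\eta\le 1/(\alpha+\beta)$, the factors $\{\kappa^{-1}+\eta\}$ and $\{2+\beta^2\eta/\alpha+\beta^2\eta^2/6\}$ are bounded by constants depending only on $\alpha,\beta$. Hence $u_k^{(2)}(\eta)\le c_{\alpha,\beta}\, d\,\eta$, uniformly in $k$.

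The bias term is handled first. Choosing $\eta \le \epsilon^2/(2c_{\alpha,\beta} d)$, i.e.\ $\eta = O(\epsilon^2/d)$, gives $u_k^{(2)}(\eta)\le \epsilon^2/2$. We also keep $\eta\le 1/(\alpha+\beta)$, which holds for small $\epsilon$ or can be enforced by taking a minimum, since only constants change.

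Next comes the contraction term. We need $2\exp(-\kappa\eta k/2)\{\|z-z^*\|^2+d/\alpha\}\le \epsilon^2/2$. This holds as soon as
\begin{align*}
k \ge \frac{2}{\kappa\eta}\log\left(\frac{4(\|z-z^*\|^2+d/\alpha)}{\epsilon^2}\right).
\end{align*}
Substituting $\eta=\Theta(\epsilon^2/d)$, this reads $k=\Omega\left(\frac{d}{\epsilon^2}\log\frac{d+\|z-z^*\|^2}{\epsilon^2}\right)$, with $\alpha,\kappa$ absorbed into the constants. Adding the two contributions gives $W_2^2(\pi^k,\pi)\le\epsilon^2$, and taking square roots yields the claim.

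The main point requiring care is the uniform-in-$k$ bound on $u_k^{(2)}$. The geometric sum has to be bounded using $1-\kappa\eta/2\in(0,1)$, which follows from $\kappa\eta\le 2\alpha\beta/(\alpha+\beta)^2<1$. Beyond that, the only remaining step is bookkeeping the $\alpha,\beta$-dependent constants hidden in the $O(\cdot)$ and $\Omega(\cdot)$.
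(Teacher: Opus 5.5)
Your proposal is correct and follows essentially the same route as the paper. The paper also specializes Lemma \ref{lem:wass_ub} to constant step size, bounds $u_k^{(1)}$ by $2\exp(-k\kappa\eta/2)$, and uses the geometric sum to bound $u_k^{(2)}$ uniformly in $k$ by a constant times $d\eta$ (it makes the constant explicit, of order $\beta^4/\alpha^4$); it then picks $\eta$ so the discretization term is at most $\epsilon^2/2$ and $k$ so the contraction term is at most $\epsilon^2/2$. Your explicit choice $\eta=\Theta(\epsilon^2/d)$, rather than merely $O(\epsilon^2/d)$, is actually what the stated bound on $k$ requires, since the needed $k$ scales like $1/\eta$.
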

\begin{proof}
    We simplify terms from Lemma \ref{lem:wass_ub}:
     \begin{align*}
        u_k^{(1)}(\eta) &= 2\prod_{i=1}^k (1-\kappa\eta_k/2)\\
        &= 2(1-\kappa\eta/2)^k\\
        &\le 2\exp(-\frac{k\kappa \eta}{2}).
    \end{align*}
    And,
    \begin{align*}
        u_k^{(2)}(\eta) &= \beta^2d\sum_{i=1}^k\left[\eta_i^2 \{\kappa^{-1} + \eta_i\} \left\{2 + \frac{\beta^2\eta_i}{\alpha} + \frac{\beta^2\eta_i^2}{6}\right\} \prod_{j=i+1}^k(1-\kappa \eta_j/2) \right]\\
        &= \beta^2d\sum_{i=1}^k\left[\eta^2 \{\kappa^{-1} + \eta\} \left\{2 + \frac{\beta^2\eta}{\alpha} + \frac{\beta^2\eta^2}{6}\right\} (1-\kappa \eta/2)^{(k-i)} \right]\\
        &= \beta^2d\eta^2 \{\kappa^{-1} + \eta\} \left\{2 + \frac{\beta^2\eta}{\alpha} + \frac{\beta^2\eta^2}{6}\right\}\frac{1-(1-\kappa\eta/2)^k}{\kappa\eta/2}\\
        &\le 2\frac{\beta^2d\eta}{\kappa}\{\kappa^{-1} + \eta\} \left\{2 + \frac{\beta^2\eta}{\alpha} + \frac{\beta^2\eta^2}{6}\right\}\\
        &= \frac{\beta^2d\eta}{\alpha^2}\frac{(\alpha+\beta)}{\beta}\left\{\frac{\alpha+\beta}{2\beta} + \alpha\eta\right\}\left\{2 + \frac{\beta^2\eta}{\alpha} + \frac{\beta^2\eta^2}{6}\right\}\\
        &\le 2\frac{\beta^2d\eta}{\alpha^2}\{1 + \alpha\eta\}\left\{2 + \frac{\beta^2\eta}{\alpha} + \frac{\beta^2\eta^2}{6}\right\}\quad 
        \text{(since $\alpha\le \beta$)}\\
        &\le const.\times d\eta (\beta^2/\alpha^2 + \beta^4/\alpha^4) \le const.\times d\eta \beta^4/\alpha^4.
    \end{align*}
    Where in the last step we set $\eta\le 1/(\alpha+\beta)\implies \eta\le 1/\alpha$. We want $u_k^{(2)}(\eta) < \epsilon^2/2$, which happens with $\eta \le const.\times \alpha^4\epsilon^2/(dL^4)$. Finally, we want $u_k^{(1)}(\eta)\{\|z-z^*\|^2 + d/\alpha\} \le \epsilon^2/2$, which we get with $k = \Omega\left(\frac{d}{\epsilon^2}\log\left(\frac{d+\|z-z^*\|^2}{\epsilon^2}\right)\right)$,
    where we hide constants that depend on $\alpha$ and $\beta$. 
\end{proof}

We will be analyzing MCMC using samples from ULA. Define $\|g\|_{Lip}$ to be the Lipschitz constant of $g$, and the empirical mean starting from the $B$th iterate of homogeneous ULA as follows:
\begin{equation}
    \hat{\pi}_b^B(g) = \frac{1}{b}\sum_{i=B+1}^{b+B} g(X_i).
\end{equation}
We also define the expected value of $g$ as, $\pi(g) = \E_\pi[g]$. The following result on convergence of empirical mean will be crucial to our analysis of MCMC.
\begin{theorem}[Thm 15, 17, \citep{durmus_ula_19}]\label{thm:prob_mcmc}
Under the assumptions \ref{appass:grad_lip} and \ref{appass:strong_conv}, for all $B\ge 0, b\ge 1, r>0$ and Lipschitz functions $g:\R^d\rightarrow\R$:
    \begin{align*}
        \var(\hat{\pi}_b^B(g)) \le \frac{8\|g\|_{Lip}^2\left(1+\frac{(\kappa^{-1} + 2/(\alpha+\beta))}{b\eta}\right)}{\kappa^2b\eta},
    \end{align*}
    and,
    \begin{align*}
        Pr(\hat{\pi}_b^B(g) - \E[\hat{\pi}_b^B(g)] \ge r) \le \exp\left(-\frac{r^2\kappa^2b\eta}{16\|g\|^2_{Lip} \left(1+\frac{(\kappa^{-1} + 2/(\alpha+\beta))}{b\eta}\right)}\right).
    \end{align*}
\end{theorem}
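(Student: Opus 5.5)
The approach is to use the contraction of the ULA Markov kernel under Assumptions \ref{appass:grad_lip} and \ref{appass:strong_conv}, combined with a martingale (Doob) decomposition of $\hat{\pi}_b^B(g)$ over the Gaussian innovations $\xi_0,\dots,\xi_{B+b-1}$.

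\textbf{Step 1 (kernel contraction).} Write $R_\eta$ for the one-step kernel, so that $R_\eta h(x)=\E[h(x-\eta\nabla U(x)+\sqrt{2\eta}\,\xi)]$. Take two chains driven by the same noise. By the standard co-coercivity inequality for $\alpha$-strongly convex, $\beta$-smooth $U$ with $\eta\le 1/(\alpha+\beta)$,
\begin{align*}
\|x-\eta\nabla U(x)-y+\eta\nabla U(y)\|^2\le (1-\kappa\eta)\|x-y\|^2 .
\end{align*}
Hence for any Lipschitz $h$,
\begin{align*}
\|R_\eta^j h\|_{Lip}\le (1-\kappa\eta/2)^j\|h\|_{Lip}.
\end{align*}

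\textbf{Step 2 (martingale decomposition).} Let $\mathcal{F}_k=\sigma(\xi_0,\dots,\xi_{k-1})$ and $S=\hat{\pi}_b^B(g)$. Then
\begin{align*}
S-\E[S]=\sum_{k=1}^{B+b} D_k,\qquad D_k=\E[S\mid\mathcal{F}_k]-\E[S\mid\mathcal{F}_{k-1}].
\end{align*}
Conditionally on $\mathcal{F}_{k-1}$, the quantity $\E[S\mid\mathcal{F}_k]$ is a function of $z_{(k)}=z_{(k-1)}-\eta\nabla U(z_{(k-1)})+\sqrt{2\eta}\,\xi_{k-1}$ only. That function is $\frac1b\sum_{i\ge \max(k,B+1)} R_\eta^{\,i-k}g$, up to deterministic terms. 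By Step 1 it is Lipschitz in $z_{(k)}$ with constant
\begin{align*}
L_k\le \frac{\|g\|_{Lip}}{b}\sum_{i\ge \max(k,B+1)}^{B+b}(1-\kappa\eta/2)^{i-k}.
\end{align*}
This gives two regimes:
\begin{itemize}
\item for $k$ inside the averaging window, $L_k\le \frac{2\|g\|_{Lip}}{b\kappa\eta}$;
\item for $k\le B$, $L_k\le \frac{2\|g\|_{Lip}}{b\kappa\eta}(1-\kappa\eta/2)^{B+1-k}$.
\end{itemize}

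\textbf{Step 3 (variance).} Each $D_k$ is a centered $(\sqrt{2\eta}L_k)$-Lipschitz function of a standard Gaussian. By the Gaussian Poincaré inequality, $\E[D_k^2\mid\mathcal{F}_{k-1}]\le 2\eta L_k^2$. Orthogonality of martingale increments then gives $\var(S)\le 2\eta\sum_k L_k^2$.
\begin{itemize}
\item The $b$ window terms contribute at most $b\cdot 2\eta\cdot 4\|g\|_{Lip}^2/(b\kappa\eta)^2=8\|g\|_{Lip}^2/(\kappa^2 b\eta)$.
\item The burn-in terms form a geometric series. Summing it (together with the boundary correction from $\eta\le 2/(\alpha+\beta)$) produces the extra factor $(\kappa^{-1}+2/(\alpha+\beta))/(b\eta)$.
\end{itemize}
Together these give the stated variance bound.

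\textbf{Step 4 (concentration).} Gaussian concentration for Lipschitz functions gives the conditional bound
\begin{align*}
\E[e^{\lambda D_k}\mid\mathcal{F}_{k-1}]\le \exp\left(\lambda^2\eta L_k^2\right).
\end{align*}
Iterating the tower property yields $\E[e^{\lambda(S-\E S)}]\le\exp\left(\lambda^2\eta\sum_k L_k^2\right)$. Applying Chernoff and optimizing over $\lambda$ gives
\begin{align*}
\Pr(S-\E S\ge r)\le\exp\left(-\frac{r^2}{4\eta\sum_k L_k^2}\right).
\end{align*}
Inserting the bound on $\sum_k L_k^2$ from Step 3 yields the stated exponent with the constant $16$.

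\textbf{Main obstacle.} The delicate part is Step 2: showing that the conditional expectation is exactly Lipschitz in the single new innovation with the geometrically decaying constant. This needs the synchronous coupling to transfer Lipschitz bounds through $R_\eta^j$ uniformly in $x$. The constants from the burn-in contribution must also be tracked carefully to recover the precise factor $\kappa^{-1}+2/(\alpha+\beta)$ rather than a looser one.
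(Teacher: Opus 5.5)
Your argument is correct and is essentially the proof behind the cited Theorems 15 and 17 of Durmus and Moulines, which the paper imports without reproving. The shared ingredients are Lipschitz contraction of the ULA kernel via synchronous coupling, a Doob martingale decomposition over the Gaussian innovations, the Gaussian Poincar\'e inequality for the variance, and Gaussian (log-Sobolev/Herbst) concentration iterated through the tower property for the tail bound. One simplification: for constant step size, your burn-in geometric series is already at most $\frac{8\|g\|_{Lip}^2}{\kappa^2 b\eta}\cdot\frac{\kappa^{-1}}{b\eta}$, because $(1-\kappa\eta/2)^2\le 1-\kappa\eta/4$ when $\kappa\eta\le 1$. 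So no boundary correction is needed, and the stated factor containing the extra $2/(\alpha+\beta)$ follows a fortiori.
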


Finally, we are interested in the gap between $\E[\hat{\pi}_b^B(g)]$ and $\pi(g)$. We recall the following from Section 4 of \citet{durmus_ula_19},
\begin{align*}
    \{\E[\hat{\pi}_b^B(g)] - \pi(g)\}^2 &\le \frac{\|g\|^2_{Lip}}{b} \sum_{i=B+1}^{b+B} W_2^2(\pi^i, \pi)
\end{align*}

We combine these results into the following proposition, which bounds the number of iterations and step-size for constant error in empirical mean with high probability.
\begin{proposition}[Restated Proposition \ref{prop:bounds}]
    If we run a homogeneous ULA from $\pi^0 = \delta_z$ for some $z\in \R^d$ with potential function $U$ that satisfies the assumptions \ref{appass:grad_lip} and \ref{appass:strong_conv} and $\eta=O(\epsilon^2/(\|g\|_{Lip}^2d))$, $B = \Omega\left(\frac{d\|g\|_{Lip}^2}{\epsilon^2}\log\left(\frac{(d+\|z-z^*\|^2)\|g\|_{Lip}^2}{\epsilon^2}\right)\right)$
    and $b$ that satisfies $b\eta \ge \frac{128\|g\|_{Lip}^2\log(1/\delta)}{\epsilon^2\kappa^2}$,
    \begin{align*}
        \var(\hat{\pi}_b^B)&\le \frac{\epsilon^2}{8\log(1/\delta)} \quad \text{and,}\\
        |\hat{\pi}_b^B(g) - \pi(g)| &\le \epsilon \quad \text{w.p. $\ge 1-\delta$}.
    \end{align*}
\end{proposition}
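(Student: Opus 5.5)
We will prove the restated Proposition by splitting the error into a bias term and a fluctuation term:
\begin{align*}
    |\hat{\pi}_b^B(g) - \pi(g)| \le |\E[\hat{\pi}_b^B(g)] - \pi(g)| + |\hat{\pi}_b^B(g) - \E[\hat{\pi}_b^B(g)]|.
\end{align*}
Our aim is to make each term at most $\epsilon/2$. The bias term will have size $\epsilon/2$ deterministically. The fluctuation term will be at most $\epsilon/2$ with probability $\ge 1-\delta$, using the concentration bound of Theorem \ref{thm:prob_mcmc}.

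\textbf{Bias.} We use the inequality recalled from Section 4 of \citet{durmus_ula_19}:
\begin{align*}
    \{\E[\hat{\pi}_b^B(g)] - \pi(g)\}^2 \le \frac{\|g\|_{Lip}^2}{b}\sum_{i=B+1}^{B+b} W_2^2(\pi^i,\pi).
\end{align*}
It therefore suffices to have $W_2(\pi^i,\pi)\le \epsilon/(2\|g\|_{Lip})$ for every $i\ge B+1$. We apply Proposition \ref{lem:ula_conv} with accuracy $\epsilon' = \epsilon/(2\|g\|_{Lip})$. This yields the step size $\eta = O(\epsilon^2/(\|g\|_{Lip}^2 d))$ and the burn-in $B=\Omega\left(\frac{d\|g\|_{Lip}^2}{\epsilon^2}\log\frac{(d+\|z-z^*\|^2)\|g\|_{Lip}^2}{\epsilon^2}\right)$, which are exactly the stated choices.

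One point needs care: the bound must hold for all $i\ge B$, not just at $i=B$. This follows from Lemma \ref{lem:wass_ub} in the homogeneous case. There $u_i^{(1)}$ is decreasing in $i$, and $u_i^{(2)}$ is bounded uniformly in $i$ by the geometric-sum bound already derived in the proof of Proposition \ref{lem:ula_conv}. Hence the $W_2$ bound at $B$ propagates to all later iterates.

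\textbf{Variance and concentration.} From Theorem \ref{thm:prob_mcmc},
\begin{align*}
    \var(\hat{\pi}_b^B(g)) \le \frac{8\|g\|_{Lip}^2}{\kappa^2 b\eta}\left(1+\frac{\kappa^{-1}+2/(\alpha+\beta)}{b\eta}\right).
\end{align*}
Under the hypothesis $b\eta\ge 128\|g\|_{Lip}^2\log(1/\delta)/(\epsilon^2\kappa^2)$, and once $b\eta$ exceeds the constant $\kappa^{-1}+2/(\alpha+\beta)$, the parenthetical factor is at most $2$. The variance is then at most $16\|g\|_{Lip}^2/(\kappa^2 b\eta)\le \epsilon^2/(8\log(1/\delta))$. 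The same substitution in the exponential tail bound gives $\exp(-\epsilon^2\kappa^2 b\eta/(128\|g\|_{Lip}^2))\le\delta$ at deviation $r=\epsilon/2$. Applying the bound to $-g$ gives the matching lower tail, at the cost of replacing $\delta$ by $\delta/2$, which is absorbed into constants.

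\textbf{Main obstacle.} The main obstacle is the constant bookkeeping. First, we must verify that $b\eta$ dominates $\kappa^{-1}+2/(\alpha+\beta)$, so that the factor $(1+\cdot)$ is at most $2$. This holds for $\epsilon$ small or $\delta\le e^{-1}$; otherwise the requirement is absorbed by enlarging the hidden constant. Second, we must keep the two-sided union bound and the $\epsilon/2$ splits consistent with the stated constant $128$. Beyond this, the argument is a direct assembly of Lemma \ref{lem:wass_ub}, Proposition \ref{lem:ula_conv}, and Theorem \ref{thm:prob_mcmc}.
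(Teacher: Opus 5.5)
Your proposal is correct and is essentially the paper's proof. It uses the same bias/fluctuation split, bounds the bias with the $W_2$ averaging inequality plus Proposition~\ref{lem:ula_conv} at accuracy $\epsilon/(2\|g\|_{Lip})$, and bounds the fluctuation with Theorem~\ref{thm:prob_mcmc} at $r=\epsilon/2$. Your extra checks make explicit what the paper compresses into ``$b\eta\gg 1$'': the factor $2$ reproducing exactly the constant $128$ and the variance bound, the $W_2$ bound holding for all $i\ge B$, and the two-sided tail.

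One small correction: $\delta\le e^{-1}$ by itself does not ensure $b\eta\ge\kappa^{-1}+2/(\alpha+\beta)$. Since that constant is at most $2/\kappa$, the hypothesis on $b$ ensures it whenever $\epsilon^2\le 64\|g\|_{Lip}^2\log(1/\delta)/\kappa$. This is the small-$\epsilon$ regime the paper tacitly assumes.
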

\begin{proof}
From the definition of the Wasserstein distance, $W_2^2(\pi^i,\pi) = \inf_{\gamma\in\Pi(\pi^i,\pi)}\E_{(x,y)\sim\gamma}[\|x-y\|^2]$, for $\Pi$ being the set of all couplings. In the following we assume the optimal coupling $\gamma^i$ between $\pi^i$ and $\pi$ such that $W_2^2(\pi^i,\pi) = \E_{(x,y)\sim\gamma^i}[\|x-y\|^2]$, which allows us to upper bound $|\E[\hat{\pi}_b^B(g)]-\pi(g)|$ as follows:
\begin{align*}
    \{\E[\hat{\pi}_b^B(g)] - \pi(g)\}^2 &= \left\{\frac{1}{b} \sum_{k=B+1}^{B+b} \E_{\pi,\pi^k}[g(z_{(k)}) - g(z)]\right\}^2 \\
    &\le \left\{\frac{\|g\|_{Lip}}{b} \sum_{k=B+1}^{B+b} \E[z_{(k)} - z]\right\}^2\\
    &\le \frac{\|g\|_{Lip}^2}{b^2}\sum_{k=B+1}^{B+b} \E[z_{(k)} - z]^2 \\
    &\le \frac{\|g\|^2_{Lip}}{b} \sum_{i=B+1}^{B+b} W_2^2(\pi^i, \pi).
\end{align*}
We know from Proposition \ref{lem:ula_conv} that $W_2(\pi^i,\pi) \le \epsilon$ for all $i\ge B$ for $\eta = O(\epsilon^2/d)$ and $B = \Omega\left(\frac{d}{\epsilon^2}\log\left(\frac{d+\|z-z^*\|^2}{\epsilon^2}\right)\right)$.
Thus, $|\E[\hat{\pi}_b^B(g)] - \pi(g)| \le \epsilon/2$ if $\eta=O(\epsilon^2/(\|g\|_{Lip}^2d))$ and $B = \Omega\left(\frac{d\|g\|_{Lip}^2}{\epsilon^2}\log\left(\frac{(d+\|z-z^*\|^2)\|g\|_{Lip}^2}{\epsilon^2}\right)\right)$,
which completes the first part of the proof.

Next, we need to bound $\hat{\pi}_b^B(g) - \E[\hat{\pi}_b^B(g)]$. For the derived value of $\eta = O(\epsilon^2/(\|g\|_{Lip}^2d))$ and a $b$ that satisfies $b\eta \ge \frac{128\|g\|_{Lip}^2\log(1/\delta)}{\epsilon^2\kappa^2}$, we have that $b\eta\gg 1$ resulting in $\var(\hat{\pi}_b^B)\le \frac{\epsilon^2}{8\log(1/\delta)}$ using Theorem \ref{thm:prob_mcmc}. Theorem \ref{thm:prob_mcmc} would thus also imply $|\hat{\pi}_b^B(g) - \E[\hat{\pi}_b^B(g)]| \le \epsilon/2$ with probability $\ge 1-\delta$, which finishes the proof.

\end{proof}
\begin{remark}
    We use the above Proposition to prove Theorem \ref{thm:lgd_conv}, where we replace $g(.)$ with $g(X_v;.)$, such that $\hat{\pi}_b^B$ is equivalent to $\hat{g}_v$ and the distribution $\pi$ is equivalent to the posterior distribution $w|X,y$.    
\end{remark}


\section{Tighter Pseudo-Dimension bounds for Regularized Logistic Regression}\label{sec:rlr_pdim}
We present a brief discussion improving the bounds on learning hyperparameters for regularized logistic regression from prior work. We consider the setting of \citet{balcan_23}, where the authors consider logistic regression regularized with either the $l1$ or the $l2$ penalty. We first re-define the setting of \citet{balcan_23} using notation from this paper formally. 

We consider classification on $m=2$ classes using the linear model $f(x;w)=x^\intercal w$. Since we define classification models to have outputs defined in $\R^m$, we re-write the model as $f(x;w)=[-x^\intercal w/2,x^\intercal w/2]$. We define the loss function
\begin{align*}
    l_{RLR}(y,f(X;w)) &= \sum_i \left(-\log\left(\frac{\exp(f(X^{(i)};w)[y^{(i)}])}{\exp(f(X^{(i)};w)[0]) + \exp(f(X^{(i)};w)[1])}\right)\right)\\
    &= \sum_i \left(-\log\left(\frac{1}{1 + \exp((-1)^{y^{(i)}})X^{(i)\intercal} w}\right)\right)\\
    &= -\log\left(\Pi_i\left(\frac{1}{1 + \exp((-1)^{y^{(i)}})X^{(i)\intercal} w}\right)\right).
\end{align*}
Consider regularizers which satisfy either $R(w;\theta) = \theta\|w\|_1$ or $R(w;\theta) = \theta \|w\|_2^2$ for $\theta\in[\theta_{min},\theta_{max}]$ with $\theta_{min}>0$. This setting corresponds exactly to regularized logistic regression with either $l1$ or $l2$ penalty per the treatment of \citet{balcan_23}. \citet{balcan_23} note that there is no closed-form solution to the optimization problem: $\hat{w}(X,y;\theta) = \argmin_w l_{RLR}(y,f(X;w)) + R(w;\theta)$. They correspondingly consider an optimization procedure that produces solutions that are $\epsilon$-close to the true solutions. We omit the discussion of the algorithm here, and defer the reader to \citet{balcan_23} for more details. We provide the approximation result of the algorithm as an existence result below.
 
\begin{theorem}[\citep{rosset_04, balcan_23}]
    There exists an algorithm to approximate the solution to $\hat{w}(X,y;\theta) = \argmin_w l_{RLR}(y,f(X;w)) + R(w;\theta)$, for $R$ corresponding to the $l1$ or $l2$ penalty such that the following are satisfied for a small enough $\epsilon$:
    \begin{enumerate}
        \item $\|w^{(\epsilon)}(X,y;\theta)-\hat{w}(X,y;\theta)\|=O(\epsilon^2)$
        \item $w^{(\epsilon)}(X,y;\theta)$ is piecewise linear in $\theta$, with $(\theta_{max}-\theta_{min})/\epsilon$ pieces,
    \end{enumerate}
    where $w^{(\epsilon)}(X,y;\theta)$ is the $\epsilon$-approximate algorithm.
\end{theorem}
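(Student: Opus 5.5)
The plan is to prove the statement constructively, following the predictor--corrector path-following scheme of \citet{rosset_04}. Discretize $[\theta_{min},\theta_{max}]$ into the grid $\theta_k=\theta_{max}-k\epsilon$, for $k=0,\ldots,K$ with $K=(\theta_{max}-\theta_{min})/\epsilon$. At each grid point we produce an iterate $w_k$, and we define $w^{(\epsilon)}(X,y;\theta)$ by linear interpolation between consecutive iterates. This gives the second property by construction: the map is piecewise linear in $\theta$ with $K$ pieces. Set $G(w,\theta)=l_{RLR}(y,f(X;w))+R(w;\theta)$. The iterates are defined by $w_0=\hat w(X,y;\theta_{max})$, computed to accuracy $O(\epsilon^2)$ by any convex solver, and $w_{k+1}=w_k-[\nabla^2_wG(w_k,\theta_{k+1})]^{-1}\nabla_wG(w_k,\theta_{k+1})$, a single Newton step from the previous point at the new $\theta$.

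The first property follows from three estimates, done first in the $l2$ case.

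\textbf{Step 1 (regularity of the path).} $G$ is $2\theta_{min}$-strongly convex in $w$ uniformly over the interval, and $l_{RLR}$ has bounded and Lipschitz Hessian on bounded sets. The implicit function theorem applied to $\nabla_wG(\hat w(\theta),\theta)=0$ then gives that $\theta\mapsto\hat w(\theta)$ is $C^2$, with
\begin{align*}
\|\partial_\theta\hat w\|\le M_1,\qquad \|\partial_\theta^2\hat w\|\le M_2 .
\end{align*}
The constants depend on $\theta_{min}$, $\|X\|$ and $\theta_{max}$; for instance $\|\hat w\|\le\sqrt{l_{RLR}(y,0)/\theta_{min}}$ bounds the region where the path lives.

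\textbf{Step 2 (Newton correction without accumulation).} Suppose $\|w_k-\hat w(\theta_k)\|\le c\epsilon^2$. Then $\|w_k-\hat w(\theta_{k+1})\|\le c\epsilon^2+M_1\epsilon$. The quadratic convergence bound for Newton's method is
\begin{align*}
\|w_{k+1}-\hat w(\theta_{k+1})\|\le \frac{L_H}{2\cdot 2\theta_{min}}\|w_k-\hat w(\theta_{k+1})\|^2,
\end{align*}
where $L_H$ is the Hessian Lipschitz constant. This yields $\|w_{k+1}-\hat w(\theta_{k+1})\|\le \frac{L_H}{4\theta_{min}}(M_1\epsilon+c\epsilon^2)^2\le c\epsilon^2$, provided $c\ge L_HM_1^2/(2\theta_{min})$ and $\epsilon$ is small. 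So the invariant propagates by induction, and the error does not accumulate over the $K$ steps. This is the point where ``small enough $\epsilon$'' is used.

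\textbf{Step 3 (interpolation).} For $\theta\in[\theta_{k+1},\theta_k]$, the linear interpolant of the exact path deviates from $\hat w(\theta)$ by at most $M_2\epsilon^2/8$. The interpolant of the iterates deviates from the interpolant of the exact path by at most $c\epsilon^2$. The triangle inequality then gives $\|w^{(\epsilon)}-\hat w\|=O(\epsilon^2)$ uniformly in $\theta$.

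The main obstacle is the $l1$ penalty, where $G$ is not differentiable and the path has kinks. I would follow \citet{rosset_04} and work on the active set $A(\theta)=\{i:\hat w_i(\theta)\neq0\}$. On any interval where $A$ is constant, the path solves the smooth restricted system $\nabla_{A}l_{RLR}+\theta\,\mathrm{sign}(\hat w_A)=0$. That system is strongly convex, using the strict convexity of $l_{RLR}$ on the span of $X_A$ under a general position assumption, so Steps 1--3 apply verbatim with Newton run on the coordinates in $A$.

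The delicate part is the change points of $A$. There the path is only Lipschitz, and a grid cell containing a change point incurs an $O(\epsilon)$ interpolation error. To handle this I would first argue that the number of active-set changes is finite and independent of $\epsilon$, using the piecewise-analytic structure of the path. I would then detect a change between $\theta_k$ and $\theta_{k+1}$ via a sign flip or a KKT violation, and reset the iterate by a corrector step on the new active set. Finally, I would either account for these boundedly many cells separately, or place the breakpoints at the change points, keeping the count at $(\theta_{max}-\theta_{min})/\epsilon$ up to a constant. Since the statement is cited as an existence result from \citet{rosset_04,balcan_23}, if this bookkeeping becomes heavy I would invoke their analysis for the change-point cells and give the full argument only for the smooth pieces.
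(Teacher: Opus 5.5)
The paper gives no proof of this statement. It imports it from \citet{rosset_04} and \citet{balcan_23} as an existence result, so your proposal has to stand on its own.

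Your $l_2$ argument does stand on its own, and it is essentially Rosset's one-Newton-step path follower. Uniform $2\theta_{min}$-strong convexity and the (in fact global) Hessian-Lipschitz bound of the logistic loss make the contraction $\|w_{k+1}-\hat w(\theta_{k+1})\|\le\frac{L_H}{4\theta_{min}}\|w_k-\hat w(\theta_{k+1})\|^2$ valid. The induction squares the $M_1\epsilon$ drift, so nothing accumulates over the $K$ steps. The chord bound $M_2\epsilon^2/8$ then closes property 1. Property 2 holds by construction for both penalties, and it is the only property the paper's subsequent oscillation count actually uses.

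The genuine gap is property 1 for the $l_1$ penalty, in the cells that contain an active-set change. Your first option there (``account for these cells separately'') fails outright. Take $d=1$ and write $L(w)=l_{RLR}(y,f(X;w))$. Then $\hat w(\theta)=0$ for $\theta\ge\theta^*=|L'(0)|$. For $\theta<\theta^*$ the path has slope of magnitude $1/L''(\hat w(\theta))\to 1/L''(0)$. Suppose $\theta^*\in(\theta_{min},\theta_{max})$ is the midpoint of a cell. The chord through the exact endpoint values misses $\hat w(\theta^*)=0$ by $\epsilon/(4L''(0))+O(\epsilon^2)$, however accurate your grid iterates are. So those cells only give $O(\epsilon)$, and a uniform $O(\epsilon^2)$ bound forces breakpoints at the change points.

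That second option needs more than you supply.
\begin{itemize}
\item A sign flip or KKT violation between $\theta_{k+1}$ and $\theta_k$ localizes $\theta^*$ only to within $\epsilon$.
\item A breakpoint misplaced by $\delta$ costs an error of order $\delta$ times the jump in $\partial_\theta\hat w$, so you need accuracy $\delta=O(\epsilon^2)$.
\item To get that accuracy you need an explicit localization step. For example, solve for the zero of the leaving coordinate, or for $|\partial_j L|=\theta$ for an entering one, from the first-order expansion of the current smooth piece.
\item You need a hypothesis making that step accurate, e.g.\ a transversality condition on the crossing.
\item You need to check that exactly $K$ pieces survive. Moving the nearest grid point onto each change point works, once the change points are finitely many and separated.
\end{itemize}

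Finiteness itself is not delivered by ``piecewise-analytic structure'' on intervals of constant $A$, since that does not exclude accumulation of changes. One clean route: $\theta\mapsto\hat w(\theta)$ is definable in the o-minimal structure $\R_{\exp}$, so each set $\{\theta:\hat w_j(\theta)=0\}$ is a finite union of points and intervals.

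The general-position hypothesis you use for strong convexity on $A$ (and for uniqueness of $\hat w$) is an extra assumption and should be stated. It comes together with a lower bound on $\sigma(1-\sigma)$ over the bounded region $\|\hat w\|_1\le L(0)/\theta_{min}$.

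Finally, falling back on ``their analysis for the change-point cells'' is circular, because that analysis is the statement you are proving. As written, the $l_1$ half remains unproved.
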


The above result establishes that the outputs of the algorithm are piecewise linear functions. In order to analyze the pseudo-dimension of the validation loss function class, we invoke the following definition and result from \citet{hpa_balcan_21}.

\begin{definition}[Oscillations of a Function]\label{def:oscillation}
    A function $h:\R\rightarrow\R$ has at most $B$ oscillations if for every $z\in\R$, the function $\rho \rightarrow \mathbb{I}\{h(\rho)\ge z\}$ is piecewise constant with at most $B$ discontinuities.
\end{definition}

\begin{lemma}[Lemma 3.8, \citep{hpa_balcan_21}]\label{lem:osc_single}
    For some domain $\mathcal{Z}'$, let $\mathcal{G}$ be a class of functions
    $\mathcal{G} = \{g(z,.):\mathcal{Z}'\rightarrow\R|z\in\R\}$. If the functions $g(.,z')$ for $z'\in\mathcal{Z}'$ have at most $\mathcal{O}$ oscillations, then $Pdim(\mathcal{G}) = O(\ln B)$.
\end{lemma}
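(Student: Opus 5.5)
The plan is the standard dual-class counting argument: bound how many sign patterns the class can realize on a fixed set of points, and compare that with the $2^s$ patterns that shattering requires. I read the hypothesis as saying that each dual function $g(\cdot,z')$ has at most $B$ oscillations in the sense of Definition \ref{def:oscillation} (the symbol $\mathcal{O}$ in the statement should be $B$). The parameter of a function in $\mathcal{G}$ is the scalar $z\in\R$.

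First, suppose a set $S=\{z'_1,\ldots,z'_s\}\subseteq\mathcal{Z}'$ is pseudo-shattered by $\mathcal{G}$, with witnesses $r_1,\ldots,r_s$. For each $i$, consider the map $z\mapsto \mathbb{I}\{g(z,z'_i)\ge r_i\}$. By the oscillation hypothesis applied with threshold $r_i$, this map is piecewise constant on $\R$ with at most $B$ discontinuities.

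Second, collect all these discontinuity points over $i\in[s]$. There are at most $sB$ of them, and they split $\R$ into at most $sB+1$ intervals. On each interval every indicator is constant, so the whole vector $(\mathbb{I}\{g(z,z'_i)\ge r_i\})_{i\in[s]}$ is constant there. Hence, as $z$ ranges over $\R$, at most $sB+1$ distinct labelings of $S$ occur. Shattering needs every $b\in\{0,1\}^s$, so
\[
2^s\le sB+1 .
\]
The strict versus non-strict sign convention in the definition of pseudo-shattering only changes which side of the threshold counts, so it does not affect this count.

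Third, solve the inequality for $s$. Taking logarithms gives $s\le \log_2(sB+1)\le \log_2 s+\log_2 B+1$. Since $\log_2 s\le s/2$ for all $s\ge 16$, either $s<16$ or $s\le 2\log_2 B+2$. In both cases $s=O(\ln B)$ (for $B\ge 2$; the degenerate cases $B\le1$ give a constant bound). Because this holds for every pseudo-shattered set, $Pdim(\mathcal{G})=O(\ln B)$.

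I do not expect a real obstacle here. The only point needing care is the second step: the discontinuity sets of the different indicators must be merged into one common partition of the parameter line, rather than counted separately for each point.
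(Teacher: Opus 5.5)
Your proposal is correct and is essentially the partition-and-count argument the paper relies on: the paper cites this lemma, and in the multi-parameter extension at the end of Appendix~\ref{sec:rlr_pdim} it reproduces the same reasoning ($v\mathcal{O}+1$ intervals, hence $2^v\le v\mathcal{O}+1$), which is yours at $h=1$. One small tightening, which the paper's count also omits: the at most $sB$ discontinuity points can themselves carry labelings different from those on the neighbouring open intervals, so the correct count is at most $2sB+1$ patterns, and this still gives $s=O(\ln B)$.
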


We are now ready to present our bound.

\begin{theorem}
    Assume an algorithm to $\epsilon$-approximate the solution $\hat{w}(X,y;\theta) = \argmin_w l_{RLR}(y,f(X;w)) + R(w;\theta)$ such that the following are satisfied for a small enough $\epsilon$:
    \begin{enumerate}
        \item $\|w^{(\epsilon)}(X,y;\theta)-\hat{w}(X,y;\theta)\|=O(\epsilon^2)$
        \item $w^{(\epsilon)}(X,y;\theta)$ is piecewise linear in $\theta$, with $(\theta_{max}-\theta_{min})/\epsilon$ pieces,
    \end{enumerate}
    where $w^{(\epsilon)}(X,y;\theta)$ is the $\epsilon$-approximate algorithm. For a task $(X,y,X_v,y_v)\in\mathcal{P}$, define $l_{v,RLR}(\theta;(X,y,X_v,y_v)) = l_{RLR}(y_v;f(X_v;w^{(\epsilon)}(X,y;\theta)))$. Define the function class $\mcL_v = \{l_{RLR}(\theta,.):\mathcal{P}\rightarrow\R|\theta\in[\theta_{min},\theta_{max}]\}$ as the class of validation loss functions for the given hyperparameter values. The pseudo-dimension of $\mcL_v$ satisfies:
    \begin{align*}
        Pdim(\mcL_v) = O(\log(n_v/\epsilon)).
    \end{align*}    
\end{theorem}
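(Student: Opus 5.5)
The plan is to bound the number of oscillations of $\theta\mapsto l_{v,RLR}(\theta;P)$ for a fixed task $P=(X,y,X_v,y_v)$ and then invoke Lemma \ref{lem:osc_single}, which turns at most $\mathcal{O}$ oscillations into $Pdim(\mcL_v)=O(\log \mathcal{O})$. The hyperparameter is one-dimensional, so this route is well suited. It avoids the Goldberg--Jerrum machinery, whose application to the exact exponential is what produced the $(n+n_v)^2$ dependence.

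\textbf{Step 1 (reduce to one piece).} By assumption 2, $[\theta_{min},\theta_{max}]$ splits into $N=(\theta_{max}-\theta_{min})/\epsilon$ intervals. On each interval $w^{(\epsilon)}(X,y;\theta)=a+\theta c$ for fixed vectors $a,c\in\R^{\dw}$. For each validation point $i$, the margin $s_i(\theta)=(-1)^{y_v^{(i)}}X_v^{(i)\intercal}(a+\theta c)=\alpha_i+\beta_i\theta$ is therefore affine in $\theta$. Up to the sign convention,
\begin{align*}
l_{v,RLR}(\theta)=\sum_{i=1}^{n_v}\log\left(1+\exp(\alpha_i+\beta_i\theta)\right).
\end{align*}

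\textbf{Step 2 (oscillations on one piece).} On a single piece each summand $\log(1+e^{\alpha_i+\beta_i\theta})$ is a convex function of $\theta$, because softplus is convex and composing it with an affine map preserves convexity. A sum of convex functions is convex, so $l_{v,RLR}$ is convex on the piece. For any threshold $z$, the set $\{\theta: l_{v,RLR}(\theta)\ge z\}$ restricted to an interval is then the complement of a single interval. Hence $\mathbb{I}\{l_{v,RLR}(\theta)\ge z\}$ has at most $2$ discontinuities inside each piece.

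\textbf{Step 3 (count and conclude).} Adding the at most $N$ breakpoints between pieces gives at most $3N+O(1)=O(1/\epsilon)$ oscillations in total. Lemma \ref{lem:osc_single} then yields $Pdim(\mcL_v)=O(\log(1/\epsilon))$, which is within the claimed $O(\log(n_v/\epsilon))$.

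If convexity were unavailable (for example, if the loss were rewritten without the log), I would fall back on a root-counting argument. The condition $l_{v,RLR}\ge z$ is equivalent to $\prod_i(1+e^{\alpha_i+\beta_i\theta})\ge e^z$. Expanding the product gives an exponential sum $\sum_k c_k e^{\lambda_k\theta}$ with at most $2^{n_v}$ terms. A generalized Descartes rule bounds its real zeros by the number of distinct exponents minus one. The naive count of $2^{n_v}$ terms would give $O(n_v+\log(1/\epsilon))$, so this route would need a tighter count of crossings to reach $\log(n_v/\epsilon)$.

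The main obstacle is making Step 2 rigorous at the boundary of pieces and ensuring that the piecewise-linear structure of $w^{(\epsilon)}$ in $\theta$ really holds with the stated piece count, independently of the data. That structure is assumed, so I would take it as given. With it in hand, the convexity argument does the essential work, and the $\log n_v$ term in the statement is slack.
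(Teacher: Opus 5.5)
Your proposal is correct and follows essentially the same route as the paper: restrict to the linear pieces of $w^{(\epsilon)}$, use convexity on each piece to get at most two crossings of any threshold, and conclude with Lemma \ref{lem:osc_single}. Your version is slightly sharper and cleaner in two places. First, you observe that all $n_v$ margins share the breakpoints of $w^{(\epsilon)}$; the paper instead counts $n_v(\theta_{max}-\theta_{min})/\epsilon$ regions, which is the only source of its $\log n_v$. Second, you get convexity from the sum of softplus terms, whereas the paper claims that a product of convex functions is convex, which fails in general and holds here only because each factor $1+e^{\alpha_i+\beta_i\theta}$ is log-convex.
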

\begin{proof}
    In order to compute the pseudo-dimension of $\mcL_v$, we want to find the maximum number of oscillations of $l_{RLR}(.,(X,y,X_v,y_v))$ for any task $(X,y,X_v,y_v)$. We know that $w^{(\epsilon)}(X,y;.)$ is a piecewise linear function with at most $(\theta_{max}-\theta_{min})/\epsilon$ pieces. For any particular validation example $X_v^{(i)} $, $X_v^{(i)\intercal} w^{(\epsilon)}(X,y;.)$ is then a piecewise linear function with at most $(\theta_{max}-\theta_{min})/\epsilon$ pieces. Over all $n_v$ validation examples, we have a total of at most $n_v(\theta_{max}-\theta_{min})/\epsilon$ regions, such that $X_v^{(i)\intercal} w^{(\epsilon)}(X,y;.)$ is a linear function for all $i\in[n_v]$ inside any of these regions. Thus, inside any of these $n_v(\theta_{max}-\theta_{min})/\epsilon$ regions, all the functions $1 + \exp(X_v^{(i)\intercal} w^{(\epsilon)}(X,y;.))$, as well as $1 + \exp(-X_v^{(i)\intercal} w^{(\epsilon)}(X,y;.))$ are convex. In fact, $\Pi_{i\in[n_v]}\left(1 + \exp((-1)^{y_v^{(i)}}X_v^{(i)\intercal} w^{(\epsilon)}(X,y;.))\right)$ is a product of convex functions, and is hence convex in any of the $n_v(\theta_{max}-\theta_{min})/\epsilon$ regions. Now,
    \begin{align*}
        l_{RLR}(\theta,(X,y,X_v,y_v)) &\ge  z\\
        \iff \Pi_{i\in[n_v]}\left(1 + \exp((-1)^{y_v^{(i)}}X_v^{(i)\intercal} w^{(\epsilon)}(X,y;\theta))\right) &\ge \exp(z),
    \end{align*}
    so that the number of oscillations of $l_{RLR}(.,(X,y,X_v,y_v))$ is equal to the number of oscillations of $\Pi_{i\in[n_v]}\left(1 + \exp((-1)^{y_v^{(i)}}X_v^{(i)\intercal} w^{(\epsilon)}(X,y;.))\right)$. Due to convexity, the latter can have at most $2$ oscillations in any of the regions, such that  $l_{RLR}(.,(X,y,X_v,y_v))$ has at most $2n_v(\theta_{max}-\theta_{min})/\epsilon$ oscillations overall. The result then follows directly from Lemma \ref{lem:osc_single}.
\end{proof}

\begin{remark}
    Note that we do not consider a loss function that is normalized by the number of samples, so that the range $[\theta_{min},\theta_{max}]$ is conventionally constant. The treatment of \citet{balcan_23} considers average loss functions in addition to assuming the range $[\theta_{min},\theta_{max}]$ is a constant. By their convention, the bounds in the above theorem would drop to $O(\log(1/\epsilon))$, but we don't report those as the assumption is impractical.
\end{remark}

As we note in Section \ref{sec:guarantees}, our main results differ significantly from those of \citet{balcan_23} since we consider hardware approximations of the $\exp(.)$ function while \citet{balcan_23} assume perfect implementations.
While we maintain our focus on bounds that follow hardware restrictions, we note that bounds analogous to ours can be derived assuming perfect implementation of the $\exp(.)$ function following the treatment of \citet{poly_approx_vitercik2020}, where the authors prove upper bounds on the Rademacher complexity of a class of algorithms that can be implemented approximately using piecewise polynomials.

Finally, we propose the following extension of Lemma \ref{lem:osc_single} for function classes with multiple identifying parameters which may be instrumental for future work in the area.

\begin{lemma}
    Consider the set $\mathcal{G}=\{g(z,.):\mathcal{Z}'\rightarrow\R:z\in\R^h\}$ of functions specified by $h$ real parameters. Suppose that for each $z'\in\mathcal{Z}'$, each $i\in[h]$ and each vector $z^{-i}\in \R^{n-1}$, the function $g(\{.,z^{-i}\},z'):\R\rightarrow\R$ has at most $\mathcal{O}$ oscillations as defined in Definition \ref{def:oscillation}.  Then, $Pdim(\mathcal{G}) = O(h\log(h\mathcal{O}))$.
\end{lemma}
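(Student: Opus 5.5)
The plan is to reduce the multi-parameter problem to a sign-pattern counting argument, the same route by which Lemma \ref{lem:osc_single} is proved for $h=1$. Suppose a set $\{z'_1,\ldots,z'_s\}\subseteq\mathcal{Z}'$ is pseudo-shattered by $\mathcal{G}$ with witnesses $r_1,\ldots,r_s$. Then the map
\begin{align*}
z\mapsto\left(\mathbb{I}\{g(z,z'_k)\ge r_k\}\right)_{k\in[s]}
\end{align*}
must realize all $2^s$ patterns as $z$ ranges over $\R^h$. So it suffices to show that the number of distinct patterns is at most $(c\,s\,\mathcal{O})^{h}$ for an absolute constant $c$, up to a factor polynomial in $h$. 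Solving $2^s\le (cs\mathcal{O})^h$ then gives $s=O(h\log(h\mathcal{O}))$ by the standard inequality that $s\le a\log s+b$ implies $s=O(a\log a+b)$.

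To count patterns, I would use the hypothesis one coordinate at a time. Fix all coordinates except $z[1]$. For each $k$, the indicator $\mathbb{I}\{g(\{z[1],z^{-1}\},z'_k)\ge r_k\}$ is piecewise constant in $z[1]$ with at most $\mathcal{O}$ discontinuities. Over all $k$, the line is therefore cut into at most $s\mathcal{O}+1$ intervals, and the pattern is constant on each.

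The naive next step would be an induction: the number of patterns on $\R^h$ is at most $(s\mathcal{O}+1)$ times the number on $\R^{h-1}$, giving $(s\mathcal{O}+1)^h$. This is the \textbf{main obstacle}. The breakpoints along $z[1]$ move as $z^{-1}$ varies, so an interval index does not correspond to a fixed pattern, and the product bound does not follow immediately.

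I would first try to handle this through a grid or rectangle-type decomposition. The goal is to show that the pattern is constant on cells of a product-like partition into at most $(s\mathcal{O}+1)^h$ pieces. This needs the discontinuity locations along each axis to be organized consistently; for example, by arguing that, as a function of the other coordinates, each breakpoint stays within a controlled family, or by refining the partition via the breakpoints across all axis-parallel lines.

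If that fails, the fallback is to strengthen the hypothesis slightly. One option is to assume the level sets $\{z: g(z,z'_k)\ge r_k\}$ have boundaries that are semialgebraic or otherwise of bounded complexity, so that a Warren/Milnor-type cell-counting bound of $(cs\mathcal{O}/h)^h$ applies. The per-coordinate oscillation count $\mathcal{O}$ would then play the role of a degree. This is exactly how Lemma \ref{lem:gj_pdim} gives $O(h\log(\Delta\Lambda))$, and it yields the same stated $O(h\log(h\mathcal{O}))$ after the logarithmic solve.

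Once the pattern-count bound is in place, the remaining calculation is routine and gives the lemma.
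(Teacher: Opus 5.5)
You have identified exactly the right step, but your proposal never gets past it. The one nontrivial claim is that $z\mapsto(\mathbb{I}\{g(z,z'_k)\ge r_k\})_{k\in[s]}$ takes at most $(cs\mathcal{O})^h$ values, and it is left unproved. The paper's own proof uses the naive induction you rejected. It cuts each axis into at most $v\mathcal{O}+1$ intervals and asserts that ``repeating the same procedure over all $h$ dimensions'' yields $(v\mathcal{O}+1)^h$ cells on which the pattern is constant. That ignores precisely your objection: the breakpoints along one axis move with the remaining coordinates.

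Neither that argument nor your grid refinement can be repaired, because the lemma is false as stated. For $h\ge 2$, per-axis oscillation bounds do not limit the number of sign patterns. Here is a counterexample with $h=2$.
\begin{itemize}
\item Let $\mathcal{Z}'$ be the set of strictly increasing maps $\phi:\R\to\R$, and set $g(z,\phi)=z[2]-\phi(z[1])$.
\item On every vertical line $g$ is strictly increasing, and on every horizontal line it is strictly decreasing. So every axis-parallel restriction has at most one oscillation, i.e.\ $\mathcal{O}=1$.
\item Given $s$, enumerate the subsets $S_0,\ldots,S_{2^s-1}$ of $[s]$. For $k\in[s]$ let $\phi_k(t)=10\lfloor t\rfloor+(t-\lfloor t\rfloor)+2\,\mathbb{I}\{k\notin S_{\lfloor t\rfloor}\}$ on $[0,2^s)$, extended strictly increasingly outside that interval. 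Each $\phi_k$ is strictly increasing.
\item At $z=(m+\frac12,\,10m+1)$ one gets $g(z,\phi_k)=\frac12$ if $k\in S_m$ and $-\frac32$ otherwise.
\end{itemize}
Thus $\{\phi_1,\ldots,\phi_s\}$ is pseudo-shattered with witnesses $r_k=0$ for every $s$. Hence $Pdim(\mathcal{G})=\infty$, while $h\log(h\mathcal{O})=2\log 2$.

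Your fallback therefore proves a different lemma. Even there, identifying $\mathcal{O}$ with a degree is not justified. A polynomial of degree $\Delta$ has at most $\Delta$ oscillations along any line, but the converse fails. Replace each $\phi_k$ above by a strictly increasing polynomial within $\frac14$ of it at the $2^s$ points $m+\frac12$. This keeps $\mathcal{O}=1$, yet the pseudo-dimension still grows without bound in $s$; Warren's bound forces the degree to grow accordingly.

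A correct statement needs a genuinely multivariate complexity measure. Options are the degree and predicate complexity of Lemma \ref{lem:gj_pdim}, or a direct hypothesis bounding the number of cells of the arrangement cut out by the $s$ level sets. Alternatively, restrict to $h=1$, where Lemma \ref{lem:osc_single} already applies. Your final step, solving $2^s\le(cs\mathcal{O})^h$ for $s$, is fine once such a pattern-count bound is available.
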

\begin{proof}
    We proceed with this proof similar to the proof of Lemma 3.8 in \citet{hpa_balcan_21}.
    
    Let's assume that the Pseudo-dimension of $\mathcal{G}$ is $v$. Then, there are $v$ inputs $\{z'_1,\ldots, z'_v\}\subseteq\mathcal{Z}'$ and $v$ real numbers $r_1,\ldots, r_v$ such that for every $b\in\{0,1\}^v$, there is a real vector $z_b$ that satisfies $sign(g(z_b,x_i)-r_i)=b_i,\;\forall i\in [v]$. We know from the theorem statement that $g(\{.,z^{-j}\},z')$ has at most $\mathcal{O}$ oscillations for any $j\in[h]$. We can thus partition the real line into $\le v\mathcal{O}+1$ intervals such that each $sign(g(\{.,z^{-j}\},z')-r_i)$ is constant in each interval. Repeating the same procedure over all $h$ dimensions, we get $(v\mathcal{O}+1)^h$ connected components where the function $z\rightarrow sign(g(z,.)-r_i)$ is constant. We want this function to take on $2^v$ distinct values which gives us $2^v \le (v\mathcal{O}+1)^h$. Taking a log on both sides we get $v\le n\log v + n \log(B+1)\implies v = O(n\log(nB))$ using arithmetic results from \citet{ML_theory_Shwartz_14}.
\end{proof}

\end{document}